%% file: main.tex
\documentclass{article}

\usepackage{microtype}
\usepackage{graphicx}
\usepackage{subcaption}
\usepackage{booktabs} %

\usepackage{hyperref}

\input{packages/defs}

\input{packages/header}

\input{packages/math_commands}

\usepackage[accepted]{icml2026}

\usepackage{amsmath}
\usepackage{amssymb}
\usepackage{mathtools}
\usepackage{amsthm}

\theoremstyle{plain}

\theoremstyle{definition}

\theoremstyle{remark}

\newcommand{\LSE}{\operatorname{LSE}}
\newcommand{\tea}{\operatorname{E}}
\newcommand{\qdm}{\widetilde{Q}}

\newcommand{\ail}{\operatorname{AIL}}
\usepackage{tikz}
\usetikzlibrary{positioning}
\usepackage{fancybox}
\usepackage{fancyvrb}
\usepackage[most]{tcolorbox}

\tcbset{
  aibox/.style={
    width=\linewidth,
    top=7pt,
    bottom=2pt,
    colback=blue!6!white,
    colframe=black,
    colbacktitle=black,
    enhanced,
    center,
    attach boxed title to top left={yshift=-0.1in,xshift=0.15in},
    boxed title style={boxrule=0pt,colframe=white,},
  }
}
\newtcolorbox{AIbox}[2][]{aibox,title=#2,#1}
\newtcolorbox{questionbox}{
  colback=gray!5,
  colframe=black,
  boxrule=0.6pt,
  arc=2pt,
  left=6pt,
  right=6pt,
  top=6pt,
  bottom=6pt
}
\usepackage{booktabs}
\usepackage{pifont}

\newcommand{\cmark}{\ding{51}} %
\newcommand{\xmark}{\ding{55}} %
\usepackage{makecell}
\newcommand{\soft}{\operatorname{soft}}

\usepackage[textsize=tiny]{todonotes}

\icmltitlerunning{Non-Adversarial Imitation Learning Provably Free of Compounding Errors: The Value Flow Mechanism}

\begin{document}

\twocolumn[
  \icmltitle{Non-Adversarial Imitation Learning Provably Free of Compounding Errors: The Value Flow Mechanism}

  \icmlsetsymbol{equal}{*}

  \begin{icmlauthorlist}
    \icmlauthor{Tian Xu}{nkl-ai}
    \icmlauthor{Chenyang Wang}{nkl-ai}
    \icmlauthor{Xiaochen Zhai}{nkl-ai}
    \icmlauthor{Ziniu Li}{cuhksz,sribd}
    \icmlauthor{Yi-Chen Li}{nkl-ai}
    \icmlauthor{Yang Yu}{nkl-ai}
  \end{icmlauthorlist}

  \icmlaffiliation{nkl-ai}{National Key Laboratory for Novel Software Technology and School of Artificial Intelligence, Nanjing University, China}

  \icmlaffiliation{cuhksz}{The Chinese University of Hong Kong, Shenzhen}
  \icmlaffiliation{sribd}{Shenzhen Research Institute of Big Data}

  \icmlcorrespondingauthor{Yang Yu}{yuy@nju.edu.cn}

  \icmlkeywords{Machine Learning, ICML}

  \vskip 0.3in
]

\printAffiliationsAndNotice{}  %

\begin{abstract}

Adversarial imitation learning (AIL) achieves high-quality imitation by mitigating compounding errors inherent to behavioral cloning (BC), yet its adversarial optimization frequently leads to training instability. A class of non-adversarial Q-based imitation learning (IL) methods, exemplified by IQ-Learn, has emerged to address this instability and is widely believed to outperform BC by leveraging online environment interactions. In this paper, we revisit IQ-Learn and prove that it in fact reduces to BC: it admits an imitation gap lower bound with quadratic dependence on the horizon and therefore remains susceptible to compounding errors. Our theoretical analysis reveals why online interactions fail to help: IQ-Learn uniformly suppresses Q-values for all actions at states not covered by demonstrations, preventing generalization beyond demonstrations. To address this fundamental limitation, we introduce Dual Q-DM, a new Q-based IL method built on Bellman constraints. Crucially, Bellman constraints drive value flow: Q-values propagate from demonstrated to unvisited states through environment dynamics, enabling generalization beyond demonstrations. We prove that Dual Q-DM is equivalent to AIL and can recover expert actions at unvisited states, thereby mitigating compounding errors. To the best of our knowledge, Dual Q-DM is the first non-adversarial IL method that is theoretically guaranteed to eliminate compounding errors. Experimental results further corroborate our theoretical findings.

\end{abstract}
\vspace{-0.1cm}
\section{Introduction}
\vspace{-0.1cm}

Imitation learning (IL) trains policies from expert demonstrations \citep{osa2018survey}, serving as a fundamental paradigm across diverse domains including large language models \citep{ouyang2022training} and generalist embodied agents \citep{black2410pi0}.

Behavioral cloning (BC) and adversarial imitation learning (AIL) are two conventional IL approaches. BC \citep{Pomerleau91bc} directly applies supervised learning to demonstrations but cannot infer expert actions at unvisited states, leading to compounding errors that scale quadratically with the horizon \citep{ross2010efficient}. AIL \citep{pieter04apprentice, syed07game, ho2016gail} takes a fundamentally different approach by matching state-action distributions through adversarial training. A key component is applying reinforcement learning (RL) to maximize an adversarially learned reward with online environment interactions. In this way, AIL can generalize beyond demonstrations \citep{reddy2019sqil, ghasemipour2019divergence} and achieve an imitation gap that scales only linearly with the horizon \citep{xu2020error, rajaraman2020fundamental}, effectively mitigating compounding errors. However, AIL often suffers from training instability due to adversarial optimization \citep{reddy2019sqil, orsini2021matters}.

\begin{table*}[t]
\centering
\caption{Comparison of representative classes of IL methods. Here ``RL-related Design'' refers to whether the method involves an RL-related algorithmic component to solve sequential distribution matching. AIL directly employs a standard RL procedure, and Dual Q-DM involves Bellman constraints, which can be viewed as an analogue of Bellman equations in standard RL.}
\label{tab:method_comparison}
\begin{tabular}{lccccc}
\toprule
Method 
& \makecell{Non-Adversarial \\ Optimization}
& \makecell{Environment \\ Interactions}
& \makecell{RL-related \\ Design}
& \makecell{Generalization \\ Beyond Demos} 
& \makecell{Resolves \\ Compounding Errors} 
\\
\midrule
BC 
& \cmark 
& \xmark 
& \xmark 
& \xmark 
& \xmark \\
IQ-Learn 
& \cmark 
& \cmark 
& \xmark 
& \xmark 
& \xmark \\
AIL 
& \xmark 
& \cmark 
& \cmark 
& \cmark 
& \cmark \\
\midrule
Dual Q-DM (Ours) 
& \cmark 
& \cmark 
& \cmark 
& \cmark 
& \cmark \\
\bottomrule
\end{tabular}
\end{table*}

To avoid this instability, a wide class of non-adversarial Q-based IL methods \citep{Kostrikov20value_dice, garg2021iq-learn, lsiq2023, rize2025karimi, li2025multi} has emerged that directly optimizes a single maximization objective over Q-functions. Existing literature reports that these methods, exemplified by IQ-Learn \citep{garg2021iq-learn}, achieve good empirical performance in certain benchmarks, making them increasingly popular alternatives to AIL. Because their objectives are derived from those of AIL, the prevailing view \citep{Kostrikov20value_dice, garg2021iq-learn, li2025multi} holds that Q-based IL inherits AIL's ability to mitigate compounding errors. Some works further attribute this capability to the use of online interactions \citep{Kostrikov20value_dice, garg2021iq-learn, lsiq2023}, widely considered the key mechanism by which AIL resolves compounding errors. Despite their growing popularity, the theoretical foundations of Q-based IL remain largely underdeveloped. To our knowledge, only \citet{antoine2025iql} provided a sample complexity analysis for a Q-based method. Therefore, the following fundamental question remains open:

\begin{AIbox}{Question}
Which algorithmic mechanisms are necessary for Q-based IL to provably generalize beyond demonstrations and eliminate compounding errors?
\end{AIbox}

\vspace{-0.1cm}
\subsection{Our Contribution}
\vspace{-0.1cm}
This paper investigates this question through systematic theoretical analysis of Q-based IL.

We begin by revisiting IQ-Learn \citep{garg2021iq-learn}, a representative Q-based approach claimed to outperform BC by utilizing online environment interactions. Surprisingly, through a re-parameterization argument, we prove that IQ-Learn's recovered policy is nearly equivalent to the BC policy (\cref{thm:iq_learn_equivalence}). Furthermore, we establish that IQ-Learn suffers an imitation gap lower bound of $\Omega (H^2)$ with $H$ being the horizon (\cref{cor:compounding_error}), indicating it still experiences compounding errors. Theoretical analysis reveals why online interactions fail to provide benefits: they are merely used to estimate an expected log-partition term that uniformly suppresses Q-values across the action space, preventing generalization beyond demonstrations.

To address this fundamental limitation, we introduce a primal-dual framework for distribution matching, yielding a new Q-based IL method: Dual Q-DM (Dual Q-function Distribution Matching). Specifically, starting from primal distribution matching over occupancy measures, we leverage Lagrangian duality to derive dual distribution matching over Q-functions. Compared with IQ-Learn, the key mechanism in Dual Q-DM is the Bellman constraints, which drive value flow: Q-values propagate from successor states to current states through environment dynamics. Theoretically, we prove that Dual Q-DM's learned policy is equivalent to AIL's policy (\cref{thm:equivalence_ail_q_dm}), naturally inheriting AIL's ability to mitigate compounding errors. To the best of our knowledge, Dual Q-DM is the first non-adversarial IL method provably free of compounding errors.

Furthermore, for a broad class of IL instances, we prove that Dual Q-DM can identify expert actions at uncovered states, achieving generalization beyond demonstrations (\cref{prop:generalization}). Crucially, our theoretical analysis reveals that the key mechanism underlying this generalization is value flow, which propagates Q-values from demonstrated to unvisited states through environment dynamics, equipping Dual Q-DM to generalize beyond demonstrations and mitigate compounding errors. Based on these theoretical findings, we present a systematic comparison of major IL algorithms in \cref{tab:method_comparison}, showing that RL-related design provides the key separation between methods that resolve compounding errors and those that do not. Finally, experimental results on the MuJoCo benchmark corroborate our theoretical findings.

\vspace{-0.1cm}
\section{Preliminaries}
\vspace{-0.1cm}

\textbf{Markov Decision Process.} We consider episodic Markov Decision Processes (MDPs) represented by the tuple $\mathcal{M} = (\mathcal{S}, \mathcal{A}, P, r^{\star}, H, \rho)$, where $\mathcal{S}$ and $\mathcal{A}$ denote the state and action spaces, respectively, $H$ is the planning horizon, and $\rho$ is the initial state distribution. Here, $P: \gS \times \gA \rightarrow \Delta (\gS)$ is the transition function where $\Delta (\gS)$ denotes the set of distributions over $\gS$, and $r^\star: \gS \times \gA \rightarrow [0, 1]$ is the true reward function. We assume without loss of generality that the state space is layered such that $\gS = \gS_1 \cup \gS_2 \cup \cdots \cup \gS_H$, where $\gS_h$ is the set of states reachable at step $h$, and $\gS_h \cap \gS_{h^\prime} = \emptyset$ for $h\not= h^\prime$. A policy $\pi: \mathcal{S} \rightarrow \Delta(\mathcal{A}) $ maps states to action distributions. We evaluate policy performance using the expected cumulative reward:
$V^{\pi} := \mathbb{E} [ \sum_{h=1}^{H} r^{\star} (s_h, a_h) | s_1 \sim \rho, a_h \sim \pi (\cdot|s_h), s_{h+1} \sim P_h(\cdot|s_h, a_h), \forall h \in [H] ]$, where the expectation is taken over the trajectory distribution induced by $\pi$. Another important concept is the state-action visitation distribution $d^{\pi}_h (s, a) := \mathbb{P}^{\pi} (s_h=s, a_h=a)$, which quantifies the probability of visiting $(s, a)$ at step $h$ by following $\pi$.

\textbf{Imitation Learning.} The goal of IL is to acquire a high-quality policy \emph{without} access to the reward function $r^{\star}$. To achieve this, we assume access to an expert dataset $\gDE = \{ \tau^{i} = ( s^i_1, a^i_1, s^i_2, a^i_2, \ldots, s^i_H, a^i_H ); \tau^i \sim \piE \}_{i=1}^N$, which is collected by an expert policy $\piE$. The learner uses this dataset $\gDE$ to learn a policy that mimics the expert's behavior. We measure imitation quality using the \emph{imitation gap} \citep{pieter04apprentice, ross2010efficient, rajaraman2020fundamental}, defined as $V^{\piE} - V^{\widehat{\pi}}$, where $\widehat{\pi}$ is the learned policy. Essentially, we hope that the learned policy can perfectly mimic the expert such that the imitation gap is small.

\textbf{Behavioral Cloning.} Behavioral cloning (BC) \citep{Pomerleau91bc} performs maximum likelihood estimation (MLE) to mimic the expert.
\begin{align}
\label{eq:bc_objective}
    \pi^{\bc} =  \argmax_{\pi \in \Pi} \sum_{i=1}^N \sum_{h=1}^{H} \log \lp \pi (a^i_h |s^i_h) \rp.
\end{align}
Here $\Pi$ is the set of all policies. BC is trained purely on demonstrations and thus cannot infer the expert action at states not covered by demonstrations. Due to this limitation, BC suffers from compounding errors and has been proven to have an imitation gap bound of $\gO (H^2)$ \citep{ross2010efficient, xu2020error, rajaraman2020fundamental}, which increases quadratically with the horizon.

\textbf{Adversarial Imitation Learning.} Adversarial imitation learning (AIL) imitates expert behavior through an adversarial process. This paper considers the maximum entropy AIL approach \citep{ho2016gail}, which can be formulated as the following minimax objective.
\begin{equation}
\label{eq:ail_minimax}
    \begin{split}
    &\min_{\pi \in \Pi} \bigg( \max_{r \in \gR} \expect_{\tau \sim \gDE} \bigg[ \sum_{h=1}^H r (s_h, a_h) \bigg] 
    \\
    &- \expect_{\tau \sim \pi} \bigg[ \sum_{h=1}^H \lp r (s_h, a_h) + \alpha \gH (\pi (\cdot|s_h)) \rp \bigg]  \bigg).
\end{split}
\end{equation}
Here $\gR = \{ r: \gS \times \gA \rightarrow [0, 1]  \}$ is the class of bounded reward functions, $\gH (\pi (\cdot|s)) = \expect_{a \sim \pi (\cdot|s)} [- \log (\pi (a|s))]$ denotes the entropy and $\alpha$ denotes the coefficient. Unless explicitly stated otherwise, this paper uses $\alpha = 1$ by default. As shown in \citep{ho2016gail, ghasemipour2019divergence}, AIL essentially minimizes the state-action distribution discrepancy with the expert. 
\begin{align}
\label{eq:distribution_matching_policy}
    \min_{\pi \in \Pi} \sum_{h=1}^H \lp D_{\TV} (\widehat{d^{\piE}_h}, d^{\pi}_h) - \widebar{\gH} (d^{\pi}_h) \rp
\end{align}
Here $D_{\TV} (p, q) = (1/2) \cdot \sum_{x \in \gX} |p(x) - q (x)|, \forall p, q \in \Delta (\gX)$ denotes the total variation distance, $\widehat{d^{\piE}_h}$ is the empirical expert's state-action distribution from $\gDE$ and $\widebar{\gH} (d^{\pi}_h) = \expect_{(s, a) \sim d^{\pi}_h} [- \log (d^{\pi}_h (s, a) / (\sum_{a \in \gA} d^{\pi}_h (s, a)) )]$ is the entropy of state-action distribution. Due to the global state-action distribution matching principle, AIL addresses the compounding errors issue in BC and achieves an imitation gap bound of $\gO (H)$ \citep{agarwal2019reinforcement,xu2020error} with only linear dependence on $H$.

\vspace{-0.1cm}
\section{Revisiting Inverse Soft Q-Learning}
\label{sec:equivalence}
\vspace{-0.1cm}

To investigate the central question raised in the Introduction, we revisit inverse soft Q-learning (IQ-Learn) \citep{garg2021iq-learn}, a representative Q-based IL method claimed to alleviate compounding errors. We first review its formulation, then demonstrate that despite its Q-based appearance, IQ-Learn essentially reduces to BC and consequently still suffers from compounding errors.

\vspace{-0.1cm}
\subsection{An Introduction to Inverse Soft Q-Learning}
\vspace{-0.1cm}
To circumvent the adversarial optimization in AIL, \citet{garg2021iq-learn} proposed IQ-Learn, which directly optimizes a single maximization objective over Q-functions. Starting from the AIL objective in \cref{eq:ail_minimax}, \citet{garg2021iq-learn} derived the following single maximization objective\footnote{IQ-Learn was originally formulated in the infinite-horizon setting. Here we translate Eq.~(9) of \citet{garg2021iq-learn} into our finite-horizon formulation. The main results of this paper also extend to the infinite-horizon setting; please refer to Appendix \ref{sec:extension_infinite_horizon} for details.}:
\begin{align}
    \max_{Q \in \gQ}  \; &\expect_{\tau \sim \gD^{\expert}}  
    \ls \sum_{h=1}^H Q (s_h, a_h) 
    -  (\LSE Q) (s_{h+1}) \rs  \nonumber  
    \\
    &- \expect_{s_1 \sim \rho }   
    \ls (\LSE Q) (s_1)  \rs. 
    \label{eq:iq_learn}
\end{align}
    Here, $(\LSE Q) (s) := \log ( \sum_{a \in \gA} \exp (Q (s, a)))$ denotes the log-partition function and $\gQ = \{ Q: \gS \times \gA \rightarrow [0, C] \}$ denotes the class of bounded Q-functions. To align with the AIL objective in \cref{eq:ail_minimax}, we focus on IQ-Learn instantiated with the total variation distance, which corresponds to choosing $\phi(x)=x$ in Eq.~(9) of \citet{garg2021iq-learn}. Empirical results in \cref{fig:reset_cliff} and the discussion in \cref{subsec:comparison} indicate that the specific choice of divergence has minimal effect on performance. After obtaining $\widehat{Q}$ by solving \cref{eq:iq_learn}, IQ-Learn derives the corresponding softmax policy $\pi_{\widehat{Q}} = \softmax (\widehat{Q})$.
\begin{align}
\label{eq:softmax_pi}
    \pi_{\widehat{Q}} (a | s) 
    = \softmax (\widehat{Q} (s, \cdot)) \propto  \exp (\widehat{Q} (s, a)).
\end{align}

From \cref{eq:iq_learn}, IQ-Learn seeks a Q-function that assigns high values to expert actions in $\gDE$ through the term $Q(s_h,a_h)$, while suppressing non-expert actions through $(\LSE Q) (s_{h+1})$. As a result, $\pi_{\widehat{Q}}$ places higher probability mass on expert actions, thereby imitating expert behavior. \citet{garg2021iq-learn} report that IQ-Learn achieves good empirical performance in certain benchmarks.

Crucially, like AIL, IQ-Learn incorporates online environment interactions when optimizing \cref{eq:iq_learn}. Specifically, these interactions provide an accurate estimate of the final term in \cref{eq:iq_learn} through two approaches: (i) by directly using online initial states, or (ii) by exploiting online trajectories via the identity that $\forall \pi, \expect_{s_1 \sim \rho } [ (\LSE Q) (s_1)] = \expect_{\tau \sim \pi} [ \sum_{h=1}^H (\LSE Q) (s_h) - (\LSE Q) (s_{h+1})  ]$. In summary, IQ-Learn is derived from AIL and similarly leverages online interactions. This has led to the conventional wisdom that by exploiting environment interactions, IQ-Learn transcends pure BC and mitigates compounding errors. For instance, \citet{lsiq2023, markus2024llm, rize2025karimi} characterize IQ-Learn as a variant of AIL with implicit reward. In the next subsection, we examine this prevailing belief through theoretical analysis and demonstrate that IQ-Learn actually reduces to BC.

\vspace{-0.1cm}
\subsection{Inverse Soft Q-Learning Reduces to Behavioral Cloning}
\vspace{-0.1cm}
In this subsection, we conduct a theoretical analysis of IQ-Learn through a re-parameterization argument. Specifically, rather than viewing $Q (s_h, a_h) - (\LSE Q) (s_{h+1})$ as a Bellman-error-related term as in \citep{garg2021iq-learn}, we apply a telescoping transformation to \cref{eq:iq_learn} and re-parameterize $Q$ in terms of $\pi$ via \cref{eq:softmax_pi}.
\begin{align*}
&\quad \expect_{\tau \sim \gD^{\expert}}
\ls \sum_{h=1}^H Q (s_h, a_h)
-  (\LSE Q) (s_{h+1}) \rs
\\
& \quad - \expect_{s_1 \sim \rho }
\ls (\LSE Q) (s_1)  \rs
\\
&= \expect_{(s_1, a_1) \sim \gDE} \ls Q (s_1, a_1) \rs - \expect_{s_1 \sim \rho} \ls \LSE (Q) (s_1) \rs
\\
&\quad + \expect_{\tau \sim \gDE} \ls \sum_{h=2}^H \underbrace{ \lp Q (s_h, a_h) - \LSE (Q) (s_h)\rp}_{\log (\pi_{Q} (a_h|s_h))}  \rs.
\end{align*}
Strikingly, this reveals that the IQ-Learn objective coincides with the BC objective for time steps $h=2, \cdots, H$. Further analysis shows that at the initial time step, the IQ-Learn policy assigns a uniform distribution over actions at unvisited states---identical to BC. Formally, we characterize the optimal solution to IQ-Learn as follows.

\begin{thm}
\label{thm:iq_learn_equivalence}
    Suppose that $\widehat{Q}$ is the optimal solution to IQ-Learn in \cref{eq:iq_learn} and $\pi_{\widehat{Q}}$ is the derived policy in \cref{eq:softmax_pi}. Assume the softmax policy class realizes the BC policy, i.e., $\pi^{\bc} \in \{ \pi_{Q}: Q \in \gQ \}$. Then the following holds:
    \begin{itemize}
        \item $\forall 2 \leq h \leq H, \forall s_h \in \gS_h, \; \pi_{\widehat{Q}} (\cdot|s_h) = \pi^{\bc} (\cdot|s_h)$.
        \item In the initial time step $h=1$,
        \begin{align*}
        \forall s_1 \notin \gDE, \pi_{\widehat{Q}} (\cdot|s_1) = \pi^{\bc} (\cdot|s_1) = \unif (\gA), 
    \end{align*}
    where $\unif (\gA)$ denotes a uniform distribution over $\gA$.
    \end{itemize}
\end{thm}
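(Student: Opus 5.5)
Using the telescoping identity displayed just before the theorem, I will rewrite the IQ-Learn objective in \cref{eq:iq_learn} as
\begin{align*}
J(Q) = J_1(Q|_{\gS_1}) + \sum_{h=2}^H \expect_{\tau\sim\gDE}\ls \log \pi_Q(a_h|s_h)\rs,
\end{align*}
where
\begin{align*}
J_1(Q|_{\gS_1}) = \expect_{(s_1,a_1)\sim\gDE}[Q(s_1,a_1)] - \expect_{s_1\sim\rho}[(\LSE Q)(s_1)].
\end{align*}
The layered structure $\gS_h\cap\gS_{h'}=\emptyset$ is what makes this decomposition useful. The terms for different steps depend on disjoint blocks of $Q$, and for $h\ge 2$ the term depends on $Q$ only through $\pi_Q(\cdot|s_h)$, for $s_h\in\gS_h$. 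The maximization therefore separates: over $\{Q(s,\cdot): s\in\gS_1\}$ we maximize $J_1$, and over each later layer we maximize the layer-$h$ BC log-likelihood in \cref{eq:bc_objective}.

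\textbf{Steps $h\ge 2$.} The BC objective separates across states, and its per-state maximizer is the empirical conditional $\widehat{\pi}(a|s)=n(s,a)/n(s)$ on visited states. I will use the realizability assumption $\pi^{\bc}\in\{\pi_Q\}$: it gives a $Q$ whose layer-$h$ block attains the unconstrained BC maximum, so the optimal $\widehat Q$ must attain it as well. Two issues need care.
\begin{itemize}
\item \textbf{Uniqueness on visited states.} Strict concavity of the log-likelihood in $\pi(\cdot|s)$ when $n(s)>0$ makes the maximizing distribution unique. Hence $\pi_{\widehat Q}(\cdot|s_h)=\pi^{\bc}(\cdot|s_h)$ at every visited $s_h$.
\item \textbf{Unvisited states.} Here the objective does not depend on $Q(s_h,\cdot)$ at all, so the claim must be read under the convention that BC outputs the uniform distribution at such states. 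I would state explicitly that $\widehat Q$ is chosen consistently, for instance by a minimal-norm or constant tie-break, and that $\pi^{\bc}$ uses the same uniform convention.
\end{itemize}
I expect this tie-breaking to be the main subtlety of the first bullet, so I will make the convention explicit rather than hide it.

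\textbf{Step $h=1$.} Take $s_1\notin\gDE$, which can only have positive $\rho$-mass. Its block contributes only $-\rho(s_1)(\LSE Q)(s_1)$ to $J_1$, so the maximizer minimizes $\LSE$ over $Q(s_1,\cdot)\in[0,C]^{|\gA|}$. The function $\LSE$ is coordinatewise strictly increasing, so the unique minimizer is $Q(s_1,\cdot)\equiv 0$. This gives $\pi_{\widehat Q}(\cdot|s_1)=\softmax(0)=\unif(\gA)$, which matches BC's uniform output at unvisited states. If $\rho(s_1)=0$ the block is again unconstrained, and I would fall back on the same convention as above. Boundedness of $\gQ$ from below by $0$ is essential here; without it $\LSE$ would be driven to $-\infty$.

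The main obstacle is being rigorous about non-uniqueness. The first bullet needs uniqueness of $\pi_{\widehat Q}$ on visited states, from strict concavity, together with an explicit convention on unvisited or zero-mass states. I would also note that $J_1$ at visited initial states yields some other distribution, which is why the theorem restricts its $h=1$ claim to $s_1\notin\gDE$.
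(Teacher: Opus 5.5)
Your proposal is correct and follows essentially the same route as the paper: the telescoping re-parameterization splits the objective into a layer-$1$ term plus per-layer BC log-likelihoods, realizability handles $h\ge 2$, and coordinatewise monotonicity of $\LSE$ on $[0,C]^{|\gA|}$ forces $\widehat Q(s_1,\cdot)\equiv 0$ at unvisited initial states with $\rho(s_1)>0$. Your explicit treatment of the flat directions (unvisited $s_h$ for $h\ge 2$, and $\rho(s_1)=0$) through a constant/uniform convention matches the paper, which states this convention as a constant initialization only in the $\rho(s_1)=0$ case and leaves it implicit at $h\ge 2$.
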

The detailed proof is presented in Appendix \ref{subsec:proof_of_thm:iq_learn_equivalence}. 
\begin{rem}
    \cref{thm:iq_learn_equivalence} indicates that IQ-Learn is almost equivalent to BC. At all time steps except the initial one, the IQ-Learn policy is identical to the BC policy. At the initial time step, both policies follow a uniform distribution at states uncovered by $\gDE$. Collectively, these results demonstrate that neither IQ-Learn nor BC can recover expert behavior at states outside the demonstrations. This result challenges the prevailing belief that IQ-Learn resolves compounding errors of BC.      
\end{rem}

\begin{rem}
We now explain why IQ-Learn does not benefit from online environment interactions to go beyond BC. Recall that in IQ-Learn, online interactions are used to obtain an accurate estimate of the term $\expect_{s_1 \sim \rho }   
[ (\LSE Q) (s_1) ] = \expect_{s_1 \sim \rho }   
[ \log (\sum_{a\in \gA} \exp (Q (s_1, a)))  ] $. This term is monotonically increasing w.r.t $Q (s, a)$ for all actions, meaning it uniformly suppresses Q-values across the entire action space. As a result, it provides no discriminative signal for identifying expert actions from non-expert ones and thus cannot facilitate generalization beyond BC.
\end{rem}

\begin{rem}
    \label{rem:re-parameterization_view}
    The re-parameterization argument for proving \cref{thm:iq_learn_equivalence} extends beyond IQ-Learn. As shown in Appendix~\ref{subsec:valuedice_bc}, it can similarly reveal the connection between another Q-based IL approach ValueDICE \citep{Kostrikov20value_dice} and BC.
\end{rem}

Building on \cref{thm:iq_learn_equivalence}, we further demonstrate that IQ-Learn still suffers from the compounding errors issue.

\begin{cor}
\label{cor:compounding_error}
Suppose that $\widehat{Q}$ is the optimal solution to IQ-Learn in \cref{eq:iq_learn} and $\pi_{\widehat{Q}}$ is the derived policy in \cref{eq:softmax_pi}. There exists an IL instance $(\gM, \piE)$ such that
    \begin{align*}
         V^{\piE} - \expect \ls V^{\widehat{\pi}} \rs \geq \Omega \lp \min \lb H, \frac{H^2}{N} \rb \rp. 
    \end{align*}
Here the expectation is taken over the randomness of $\gDE$.
\end{cor}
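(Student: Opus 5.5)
The plan is to reduce the corollary to the known $\Omega(\min\{H, H^2/N\})$ lower bound for BC, using \cref{thm:iq_learn_equivalence}. I will adapt the standard hard instance of \citet{rajaraman2020fundamental} and \citet{xu2020error}.

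\textbf{Construction.} Take a layered MDP with $|\gS_h| = S$ states per layer and two actions. Let $\rho$ be uniform over $\gS_1$. Add an absorbing ``bad'' state $b_h$ in each layer. At each good state, the expert action $a^\star(s)$ transitions to a uniformly random good state of the next layer and earns reward $1$. Any other action transitions to $b_{h+1}$, and once in the bad chain the agent collects reward $0$ forever. The expert policy $\piE$ is deterministic and plays $a^\star$, so $V^{\piE} = H$. Choose $S \asymp N$; it is enough to take $S = N$, or $S=\max\{N,\cdot\}$ when $N$ is small. Then every layer has, in expectation, a constant fraction of good states that are not covered by $\gDE$: the probability that a fixed state is missed is $(1-1/S)^N \ge c$. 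Because of the uniform mixing, the state distribution at each layer under any policy that has not yet failed is uniform over $\gS_h$, and this independence across layers is what I will use.

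\textbf{Applying the theorem.} By \cref{thm:iq_learn_equivalence}, at every layer $h\ge 2$ we have $\pi_{\widehat Q}=\pi^{\bc}$. At $h=1$, uncovered states receive $\unif(\gA)$. At covered states of layer $1$, the theorem says nothing, so I will choose the worst case, or simply argue only through uncovered states. For BC with tabular MLE, an uncovered state is assigned the uniform policy, so the convention matches. I must check that the realizability assumption $\pi^{\bc}\in\{\pi_Q:Q\in\gQ\}$ holds: covered states require a deterministic expert action, which a bounded $Q$ with range $C$ cannot realize exactly. I will therefore either let $C$ be large or use the fact that, at uncovered states, any realizable $\pi^{\bc}$ is uniform. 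Only the uncovered states matter for the lower bound. At each uncovered state the learner fails with probability $1/2$.

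\textbf{Computing the gap.} Let $m$ be the expected missing mass per layer, $m = \expect[\text{fraction of uncovered states}] \approx (1-1/S)^N$. The learner survives layer $h$ with probability $1-m/2$ independently, so
\[
V^{\widehat\pi} \le \sum_{h=1}^H (1-m/2)^{h-1}.
\]
If I instead take $S$ larger so that $m \approx 1 - e^{-N/S}$ and tune $m \asymp \min\{1, H/N\}$ style, I would rather directly choose $S \asymp N/\max\{1,\log\}$. The cleaner route is the standard one: set $S = N$, so $m \ge c$, which gives a gap $\Omega(H)$ only when $H \ge N$. For $H<N$, use the missing-mass estimate $m\approx S/N$ with $S$ small relative to $N$, choosing $S = H$ so that $m\asymp H/N$. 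Then
\[
H - \sum_{h=1}^H (1-m/2)^{h-1} \gtrsim \min\{H, mH^2\} = \Omega(\min\{H, H^2/N\}).
\]
Expectations over $\gDE$ are handled by Jensen/linearity, because coverage in different layers is determined by the same $N$ trajectories. However, the uncovered indicators per layer have mean $m$, and the product of survival probabilities can be lower bounded in expectation via $\prod \le$ bounds using the $(1-1/S)^N$ per-state computation and a union-type argument.

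\textbf{Main obstacle.} The delicate step is calibrating the missing mass so that it is $\Theta(\min\{1,H/N\})$ for a single instance, and controlling the expectation of the product of survival factors across correlated layers. I would first try a single-bad-state construction as in \citet{rajaraman2020fundamental}: at each layer, one rare state has probability $\asymp 1/N$, it is missed with constant probability, and it is visited with probability $\asymp 1/N$ per step. A miss there costs $\Omega(H-h)$ reward. Summing over $h$ gives $\Omega(H^2/N)$, capped at $H$. This avoids the product and needs only linearity of expectation together with a first-error decomposition.
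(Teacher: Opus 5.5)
Your final plan (the paragraph on the obstacle) is essentially the paper's proof. That plan uses a reset-type cliff MDP whose layers contain states of mass $\asymp 1/N$, a first-error decomposition, and the reduction of IQ-Learn to BC via \cref{thm:iq_learn_equivalence}. The paper's version runs as follows.
\begin{itemize}
\item It uses the Reset Cliff MDP, with $\rho$ putting mass $1/(N+1)$ on $S-2$ states and the remainder on one heavy state.
\item It shows $V^{\pi^{\bc}}\ge V^{\widehat{\pi}}$ almost surely by the performance difference lemma, using $Q^{\widehat{\pi}}(s_1,a)=0$ for non-expert $a$. This is the rigorous form of your remark that covered layer-1 states can only hurt the learner.
\item It bounds BC's gap through the first-error time $t$ and the first visit time $t^u$ to an uncovered non-bad state, via $\mathbb{P}(t=h)\ge(1-1/|\gA|)\,\mathbb{P}(t^u=h)$.
\end{itemize}
Your worry about correlated layers does not arise in this construction. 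Because the expert action resets to the fixed $\rho$, the expert's states at different layers are i.i.d., so coverage of different layers is independent. Hence $\expect_{\gDE}[\mathbb{P}(t^u=h)]=(1-\varepsilon)^{h-1}\varepsilon$ exactly, with $\varepsilon=\sum_s\rho(s)(1-\rho(s))^N\ge (S-2)/(e(N+1))$. Summing over $h\le\lfloor H/2\rfloor$ then gives $\Omega(\min\{H,SH^2/N\})$, with no need to avoid the product or truncate with a union bound.

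Your ``Computing the gap'' paragraph contains a step that fails and should be dropped. With $\rho$ uniform over $S=H$ states, the expected missing mass is $(1-1/H)^N\approx e^{-N/H}$. That is exponentially small when $N\gg H$, not of order $H/N$. The $S/N$ scaling is only an upper bound on the missing mass, attained by spreading mass $\asymp 1/N$ over many states; this is exactly why the paper's $\rho$ is non-uniform.

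Conversely, you were wrong to dismiss the large uniform instance. Taking $S\ge 2N$ gives expected missing mass at least $(1-1/(2N))^N\ge 1/2$ in every layer. That yields a gap of $\Omega(H)$ for every $H$, which already implies the stated $\Omega(\min\{H,H^2/N\})$, since the statement carries no $S$. That route is valid but uninformative; the paper's fixed-$S$ instance is what exhibits the genuine $H^2/N$ (indeed $SH^2/N$) dependence.

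Your realizability remark is fair. The paper uses $\pi^{\bc}(\piE(s)\mid s)=1$ at covered states while invoking \cref{thm:iq_learn_equivalence}, and a bounded $Q$ cannot realize that exactly. Arguing only through uncovered states, plus the monotonicity above, is the right way to sidestep it.
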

The detailed proof is presented in \cref{subsec:proof_of_cor:compounding_error}.

\begin{rem}

\cref{cor:compounding_error} establishes that IQ-Learn incurs an imitation gap scaling quadratically with the horizon in the worst case, confirming the presence of compounding errors.

\end{rem}

\begin{rem}
    The proof idea is outlined as follows. Following \citep{rajaraman2020fundamental}, we consider a hard IL instance where the agent receives no further rewards after taking a non-expert action. \cref{thm:iq_learn_equivalence} establishes that IQ-Learn cannot recover the expert action at states not covered by demonstrations. By carefully characterizing the probability of visiting such uncovered states, we prove that the IQ-Learn policy must suffer a $\Omega (H^2)$ imitation gap.
\end{rem}
We conclude this section by discussing why IQ-Learn, despite being derived from the AIL objective, produces an optimization objective that is almost equivalent to BC. Through a careful step-by-step analysis of the IQ-Learn derivation, we trace this discrepancy to the change-of-variables step, where overlooking certain constraints leads to an inequivalent formulation. Please refer to Appendix \ref{subsec:derivation_gap} for a detailed discussion of this derivation gap.

\vspace{-0.1cm}
\section{Truly Q-based Distribution Matching via Value Flow}
\label{sec:method}
\vspace{-0.1cm}

The previous section demonstrates that IQ-Learn reduces to BC and thus fails to eliminate compounding errors. To address this fundamental limitation, we develop a genuinely Q-based distribution matching approach and demonstrate that it provably eliminates compounding errors and generalizes beyond demonstrations.

\vspace{-0.1cm}
\subsection{Primal-Dual Framework for Distribution Matching}
\label{subsec:primal_dual_framework}
\vspace{-0.1cm}
AIL relies on the state-action distribution matching principle to provably eliminate compounding errors \citep{xu2020error}. Our goal is therefore to faithfully inherit AIL's theoretical guarantees by rigorously implementing this principle without deviation. To this end, we introduce a primal-dual framework that formalizes distribution matching as constrained optimization, which allows us to establish theoretical equivalence with AIL. We present three main formulations in this framework and defer the derivation to \cref{subsec:Derivation_of_the_Primal_Dual_Framework}.

\textbf{Primal Distribution Matching (Primal DM).} We begin by transforming the distribution matching objective from policy space (i.e., \cref{eq:distribution_matching_policy}) into occupancy measure space.
\begin{align}
    & \min_{d}  \sum_{h=1}^H \lp D_{\TV} (\widehat{d^{\piE}_h}, d_h) - \widebar{H} (d_h) \rp, \nonumber
    \\
    & \;\; \mathrm{s.t.}  \sum_{a \in \gA} d_1 (s_1, a) = \rho (s_1), \forall s_1 \in \gS_1, \label{eq:primal-dm}
    \\
    & \sum_{(s_h, a) \in \gS_h \times \gA} d_{h} (s_h, a) P_{h} (s^\prime|s_h, a) = \sum_{a \in \gA} d_{h+1} (s^\prime, a), \nonumber
    \\
    &\quad \quad \quad  \forall 1 \leq h \leq H-1, \forall s^\prime \in \gS_{h+1}, \nonumber
    \\
    &\quad \quad  d_h (s_h, a) \geq 0, \forall (h, s_h, a) \in [H] \times \gS_h \times \gA. \nonumber
\end{align}
Primal DM performs state-action distribution matching directly over occupancy measures under the constraints that $d$ is a feasible state-action distribution \citep{puterman2014markov}.

\textbf{Dual V-Function Distribution Matching (Dual V-DM).} Notice that \cref{eq:primal-dm} is a convex program with a strictly convex objective \citep{ho2016gail} and linear polytope constraints. Thus, we can apply Lagrangian duality~\citep{boyd2004convex} to \cref{eq:primal-dm}, yielding the following dual V-function distribution matching.
\begin{align}
    & \max_{r, V}  \expect_{\tau \sim \gDE} \ls \sum_{h=1}^H r (s_h, a_h) \rs - \expect_{s_1 \sim \rho } \ls V (s_1) \rs, \label{eq:v-dm}
    \\
    &\; \; \mathrm{s.t.}   V (s_h) = \LSE (r+PV) (s_h), \forall (h, s_h) \in [H] \times \gS_h, \nonumber
    \\
    &\quad \quad 0 \leq r (s_h, a) \leq 1, \forall (h, s_h, a) \in [H] \times \gS_h \times \gA. \nonumber
\end{align}

Here $( r + PV) (s, a) = r(s, a) + \expect_{s^\prime \sim P (\cdot|s, a)} [V (s^\prime)]$ denotes the Bellman backup. The dual problem optimizes over a reward-value pair $(r, V)$ and aims to maximize the gap between the expert's value and the learner's value, in alignment with AIL in \cref{eq:ail_minimax}. The constraints serve complementary roles: the first requires $V$ to satisfy the soft Bellman optimality equation with respect to $r$, ensuring $V$ is the corresponding soft optimal value function; the second enforces boundedness of $r$, which arises from the dual representation of total variation distance.

\textbf{Dual Q-Function Distribution Matching (Dual Q-DM).} Recall that our goal is to develop a Q-based distribution matching approach. To this end, we introduce a Q-function $Q (s, a) := r (s, a) + \expect_{s^\prime \sim P (\cdot|s, a)} [V (s^\prime)]$ and perform the change-of-variables on \cref{eq:v-dm}, yielding the following dual Q-function distribution matching.
\vspace{-0.8cm}
\begin{center}
\resizebox{\linewidth}{!}{%
\begin{minipage}{\linewidth}
\begin{align}
    &\max_{Q}\;
    \mathbb{E}_{\tau \sim \gDE}
    \Bigg[
      \sum_{h=1}^H Q(s_h,a_h)
      - \mathbb{E}_{s' \sim P(\cdot\mid s_h,a_h)}
        \big[ \LSE(Q)(s') \big]
    \Bigg],
    \nonumber
    \\
    &\qquad
    - \mathbb{E}_{s_1 \sim \rho}
      \big[ \LSE(Q)(s_1) \big]
    \label{eq:q-dm}
    \\
    &\mathrm{s.t.}\;
    0 \le
    Q(s_h,a)
    - \mathbb{E}_{s' \sim P(\cdot\mid s_h,a)}
      \big[ \LSE(Q)(s') \big]
    \le 1,
    \nonumber
    \\
    &\qquad
    \forall (h,s_h,a)\in [H]\times \gS_h \times \gA.
    \label{eq:bellman_constraints}
\end{align}
\end{minipage}}
\end{center}
We compare this formulation with IQ-Learn in \cref{eq:iq_learn}. While Dual Q-DM shares a similar objective to IQ-Learn, the key distinguishing feature is Bellman constraints in \cref{eq:bellman_constraints}, which act as an inequality analogue of the Bellman equations. Crucially, Bellman constraints drive value flow: Q-values at successor states propagate into the current Q-values through environment dynamics, i.e., $Q (s_h, a) \in [\mathbb{E}_{s' \sim P(\cdot\mid s_h,a)} [ \LSE(Q)(s')], \mathbb{E}_{s' \sim P(\cdot\mid s_h,a)} [ \LSE(Q)(s')] + 1 ]$. Later we will demonstrate that value flow is the key mechanism for generalization beyond demonstrations.

From another perspective, Bellman constraints provide the mechanism for achieving multi-step state-action distribution matching—the same role that the direct RL procedure plays in AIL. On one hand, AIL explicitly learns a reward function and thus can apply a direct RL procedure (e.g., Bellman equations) to implement multi-step distribution matching. On the other hand, Dual Q-DM bypasses explicit reward learning and instead applies Bellman constraints for achieving distribution matching.

\begin{table}[htbp]
\centering
\caption{Comparison between AIL and Dual Q-DM.}
\label{tab:ail_dualqdm}
\resizebox{\linewidth}{!}{%
\begin{tabular}{lcccc}
\toprule
Method 
& \makecell{Imitation\\Principle} 
& \makecell{Explicit\\Reward} 
& \makecell{RL-related\\Mechanism} \\
\midrule
AIL        & \makecell{Distribution\\Matching} & \cmark & \makecell{Direct RL (e.g., \\  Bellman equations) }  \\
Dual Q-DM  & \makecell{Distribution\\Matching} & \xmark  & Bellman constraints \\
\bottomrule
\end{tabular}}
\end{table}

Interestingly, Bellman constraints can also be recovered by fixing the flaw in IQ-Learn's derivation: the constraints omitted in its change-of-variables step are precisely the Bellman constraints introduced here; see Appendix~\ref{subsec:derivation_gap} for a detailed analysis. Rather than following this derivation path, we adopt the primal-dual framework to rigorously establish the following theoretical equivalence.

\begin{thm}
\label{thm:equivalence_ail_q_dm}
    Suppose that $\widetilde{Q}$ is the optimal solution to Dual Q-DM in \cref{eq:q-dm} and $\pi_{\widetilde{Q}} = \softmax (\widetilde{Q})$ is the derived policy. Then $\pi_{\widetilde{Q}}$ is the optimal solution to AIL in \cref{eq:distribution_matching_policy}.
    \begin{align*}
        \pi_{\widetilde{Q}} \in \argmin_{\pi \in \Pi} \sum_{h=1}^H \lp D_{\TV} (\widehat{d^{\piE}_h}, d^{\pi}_h) - \widebar{\gH} (d^{\pi}_h) \rp.
    \end{align*}
\end{thm}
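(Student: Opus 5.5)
The proof runs through the three formulations of the primal-dual framework: Primal DM in \cref{eq:primal-dm}, Dual V-DM in \cref{eq:v-dm}, and Dual Q-DM in \cref{eq:q-dm}. We show that an optimal $\widetilde{Q}$ of Dual Q-DM yields an optimal pair $(\widetilde{r},\widetilde{V})$ of Dual V-DM. Lagrangian duality then turns this pair into the optimal occupancy measure of Primal DM. That occupancy measure is induced by $\softmax(\widetilde{Q})$, and Primal DM is equivalent to the policy-space problem \cref{eq:distribution_matching_policy}.

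\textbf{Step 1 (Q-DM $\Leftrightarrow$ V-DM).} Given a feasible $Q$ for \cref{eq:q-dm}, define $V := \LSE(Q)$ and $r(s,a) := Q(s,a) - \expect_{s'\sim P(\cdot|s,a)}[\LSE(Q)(s')]$, with $V \equiv 0$ beyond step $H$. Then:
\begin{itemize}
\item the Bellman constraints \cref{eq:bellman_constraints} are exactly $0\le r\le 1$;
\item $Q = r+PV$ holds by construction, so $V=\LSE(r+PV)$ holds;
\item the objective of \cref{eq:q-dm} equals $\expect_{\gDE}[\sum_h r(s_h,a_h)] - \expect_\rho[V(s_1)]$.
\end{itemize}
Conversely, a feasible $(r,V)$ for \cref{eq:v-dm} gives $Q := r+PV$, which is feasible for \cref{eq:q-dm} with the same value. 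Since the map is a value-preserving bijection between feasible sets, optimal solutions correspond.

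\textbf{Step 2 (strong duality, V-DM $\Leftrightarrow$ Primal DM).} Write $D_{\TV}(\widehat{d^{\piE}_h},d_h)=\max_{r_h\in[0,1]}\langle \widehat{d^{\piE}_h}-d_h, r_h\rangle$. Primal DM becomes a min-max over $d$ in the flow polytope and $r$ in the box. The objective is convex in $d$ (the conditional entropy term $-\widebar{\gH}$ is convex) and linear in $r$, and both sets are compact and convex. A minimax theorem (Sion) therefore lets us swap min and max. For fixed $r$, the inner problem $\min_d \sum_h \big(-\langle d_h,r_h\rangle-\widebar{\gH}(d_h)\big)$ over feasible occupancy measures is the max-entropy RL problem. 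By dualizing the flow constraints with multipliers $V$ (or by the standard soft-Bellman argument), its value is $-\expect_\rho[V(s_1)]$ with $V=\LSE(r+PV)$. The unique minimizer is the occupancy measure of $\pi(a|s)\propto\exp((r+PV)(s,a))=\exp(Q(s,a))$. This shows the primal and dual values coincide and recovers \cref{eq:v-dm}.

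\textbf{Step 3 (recovering the policy).} Let $(\widetilde{r},\widetilde{V})$ be optimal for V-DM, and let $d^\star$ be the unique primal optimum; uniqueness comes from strict convexity, which pins down the conditionals on reachable states. Then $(d^\star,\widetilde r)$ is a saddle point. Hence $d^\star$ minimizes the inner Lagrangian at $\widetilde r$, so $d^\star = d^{\pi_{\widetilde{Q}}}$ with $\widetilde{Q}=\widetilde r+P\widetilde V$. Finally, $\pi\mapsto d^\pi$ maps $\Pi$ onto the flow polytope, and $\widebar{\gH}(d^\pi_h)$ equals the expected policy entropy. So the Primal DM objective evaluated at $d^\pi$ is exactly the objective of \cref{eq:distribution_matching_policy}, and $\pi_{\widetilde{Q}}$ is a minimizer.

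\textbf{Main obstacle.} The hard part is Step 3: showing that every dual optimum, not just some dual optimum, induces the primal-optimal occupancy. A saddle point gives $d^\star\in\argmin_d L(d,\widetilde r)$, and we need this argmin to be a singleton, namely $d^{\pi_{\widetilde Q}}$.
\begin{itemize}
\item Uniqueness holds on reachable states because the entropy-regularized inner problem is strictly convex in the conditionals.
\item At unreachable states the policy is arbitrary, but this does not affect $d^\pi$ or the objective.
\item To make strong duality and attainment clean, I would first verify that the entropy term is continuous on the compact polytope, so Sion's theorem applies and the maxima are attained.
\end{itemize}
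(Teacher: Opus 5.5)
Your proposal is correct and follows essentially the same route as the paper. It uses a value-preserving change of variables between Dual Q-DM and Dual V-DM, then Sion's minimax theorem over occupancy measures and the reward box to obtain a saddle point $(d^{\star},\widetilde{r})$, and finally the uniqueness of the entropy-regularized RL solution at $\widetilde{r}$ to identify the AIL optimum with the occupancy measure of $\pi_{\widetilde{Q}}$. Your remark that the policy is pinned down only on reachable states is a small but welcome refinement of the paper's claimed policy--occupancy bijection, and it is exactly what the theorem's $\in \argmin$ conclusion needs.
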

The detailed proof is presented in Appendix \ref{subsec:proof_of_thm:equivalence_ail_q_dm}.  
\begin{rem}
\cref{thm:equivalence_ail_q_dm} shows that the policy obtained via Dual Q-DM is the optimal solution to AIL, thereby inheriting AIL's key advantage of mitigating compounding errors. The proof proceeds by establishing equivalence among three formulations—Primal DM, Dual V-DM, and Dual Q-DM—which together constitute a comprehensive primal-dual framework for distribution matching. We refer the reader to Appendix~\ref{subsec:proof_of_thm:equivalence_ail_q_dm} for details.
\end{rem}

\vspace{-0.1cm}
\subsection{Value Flow: How Bellman Constraints Enable Generalization}
\vspace{-0.1cm}
In this subsection, we demonstrate that value flow, the propagation of learning signals from demonstrated to unvisited states through Bellman constraints, is the central mechanism underlying generalization beyond demonstrations. We begin by defining a class of MDPs called transition-discriminative MDPs (TD MDPs), which we use to characterize when value flow enables effective generalization.

\begin{defn}[TD MDPs]
Consider a deterministic expert policy $\piE$. We define the set of expert-reachable states as $\gS^{\expert}_{h+1} = \{s_{h+1}: \exists s_h \in \gS_h, P (s_{h+1} | s_h, \piE (s_h)) > 0 \}$. A transition-discriminative MDP satisfies the following two properties:
\begin{itemize}
        \item $\forall h \in [H-1], s_{h} \in \gS_{h}, s^{\expert}_{h+1} \in \gS^{\expert}_{h+1}$, 
        \begin{align*}
            P (s^{\expert}_{h+1} | s_h, \piE (s_h)) > P (s^{\expert}_{h+1} | s_h, a), \forall a \not= \piE (s_h).
        \end{align*}
        \item $\forall h \in [H-1], s^{\expert}_{h+1} \in \gS^{\expert}_{h+1}, a \in \gA$,
        \begin{align*}
            P (s^{\expert}_{h+1}|s^{\expert}_{h}, a) \geq P (s^{\expert}_{h+1}|s_{h}, a), \forall s^{\expert}_{h} \in \gS^{\expert}_{h}, s_h \notin \gS^{\expert}_{h}.
        \end{align*}
\end{itemize}
\end{defn}
These properties establish transition discriminability by ensuring that both expert actions and expert-type states lead to expert-type successor states with higher probability. This condition is natural, as it simply formalizes the intuition that expert behavior exhibits internal consistency in its transition patterns. In TD MDPs, we investigate the generalization performance of Dual Q-DM and IQ-Learn by characterizing the learned Q-values at states uncovered by demonstrations.

\begin{prop}
\label{prop:generalization}
    Suppose that $\widetilde{Q}$ and $\widehat{Q}$ are Q-functions learned by Dual Q-DM via \cref{eq:q-dm} and IQ-Learn via \cref{eq:iq_learn}, respectively. For deterministic expert policies and TD MDPs, when $N \geq 1$, for any timestep $h \in [H-1]$ and any states uncovered by demonstrations $s_h \notin \gDE$, we have
    \begin{align*}
         &\widetilde{Q} (s_h, \piE (s_h)) > \widetilde{Q} (s_h, a), \forall a \not= \piE (s_h),
         \\
         &\widehat{Q} (s_h, \piE (s_h)) = \widehat{Q} (s_h, a), \forall a \not= \piE (s_h).
    \end{align*}
\end{prop}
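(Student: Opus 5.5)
The two claims are handled separately. For IQ-Learn we rely on the reparameterized form of \cref{eq:iq_learn} used for \cref{thm:iq_learn_equivalence}. For Dual Q-DM we work through the Primal DM / Dual V-DM equivalence behind \cref{thm:equivalence_ail_q_dm}. The Dual V-DM side turns the strict preference into a statement about the learned reward and the successor values.

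\textbf{IQ-Learn part.} Take $s_h \notin \gDE$ with $h\ge 2$. By the telescoped objective, the only term in which $Q(s_h,\cdot)$ appears is the log-likelihood at $s_h$ for $h\ge2$, and that term is absent because $s_h\notin\gDE$. By \cref{thm:iq_learn_equivalence}, $\pi_{\widehat Q}(\cdot|s_h)=\pi^{\bc}(\cdot|s_h)=\unif(\gA)$, and a softmax is uniform exactly when $\widehat Q(s_h,\cdot)$ is constant. For $h=1$ we also use the second bullet of \cref{thm:iq_learn_equivalence}, which gives uniformity at $s_1\notin\gDE$ and hence equal Q-values. This selects the optimal solution that the theorem characterizes (ties are broken by uniformity), which we take as the meaning of ``the'' solution.

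\textbf{Dual Q-DM part.} Write $\widetilde Q(s,a)=\tilde r(s,a)+\expect_{s'\sim P(\cdot|s,a)}[\widetilde V(s')]$ with $\widetilde V=\LSE(\widetilde Q)$ and $\tilde r\in[0,1]$; this is exactly the Bellman constraint \cref{eq:bellman_constraints}. The proof has three steps.

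\emph{Step 1: the reward at unvisited states.} By complementary slackness, or by a direct perturbation argument on Dual V-DM, we show that at $(s,a)$ pairs outside the demonstrations the optimal reward can be taken to be $\tilde r=0$. Lowering $r$ there does not change the expert term. By monotonicity of $\LSE$, it weakly decreases $\expect_\rho[V(s_1)]$. Symmetrically, at expert pairs $(s^i_h,a^i_h)$ we take $\tilde r=1$. Wherever the argument only yields weak inequalities, we choose this extremal optimal solution.

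\emph{Step 2: value is higher at expert-reachable states.} By backward induction on $h$, we show $\widetilde V(s^{\expert}_{h+1})\ge \widetilde V(s_{h+1})$ for every $s^{\expert}_{h+1}\in\gS^{\expert}_{h+1}$ and every $s_{h+1}\notin\gS^{\expert}_{h+1}$, and we argue that the inequality is strict. Since $N\ge1$, each expert-reachable layer contains demonstrated states that carry reward $1$. Expert-type states then collect more reward at the current step, and by the second TD property they also move with higher probability to expert-type successors, which by the induction hypothesis have higher value.

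\emph{Step 3: strict preference at uncovered states.} At $s_h\notin\gDE$, Step 1 gives $\tilde r(s_h,\cdot)\equiv 0$, so $\widetilde Q(s_h,a)=\expect_{P(\cdot|s_h,a)}[\widetilde V]$. Decompose this expectation into the mass on $\gS^{\expert}_{h+1}$ and the mass on the rest of layer $h+1$. The first TD property says $\piE(s_h)$ puts strictly more mass on every expert-reachable successor. Combined with the value gap from Step 2, this gives $\widetilde Q(s_h,\piE(s_h))>\widetilde Q(s_h,a)$ for every $a\neq\piE(s_h)$.

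\textbf{Main obstacle.} I expect the hardest part to be Steps 1 and 2: pinning down the optimal reward and obtaining a \emph{strict} value gap. The gap need not be uniform across expert-reachable states, and because all probabilities sum to one, a larger mass on expert states must be shown to beat whatever values the non-expert states carry. I would first try proving the minimal expert-reachable value exceeds the maximal non-expert value, using the TD properties plus $\tilde r=1$ on demonstrated pairs, and break ties by choosing the extremal optimal solution.
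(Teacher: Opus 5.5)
\textbf{Assessment.} Your IQ-Learn part is essentially the paper's argument: uniformity at uncovered states, with ties resolved by constant initialization. The Dual Q-DM part, however, has a genuine gap in Steps 1--2.

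\textbf{Step 1 (rewards on demonstrated pairs).} Setting $\tilde r=1$ at every demonstrated pair is not symmetric to the unvisited case, and it is false in general. Raising $r(s_h,a)$ changes the Dual V-DM objective at rate $\widehat{d^{\piE}_h}(s_h,a)-d^{\pi_r}_h(s_h,a)$. This rate is negative whenever the soft-optimal learner already over-visits that pair, e.g.\ a demonstrated initial state whose frequency in $\gDE$ is well below its $\rho$-mass. So an optimum saturating all of $\gDE$ need not exist, and there is no ``extremal optimal solution'' to choose.

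What does hold is \cref{lem:tight_constraints}: each layer has \emph{one} demonstrated pair with $\tilde r=1$. The paper obtains it by raising the reward by a common $\delta$ on all step-$h$ demonstrated pairs. The expert term then gains exactly $\delta$, since the step-$h$ empirical distribution sums to one. Meanwhile $\expect_\rho[\LSE(Q)(s_1)]$ gains strictly less than $\delta$, since non-expert actions are not raised. Relatedly, ``can be taken to be $\tilde r=0$'' at unvisited pairs should be upgraded to ``every optimum has $\tilde r=0$'', which follows from the derivative $-d^{\pi_r}_h(s,a)<0$.

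\textbf{Step 2 (the uniform strict gap).} This is the more serious problem. The gap $\min_{\gS^{\expert}_{h+1}}\widetilde V>\max_{s\notin\gS^{\expert}_{h+1}}\widetilde V$ that you aim for is false, and no tie-breaking rescues it because the equality is forced. An expert-reachable state not in $\gDE$ has $\tilde r\equiv 0$, so at layer $H$ its value is $\log|\gA|$, exactly that of every non-expert state. For a concrete case, let $\piE$ split its successor mass between two expert-reachable states, only one of which a single demonstration visits. Your claim that ``expert-type states collect more reward'' is true only for demonstrated ones.

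\textbf{Missing idea.} Carry two properties through the backward induction.
\begin{itemize}
\item A weak inequality $\widetilde V(s^{\expert}_{h+1})\ge\widetilde V(s_{h+1})$ for all expert/non-expert pairs. This follows from $\tilde r\ge 0$ versus $\tilde r=0$ together with the second TD property.
\item Strictness only for a single distinguished state $\widehat{s^{\expert}_{h+1}}$, the one holding the saturated pair, over every non-expert state.
\end{itemize}
Step 3 then needs the first TD property only at that one state. Set $w(s')=P(s'|s_h,\piE(s_h))-P(s'|s_h,a)$, which is positive on $\gS^{\expert}_{h+1}$. Since probabilities sum to one, $\sum_{s'\in\gS^{\expert}_{h+1}}w(s')=\sum_{s'\notin\gS^{\expert}_{h+1}}P(s'|s_h,a)$. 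Hence
\[
\widetilde Q(s_h,\piE(s_h))-\widetilde Q(s_h,a)\ \ge\ w\big(\widehat{s^{\expert}_{h+1}}\big)\Big(\widetilde V\big(\widehat{s^{\expert}_{h+1}}\big)-\max_{s\notin\gS^{\expert}_{h+1}}\widetilde V(s)\Big)>0 .
\]
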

The proof of \cref{prop:generalization} is presented in Appendix \ref{subsec:proof_of_prop:generalization}.
\begin{rem}
\cref{prop:generalization} reveals a fundamental difference in generalization capability stemming from the presence or absence of Bellman constraints. Dual Q-DM, equipped with Bellman constraints, successfully identifies expert actions at states unvisited by demonstrations for the first $H-1$ timesteps. In contrast, IQ-Learn, lacking Bellman constraints, assigns uniform Q-values across all actions at unseen states and thus cannot recover expert behavior beyond demonstrations.
\end{rem}

\vspace{-0.1cm}
\begin{figure*}[t]
    \includegraphics[width=0.8\linewidth]{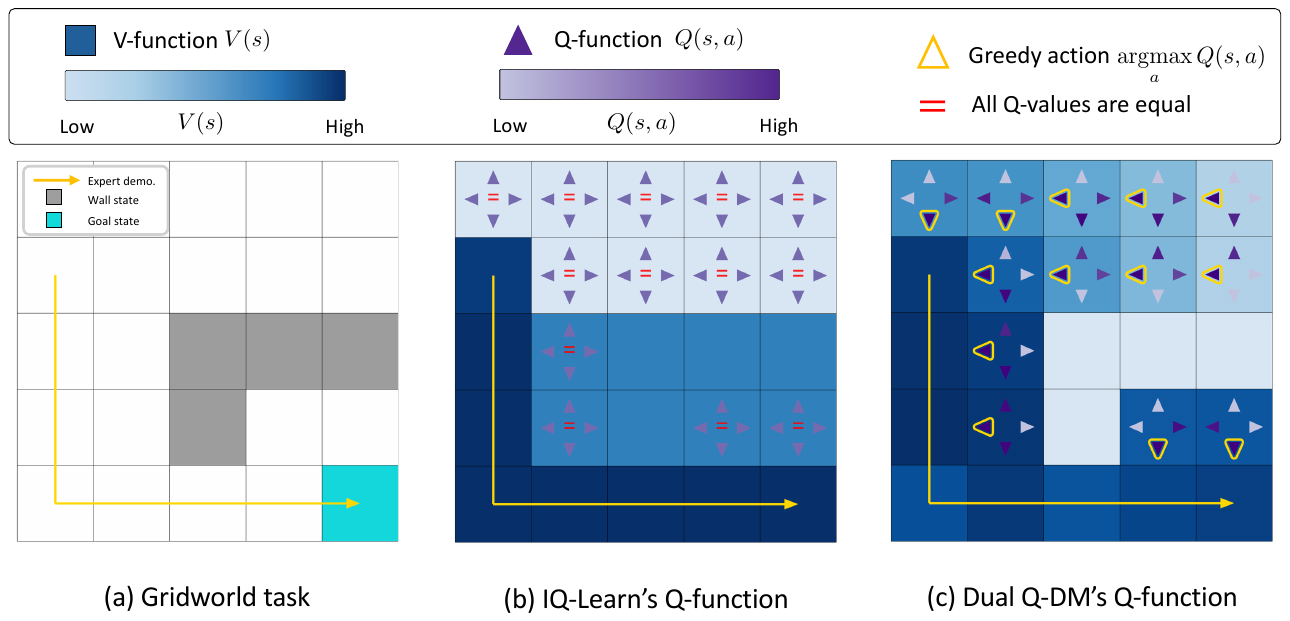}
    \centering
    \caption{Illustration of the learned Q-functions on a Gridworld task. (a): The Gridworld environment with wall states (gray), goal state (cyan), and one demonstration trajectory (yellow). (b): IQ-Learn's Q-function: visited states acquire high V-values (dark blue), but unvisited states exhibit equal Q-values across all actions (red equal signs), confirming that IQ-Learn cannot identify expert actions beyond demonstrations (\cref{thm:iq_learn_equivalence}). (c): Dual Q-DM's Q-function: Bellman constraints enable value flow from demonstrated to unvisited states, yielding a structured Q-function whose greedy action (yellow-outlined triangle) at unvisited states correctly points toward the expert trajectory, demonstrating generalization beyond demonstrations (\cref{prop:generalization}).}
    \label{fig:illustration}
\end{figure*}
\vspace{-0.1cm}

\textbf{The Value Flow Mechanism.}
The proof of \cref{prop:generalization} reveals that value flow is the key mechanism underlying generalization. Specifically, via a dynamic programming analysis, we show that value flow, driven by Bellman constraints, propagates Q-values from demonstrated to unvisited states through environment dynamics, thereby enabling generalization beyond demonstrations. \cref{fig:illustration} provides empirical evidence and illustration of this mechanism.

We first present a technical lemma useful for the proof. \cref{lem:tight_constraints} shows that the Q-function learned by Dual Q-DM saturates Bellman constraints in a structured manner: it achieves the upper bound on certain demonstrated state–action pairs and the lower bound on all unvisited ones.
\begin{equation}
\label{eq:dual_qdm_reward}
    \begin{split}
        & \exists (s^{\tea}_h, a^{\tea}_h) \in \gDE, r_{\widetilde{Q}} (s^{\tea}_h, a^{\tea}_h) = 1,
   \\
   & \forall (s_h, a_h) \notin \gDE, r_{\widetilde{Q}} (s_h, a_h) = 0. 
    \end{split}
\end{equation}
$r_{\widetilde{Q}} (s_h, a_h) := \widetilde{Q} (s_h, a_h) - \expect_{s^\prime \sim P (\cdot|s_h, a_h)} [ \LSE (\widetilde{Q}) (s^\prime) ]$ can be regarded as the reward function induced by $\widetilde{Q}$.

\textbf{Step I: Value Flow Source.} We characterize the Q-function at the terminal step. Since Q-values at the terminal step $H$ equal single-step rewards, \cref{eq:dual_qdm_reward} implies that they are high on visited pairs and low on unvisited ones. Formally, $\exists (s^{\tea}_{H}, a^{\tea}_{H}) \in \gDE, \forall (s_{H}, a_{H}) \notin \gDE, \widetilde{Q} (s^{\tea}_{H}, a^{\tea}_{H}) > \widetilde{Q} (s_{H}, a_{H})$. By defining the value function $\widetilde{V} (s) := \LSE (\widetilde{Q}) (s)$, we derive that $\exists s^{\tea}_{H} \in \gDE, \forall s_H \notin \gDE, \widetilde{V} (s^{\tea}_{H}) > \widetilde{V} (s_{H})$, meaning the values on visited states are higher than those on unvisited states. As visualized in \cref{fig:illustration}(c), demonstration states acquire substantially higher V-values than unvisited states. These high V-values serve as the source from which value flows to unvisited states in the subsequent steps.

\textbf{Step II: Value Flow Propagation.} We characterize the Q-function at step $H-1$, establishing the base case for induction. $\widetilde{Q}$ satisfies the soft Bellman optimality equation:
\begin{align*}
    \qdm (s_{H-1}, a_{H-1}) &= r_{\qdm} (s_{H-1}, a_{H-1}) 
    \\
    & + \expect_{s^\prime \sim P (\cdot|s_{H-1}, a_{H-1})} \ls \widetilde{V} (s^\prime) \rs,
\end{align*}
decomposing the Q-value into a current reward and a future value. This decomposition exposes the value flow mechanism: the high V-values at demonstrated successor states flow into the current Q-value through $\expect_{s^\prime}[\widetilde{V}(s^\prime)]$, even when the current state itself is unvisited by demonstrations.

To see why this enables generalization, consider an unvisited expert-type pair $(s_{H-1}, \piE (s_{H-1}))$. Although its current reward is zero by \cref{eq:dual_qdm_reward}, the expert action yields a high expected future value because, in TD MDPs, expert actions preferentially transition to expert-type states. Consequently, $\widetilde{Q} (s_{H-1}, \piE (s_{H-1})) > \widetilde{Q} (s_{H-1}, a), \forall a \not=  \piE (s_{H-1})$. \cref{fig:illustration} provides empirical confirmation: in Dual Q-DM (\cref{fig:illustration}(c)), the greedy actions at unvisited states correctly point toward the expert trajectory, whereas in IQ-Learn (\cref{fig:illustration}(b)), Q-values are uniform across actions at such states. 

In short, value flow ensures that the Q-value of an unobserved expert action inherits high values from its successor states, enabling generalization to unvisited state–action pairs. A parallel argument establishes that $\forall s^{\expert}_{H-1} \in \gS^{\expert}, s_{H-1} \notin \gS^{\expert}, \widetilde{V} (s^{\expert}_{H-1}) \geq \widetilde{V} (s_{H-1})$—that is, V-values at expert states uniformly dominate those at non-expert states.

\textbf{Step III: Recursive Value Flow.} For the inductive step, we assume that at step $h+1$: (i) $\widetilde{Q} (s_{h+1}, \piE (s_{h+1})) > \widetilde{Q} (s_{h+1}, a), \forall a \not=  \piE (s_{h+1}) $ and (ii) $\forall s^{\expert}_{h+1} \in \gS^{\expert}, s_{h+1} \notin \gS^{\expert}, \widetilde{V} (s^{\expert}_{h+1}) \geq \widetilde{V} (s_{h+1})$. By the same argument as the base case, both conclusions carry over to step $h$. Applying this induction over $h=H-1, \ldots, 1$ establishes that Dual Q-DM correctly identifies expert actions at unvisited states throughout the entire horizon.

The reach of this mechanism extends further than it might initially appear. Even a single demonstration trajectory induces a potentially large predecessor tree under the environment dynamics—the set of all states from which the demonstrated trajectory is reachable. Recursive value flow propagates high Q-values through this tree, enabling Dual Q-DM to recover expert behavior well beyond demonstrations. \cref{fig:illustration}(c) confirms this effect: greedy actions correctly point toward the expert trajectory across the entire grid, including states far from the demonstration.

This dynamic programming analysis thus yields a new mechanistic understanding of Q-based IL: value flow, not merely online interactions, is the essential mechanism for generalization beyond demonstrations.

\vspace{-0.1cm}
\begin{AIbox}{Takeaway:}
Value flow, driven by Bellman constraints, propagates learning signals from demonstrated to unvisited states through environment dynamics, equipping Dual Q-DM to generalize beyond demonstrations and mitigate compounding errors.
\end{AIbox}
\vspace{-0.1cm}

\textbf{Practical Implementation.} We conclude this section with a deep implementation of Dual Q-DM that incorporates Bellman constraints via penalization.
\begin{equation*}
\resizebox{\linewidth}{!}{$
\begin{aligned}
    & \max_{Q} \expect_{\tau \sim \gDE} \ls \sum_{h=1}^H Q (s_h, a_h) -  \LSE (Q) (s_{h+1})    \rs 
    \\
    &- \expect_{s_1 \sim \rho} \ls \LSE (Q) (s_{1})  \rs
    \\
    & - \beta \expect_{\tau \sim \gD^{\text{online}}} \bigg[ \sum_{h=1}^H \big( \mathrm{ReLU}(0 - Q (s_h, a_h) + \LSE (Q) (s_{h+1}) ) \big) ^2 
    \\
    &+ \big( \mathrm{ReLU}( Q (s_h, a_h) - \LSE (Q) (s_{h+1}) - 1) \big)^2 \bigg],
\end{aligned}
$}
\end{equation*}
where $\mathrm{ReLU} (x) = \max \{0, x \}$, $\gD^{\text{online}}$ is the online replay buffer and $\beta > 0$ controls the strength of Bellman constraints. This penalty term imposes a quadratic penalty whenever the learned Q-function violates Bellman constraints, encouraging the solution to remain feasible. More importantly, it is evaluated over the \emph{entire online dataset}, enabling value flow to propagate beyond the immediate neighborhood of demonstrations and reach a wider range of unvisited states, which is essential for broad generalization as suggested by our theoretical analysis. \cref{alg:practical_dual_qdm} in \cref{app:practical_implementation} outlines the full procedure.

Several recent Q-based IL methods \citep{lsiq2023, rize2025karimi} incorporate regularization terms with similar forms, interpreting them as ``implicit reward regularizers'' for training stability. In contrast, our work uncovers a fundamental value flow mechanism essential for generalization. A detailed comparison between different Q-based IL methods is provided in Appendix \ref{subsec:comparison}.

\vspace{-0.1cm}
\section{Related Work}
\vspace{-0.1cm}

We briefly review relevant studies here, with a comprehensive discussion deferred to Appendix \ref{appendix:additional_related_work}.

\textbf{Conventional Imitation Learning.} The theoretical foundations of conventional IL methods have been extensively studied \citep{ross2010efficient, Sun19provably_efficient_ilfo, rajaraman2020fundamental, xu2023provably, xu2026understanding, viano2024better, foster2024behavior}. Beyond the results discussed in the introduction, we highlight additional insights here. \citet{rajaraman2020fundamental,xu2021error} established a fundamental $\Omega(H^2)$ lower bound for any offline IL algorithm, including BC, demonstrating that online interaction is necessary for breaking this barrier. For AIL, \citet{xu2026understanding} proved a horizon-free imitation gap bound in the small-sample regime, revealing that the coupling structure in state-action distributions is critical for generalization. From the dual perspective, our work reveals a parallel mechanism in Q-based IL: value flow establishes coupling across Q-functions and serves as the key mechanism for eliminating compounding errors and generalizing beyond demonstrations.

\textbf{Q-based Imitation Learning.} Q-based IL methods have been explored empirically in numerous works \citep{Kostrikov20value_dice, garg2021iq-learn, lsiq2023, rize2025karimi, markus2024llm, li2025multi}, yet their theoretical foundations remain poorly understood. To the best of our knowledge, only \citet{antoine2025iql} have provided a theoretical analysis, introducing an offline Q-based method with a minimax formulation and establishing sample complexity guarantees under Q-function realizability. Our work differs from this in two fundamental ways: we prove that IQ-Learn effectively reduces to BC, and introduce Dual Q-DM, a novel online method with provable guarantees for eliminating compounding errors.

\textbf{Primal-Dual Perspective in Imitation Learning.} Our work relates to primal-dual approaches in IL \citep{viano2022proximal, kim2022demodice, ma2022smodice, sikchi2024dual}. The most closely related is \citet{sikchi2024dual}, which derives a Q-based algorithm from primal distribution matching that can leverage supplementary data. Like IQ-Learn, their approach does not incorporate Bellman constraints. In contrast, our work develops a primal-dual framework for distribution matching where Bellman constraints play a central role. We rigorously establish the equivalence between this framework and AIL, thereby unifying Q-based and AIL methods under the primal-dual theoretical lens.

\begin{figure*}[tbp]
    \centering
    \includegraphics[width=\linewidth]{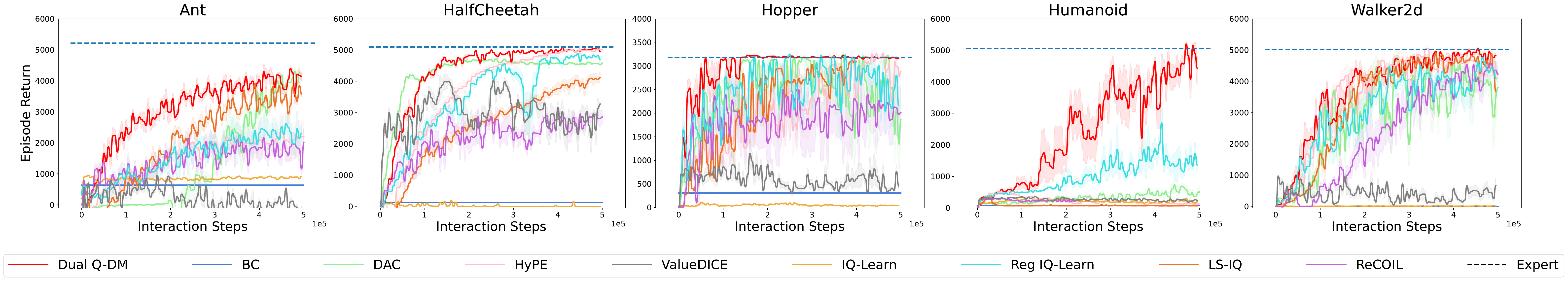}
    \caption{Learning curves regarding online environment interactions on 5 MuJoCo tasks.}
    \label{fig:mujoco}
\end{figure*}

\begin{figure*}[tbp]
    \centering
    \includegraphics[width=\linewidth]{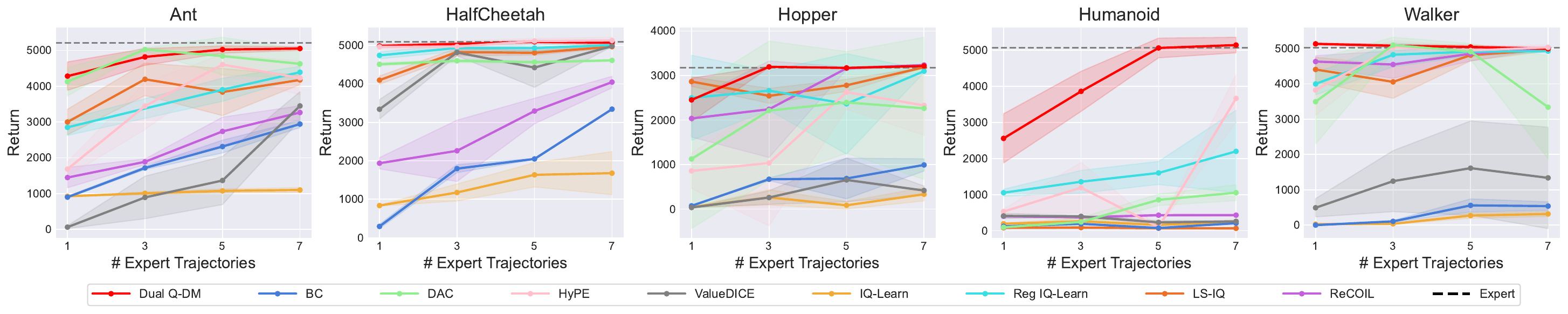}
    \caption{Final performance regarding different number of expert trajectories on 5 MuJoCo tasks.}
    \label{fig:mujoco_diff_num}
\end{figure*}

\vspace{-0.1cm}
\section{Experimental Validation}
\vspace{-0.1cm}
This section validates our theoretical findings through experiments. A brief overview of the experimental setup follows, with details in Appendix \ref{sec:experiment_details}. The implementation is available at \href{https://github.com/LAMDA-RL/Dual-Q-DM}{https://github.com/LAMDA-RL/Dual-Q-DM}.

\vspace{-0.1cm}
\subsection{Experimental Set-up}
\vspace{-0.1cm}

\textbf{Environment.} We evaluate our method on two types of environments. First, we use the hard tabular instance from \cref{cor:compounding_error}, which allows us to precisely characterize compounding errors. Second, we evaluate on all 5 tasks from the MuJoCo benchmark \citep{todorov2012mujoco}, a widely adopted continuous control benchmark in IL. Each algorithm is evaluated over three independent runs with different random seeds. Policy performance is measured using Monte Carlo estimation over 10 rollout trajectories per evaluation.

\textbf{Baselines.} We consider three categories of baselines. The first is BC \citep{Pomerleau91bc}. For AIL, we consider DAC \citep{Kostrikov19dac} and the SOTA method HyPE \citep{ren2024hybrid}. For Q-based IL, we include ValueDICE \citep{Kostrikov20value_dice}, IQ-Learn \citep{garg2021iq-learn}, LS-IQ \citep{lsiq2023}, and ReCOIL \citep{sikchi2024dual}.

\vspace{-0.1cm}
\subsection{Experimental Results}
\vspace{-0.1cm}

\textbf{Results on the Hard Instance.} We evaluate different methods on the hard instance from \cref{cor:compounding_error} and report their imitation gaps across varying horizons. As shown in \cref{fig:reset_cliff}, BC, IQ-Learn (with TV and $\chi^2$ divergences), and ValueDICE all exhibit imitation gaps growing linearly with $H^2$, confirming compounding errors predicted by \cref{cor:compounding_error}. In contrast, Dual Q-DM and AIL maintain consistently small imitation gaps as the horizon increases, confirming \cref{thm:equivalence_ail_q_dm}, which establishes that Dual Q-DM inherits AIL's ability to eliminate compounding errors.

\vspace{-0.1cm}
\begin{figure}[htbp]
    \centering
    \includegraphics[width=0.8\linewidth]{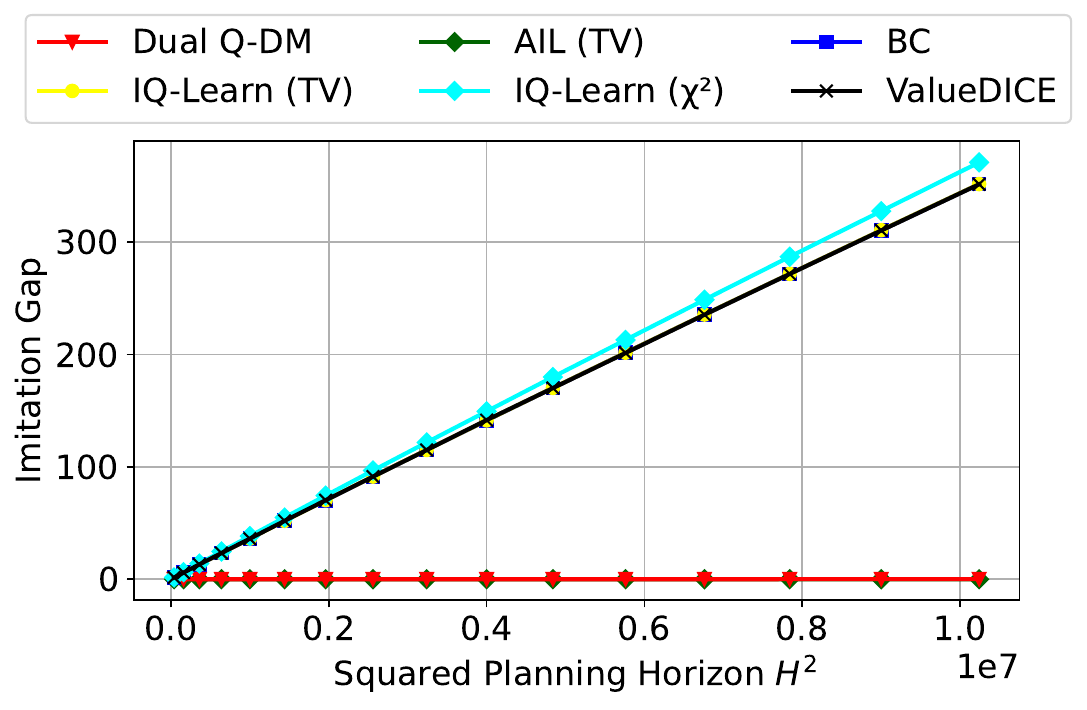}
    \caption{Imitation gaps across different horizons.}
    \label{fig:reset_cliff}
\end{figure}
\vspace{-0.1cm}

\textbf{Results on MuJoCo Tasks.} For MuJoCo tasks, we evaluate two variants of IQ-Learn: the standard IQ-Learn as originally proposed, and Reg IQ-Learn, which corresponds to the official implementation incorporating additional regularization\footnote{The official IQ-Learn implementation on MuJoCo tasks includes a regularization term in the objective function that is absent from the paper. See Appendix \ref{subsec:comparison} for details.}. LS-IQ and ReCOIL follow Reg IQ-Learn in also incorporating regularization. From \cref{fig:mujoco}, we draw four conclusions. First, IQ-Learn and ValueDICE exhibit poor performance comparable to BC, confirming our theoretical prediction in \cref{thm:iq_learn_equivalence}. Second, Reg IQ-Learn, LS-IQ, and ReCOIL achieve substantial improvements over IQ-Learn through regularization. Notably, this regularization shares similarities with our Bellman constraints, and thus our analysis provides a theoretical explanation for its effectiveness from a fundamental value flow perspective. Third, Dual Q-DM demonstrates superior performance compared to other Q-based methods, validating \cref{thm:equivalence_ail_q_dm} and \cref{prop:generalization}: due to the value flow driven by Bellman constraints, Dual Q-DM effectively generalize beyond demonstrations and mitigate compounding errors. Finally, relative to AIL methods, Dual Q-DM exhibits faster convergence and improved training stability due to its non-adversarial training.

\cref{fig:mujoco_diff_num} further presents the final performance of all methods across varying numbers of expert trajectories. Dual Q-DM consistently achieves higher final performance than competing Q-based methods across all trajectory counts, demonstrating superior demonstration efficiency.

\vspace{-0.1cm}
\section{Conclusion}
\vspace{-0.1cm}
This paper provides a comprehensive theoretical analysis of Q-based IL. We prove that IQ-Learn, a representative Q-based method, reduces to BC and inherits its compounding errors. To address this fundamental limitation, we propose Dual Q-DM, which introduces Bellman constraints as its core algorithmic design. We establish theoretical equivalence between Dual Q-DM and AIL, making it the first non-adversarial IL method provably free from compounding errors. Crucially, Bellman constraints drive value flow: Q-values propagates from demonstrated to unvisited states through environment dynamics. Our analysis reveals it is this value flow, not merely online interactions, that enables generalization beyond demonstrations, offering principled guidance for designing effective Q-based IL algorithms.

\section*{Acknowledgement}
Tian Xu thanks Yushun Zhang for his valuable discussions on constrained optimization. This work was supported by NSFC (No.\ 62495090, No.\ 62495093), the Jiangsu Science Foundation (No.\ BK20243039), the Fundamental and Interdisciplinary Disciplines Breakthrough Plan of the Ministry of Education of China (No.\ JYB2025XDXM118), and the ``111 Center'' (No.\ B26023).

\section*{Impact Statement}
This work improves the theoretical understanding of imitation learning and provides a stable non-adversarial method with provable guarantees. By enabling reliable generalization beyond demonstrations, it may benefit applications such as robotics and autonomous systems. While stronger imitation capabilities may raise concerns in certain misuse scenarios, we expect this work to primarily support the development of robust and responsible learning systems.

\bibliography{reference}
\bibliographystyle{icml2026}

\input{appendix/appendix}

\end{document}

%% file: packages/defs.tex
\newcommand{\argmin}{\mathop{\rm argmin}}
\newcommand{\argmax}{\mathop{\rm argmax}}

\newcommand{\expert}{\operatorname{E}}
\newcommand{\bc}{\operatorname{BC}}
\newcommand{\piE}{\pi^{\expert}}

\newcommand{\gDE}{\gD^{\expert}}

%% file: packages/header.tex
\usepackage{url}            %
\usepackage{booktabs}       %
\usepackage{multirow}    
\usepackage{amsfonts}       %
\usepackage{nicefrac}       %
\usepackage{microtype}      %
\usepackage{natbib}
\usepackage{enumerate}
\usepackage{hhline}
\usepackage{makecell}
\usepackage{pifont}

\usepackage{graphicx} %
\usepackage{caption}
\usepackage{subcaption}
\usepackage{amsmath}
\usepackage{amsthm}
\usepackage{amssymb}
\usepackage{tikz}
\usepackage{xcolor}
\usepackage{float}
\usetikzlibrary{arrows}

\allowdisplaybreaks

\usepackage{mathrsfs}

\usepackage{algorithm}
\usepackage{algorithmic}
\usepackage{hyperref}
\usepackage{bm}

\allowdisplaybreaks

\newcolumntype{C}[1]{>{\centering\arraybackslash}m{#1}}

\newcommand{\unif}{\mathrm{Unif}}

\def\TV{\mathrm{TV}}

\newcommand{\expect}{\mathbb{E}}

\newtheorem{thm}{Theorem} 
\newtheorem{lem}{Lemma} 
\newtheorem{cor}{Corollary} 
\newtheorem{prop}{Proposition} 
 
\newtheorem{defn}{Definition}

\newtheorem{rem}{Remark}

\usepackage[capitalize]{cleveref}
\crefname{thm}{Theorem}{Theorems}
\crefname{lem}{Lemma}{Lemmas}
\crefname{cor}{Corollary}{Corollaries}
\crefname{prop}{Proposition}{Propositions}
\crefname{asmp}{Assumption}{Assumptions}
\crefname{defn}{Definition}{Definitions}
\crefname{oracle}{Oracle}{Oracles}
\crefname{fact}{Fact}{Facts}
\crefname{conj}{Conjecture}{Conjectures}
\crefname{rem}{Remark}{Remarks}
\crefname{example}{Example}{Examples}
\crefname{condition}{Condition}{Conditions}
\crefname{exercise}{Exercise}{Exercises}
\crefname{algorithm}{Algorithm}{Algorithms}
\crefname{table}{Table}{Tables}
\crefname{figure}{Figure}{Figures}
\crefname{section}{Section}{Sections}
\crefname{subsection}{Section}{Sections}
\crefname{appendix}{Appendix}{Appendices}
\crefname{mess}{Message}{Messages}
\crefname{claim}{Claim}{Claims}
\crefname{question}{Question}{Questions}
\crefname{case}{Case}{Cases}

\renewcommand{\algorithmicrequire}{\textbf{Input:}}
\renewcommand{\algorithmicensure}{\textbf{Output:}}

\definecolor{red}{rgb}{1, 0, 0}

\definecolor{green}{rgb}{0, 1, 0}

\definecolor{blue}{rgb}{0, 0, 1}

\definecolor{orange}{rgb}{1, 0.4, 0.0}

%% file: appendix/appendix.tex
\newpage
\appendix
\onecolumn

\input{appendix/additional_related_work}

\input{appendix/proof}

\input{appendix/experiment_detail}

%% file: appendix/additional_related_work.tex
\section{Additional Related Work}
\label{appendix:additional_related_work}

\textbf{Behavior Cloning.}  BC directly performs maximum likelihood estimation (MLE) on the expert dataset, enabling accurate reproduction of expert actions within the support of demonstrations. However, its offline learning nature prevents BC from recovering expert actions on states unvisited in demonstrations, leading to the compounding error problem \citep{ross2010efficient}. \citet{ross2010efficient} first established that BC achieves an imitation gap bound with quadratic dependence on the horizon H at the population level. \citet{rajaraman2020fundamental} and \citet{xu2020error} subsequently extended this analysis to the finite sample regime. Critically, \citet{rajaraman2020fundamental} proved a fundamental $\Omega (H^2)$ lower bound for any offline IL algorithm, including BC, demonstrating that online interaction is essential for breaking the quadratic barrier. \citet{li2023imitation} further analyzed an importance-sampling weighted BC (ISW-BC) approach for IL from imperfect demonstrations and proved that ISW-BC can effectively leverage imperfect demonstrations to reduce imitation gap. Recently, \citet{foster2024behavior} developed a sharp analysis of BC with general function approximation under stationarity assumptions, while \citet{max2025continuous} investigated BC in continuous control settings and proved that BC can suffer an exponentially growing imitation gap with respect to the horizon in certain constructed hard instances.

\textbf{Adversarial Imitation Learning.} Unlike BC, AIL operates on the principle of state-action distribution matching. Building on this foundation, numerous practical AIL algorithms have been developed and applied across diverse domains \citep{ho2016gail,fu2018airl, ke19imitation_learning_as_f_divergence, Kostrikov19dac, dadashi2021primal, viano2022proximal, swamy2023inverse, watson2023coherent, ren2024hybrid, viel2025il_soar}. Empirical studies~\citep{ho2016gail, Kostrikov19dac, ghasemipour2019divergence} have revealed a key advantage: AIL can mitigate the compounding error problem that plagues BC. This observation has motivated extensive theoretical investigation~\citep{zhang2020gail, Chen20on_computation_and_generalization_of_gail, rajaraman2020fundamental, rajaraman2021value, xu2020error, liu2021provably, viano2024better,xu2024provably, zhang2025improving, xu2026understanding,xu2026adversarial} into AIL's underlying mechanisms. Notably, \citet{agarwal2019reinforcement,xu2020error} proved that AIL achieves an imitation gap bound of $\gO (H)$ with only linear dependence on the horizon $H$, thereby provably eliminating BC's compounding error problem. \citet{rajaraman2020fundamental} advanced this line of work by introducing improved state-action distribution estimation for AIL, yielding better sample complexity bounds. \citet{xu2026understanding} further established a sharp horizon-free bound for AIL. This bound is meaningful in the low data regime and thus can explain the good empirical performance of AIL with limited demonstrations \citep{ghasemipour2019divergence}. More recently, \citet{liu2021provably, viano2024better,xu2026adversarial} extended AIL theory to the function approximation setting. In this work, we focus on the theoretical analysis of Q-based IL and reveal that Bellman constraints play a role analogous to the direct RL optimization in AIL, serving as the crucial mechanism for eliminating compounding errors.

\textbf{Q-learning-based Imitation Learning.} To circumvent adversarial optimization in AIL, a family of non-adversarial Q-based IL methods \citep{Kostrikov20value_dice, garg2021iq-learn, lsiq2023, rize2025karimi, li2025multi} has emerged that directly optimize a single maximization objective over Q-functions. ValueDICE, the pioneering Q-based IL method, derived a Q-function objective from KL-divergence-based distribution matching through a change-of-variables transformation of the reward function. Building on this foundation, IQ-Learn \citep{garg2021iq-learn} exploited the closed-form solution to entropy-regularized RL to convert AIL's minimax objective into a single maximization problem. Importantly, while IQ-Learn's official MuJoCo implementation includes a regularization term that substantially improves performance, this term was not presented in the original paper. LS-IQ \citep{lsiq2023} subsequently interpreted this regularization as $\chi^2$-divergence minimization over mixture distributions and introduced additional techniques to enhance training stability. RIZE \citep{rize2025karimi} further extended LS-IQ through adaptive regularization. In contrast to these empirical advances, this paper provides a systematic theoretical analysis of Q-based IL. We identify conceptual connections between existing regularization techniques and our proposed Bellman constraints, offering a fundamental RL-based explanation for their effectiveness: propagating Q-value information through environment dynamics from visited to unvisited states is essential for Q-based IL to generalize beyond demonstrations and mitigate compounding errors.

Despite substantial empirical progress, theoretical understanding of Q-based IL remains limited. To our knowledge, only \citet{antoine2025iql} has provided theoretical analysis, introducing an offline Q-based method with minimax formulation and proving finite sample complexity under Q-function realizability. Our work differs fundamentally by theoretically investigating which algorithmic mechanisms are necessary for eliminating compounding errors. We prove that IQ-Learn essentially reduces to BC and propose a novel online Dual Q-DM method that provably avoids compounding errors. Our theoretical analysis reveals that Bellman constraints enable generalization beyond demonstrations by propagating Q-value information through environment dynamics from visited to unvisited states—a mechanism essential for mitigating compounding errors in Q-based IL.

\textbf{Primal-Dual Perspective in Imitation Learning.} Several works have explored primal-dual formulations for distribution matching in IL \citep{viano2022proximal, kim2022demodice, ma2022smodice, sikchi2024dual}. \citet{viano2022proximal} derives a minimax objective over Q-functions and reward functions from the primal distribution matching problem and establishes imitation gap bounds for their algorithm. \citet{kim2022demodice, ma2022smodice} focus on IL with supplementary datasets. Similarly, starting from the primal distribution matching formulation, they employ Lagrangian duality theory to derive a minimax objective over distribution ratios and value functions. In contrast, we focus on the pure IL setting without supplementary data and introduce a primal-dual framework that yields a single maximization problem over Q-functions rather than a minimax formulation.

The most closely related is \citet{sikchi2024dual}, which derives a Q-based algorithm from primal distribution matching that can leverage supplementary data. Like IQ-Learn, their approach does not incorporate Bellman constraints into the objective. In contrast, our work develops a primal-dual framework for distribution matching where Bellman constraints play a central role. We rigorously establish the equivalence between this framework and AIL, thereby unifying Q-based and AIL methods under a single theoretical lens.

%% file: appendix/proof.tex
\section{Omitted Results in \cref{sec:equivalence}}

\subsection{Proof of \cref{thm:iq_learn_equivalence}}
\label{subsec:proof_of_thm:iq_learn_equivalence}
We prove \cref{thm:iq_learn_equivalence} by considering the following two cases.

\textbf{Case I: $2 \leq h \leq H$.} We first consider the time step $2 \leq h \leq H$. Since the state space is layered, we can analyze the IQ-Learn objective at each time step. For any $h \in [2, H]$, we have that
\begin{align}
     \expect_{(s_h, a_h) \sim \widehat{d^{\piE}_h} } \ls Q (s_h, a_h) - \log (\sum_{a \in \gA} \exp (Q (s_h, a))) \rs = \expect_{(s_h, a_h) \sim \widehat{d^{\piE}_h} } \ls \log \lp \frac{\exp (Q (s_h, a_h))}{\sum_{a \in \gA} \exp (Q (s_h, a))} \rp \rs.
\end{align}
For any Q-function $Q$, the corresponding softmax policy $\pi_{Q}$ is defined as $\pi_{Q} (a|s) = \exp (Q (s, a)) / \sum_{a^\prime \in \mathcal{A}} \exp (Q (s, a^\prime)) $. Furthermore, for the Q-function class $ \gQ :=\{ Q: \gS \times \gA \rightarrow [0, C] \}$, we define the derived policy class $\Pi_{\gQ} = \{ \pi_{Q}: Q \in \gQ \}$. Then the IQ-Learn objective over the Q-function class in \cref{eq:obj_other_steps} is equivalent to the following maximum likelihood estimation objective over the policy class.
\begin{align}
&\max_{Q \in \gQ} \expect_{(s_h, a_h) \sim \widehat{d^{\piE}_h} } \ls \log \lp \frac{\exp (Q (s_h, a_h))}{\sum_{a \in \gA} \exp (Q (s_h, a))} \rp \rs \label{eq:obj_other_steps}
\\
\Leftrightarrow  &\max_{\pi \in \Pi_{\gQ}} \expect_{(s_h, a_h) \sim \widehat{d^{\piE}_h} } \ls \log \lp \pi (s_h|a_h) \rp \rs. \label{eq:mle}
\end{align}
In particular, suppose that $\widehat{Q}$ is the optimal solution to \cref{eq:obj_other_steps}, then the corresponding softmax policy $\pi_{\widehat{Q}}$ is the optimal solution to \cref{eq:mle}. Notice that \cref{eq:mle} is exactly the BC objective in time step $h$. Furthermore, due to the realizability of $\Pi_{\gQ}$, i.e., $\pi^{\bc} \in \Pi_{\gQ}$, $\pi^{\bc}$ is also the optimal solution to \cref{eq:mle}. Then we can obtain that $\forall 2 \leq h \leq H, \forall s_h \in \gS_h, \pi_{\widehat{Q}} (\cdot|s_h) = \pi^{\bc} (\cdot|s_h)$, which proves the first claim in \cref{thm:iq_learn_equivalence}.

\paragraph{Case II: $h=1$.} Then we consider the initial time step $h=1$. For BC, we can write down the closed-form solution based on the first order optimality condition.
\begin{align}   \label{eq:pi_bc}
 \pi^{\bc} (a_h|s_h) = \left\{ \begin{array}{ll}
      \frac{n (s_h, a_h)}{n (s_h)}  & \text{if } n (s_h) > 0  \\
        \frac{1}{|\gA|} & \text{otherwise}  
    \end{array} \right.
\end{align}
Here $n (s_h, a_h)$ ($n (s_h)$) refers to the number of times that the state-action pair $(s_h, a_h)$ (state $s_h$) has appeared in $\gDE$ in time step $h$. From \cref{eq:pi_bc}, we can conclude that for an unvisited state $s_1 \notin \gDE$, $\pi^{\bc} (a|s_1) = 1/|\gA|, \forall a \in \gA$.

For IQ-Learn, we have that
\begin{align*}
    \widehat{Q} &= \argmax_{Q \in \gQ} \expect_{(s_1, a_1) \sim \widehat{d^{\piE}_1} } \ls Q (s_1, a_1)  \rs - \expect_{s_1 \sim \rho } \ls \log (\sum_{a \in \gA} \exp (Q (s_1, a))) \rs
    \\
    &= \argmax_{Q \in \gQ} \sum_{(s_1, a) \in \gS_1 \times \gA} \widehat{d^{\piE}_1} (s_1, a) Q (s_1, a) - \sum_{s_1 \in \gS_1} \rho (s_1)  \log (\sum_{a \in \gA} \exp (Q (s_1, a))) 
\end{align*}
We analyze $\widehat{Q}$ on each unvisited state $s_1 \notin \gDE$. In particular, we consider the following two cases.
\begin{itemize}
    \item Case II (a): For state $s_1$ such that $\widehat{d^{\piE}_1} (s_1)=0, \rho (s_1) >0 $, the optimization problem can be formulated as
    \begin{align*}
        \argmin_{Q \in \gQ} \rho (s_1) \log (\sum_{a \in \gA} \exp (Q (s_1, a))). 
    \end{align*}
    This is a monotonically increasing function w.r.t $Q (s_1, a)$ for each action $a \in \gA$. Thus, it is direct to write down the optimal solution $\forall a \in \gA, \widehat{Q} (s_1, a)= 0$. The corresponding softmax policy can be formulated as $\forall a \in \gA, \pi_{\widehat{Q}} (a|s_1) = 1/|\gA|$.

    \item Case II (b): For states $s_1$ where $\widehat{d^{\piE}_1}(s_1) = 0$ and $\rho(s_1) = 0$, the objective function provides no learning signal, so the Q-function retains its initialization. Assuming a constant initialization for simplicity, the corresponding softmax policy is uniform: $\pi_{\widehat{Q}}(a \mid s_1) = 1/|\mathcal{A}|$ for all $a \in \mathcal{A}$.

\end{itemize}
In both Case II (a) and Case II (b), we can prove that for any unvisited state $s_1 \notin \gDE$, $\forall a \in \gA, \pi_{\widehat{Q}} (a|s_1) = 1/\gA$, which proves the second claim in \cref{thm:iq_learn_equivalence}.

\subsection{Connection Between ValueDICE and BC}
\label{subsec:valuedice_bc}
As discussed in \cref{rem:re-parameterization_view}, the re-parameterization perspective could also help analyze the connection between another famous Q-based method ValueDICE \citep{Kostrikov20value_dice} and BC. In particular, ValueDICE first learns a Q-function via
\begin{align}
\label{eq:value_dice}
    \max_{Q \in \gQ}   -\log \bigg( \expect_{\tau \sim \gD^{\expert}}  
    \bigg[ \sum_{h=1}^H \exp \big( -Q (s_h, a_h) +  \operatorname{MAX} (Q) (s_{h+1}) \big) \bigg]  \bigg) - \expect_{s_1 \sim \rho }   
    \ls \operatorname{MAX} (Q) (s_1)   \rs.
\end{align}
and derives argmax policy $\pi_{Q} (s) = \argmax_{a \in \gA} Q (s, a)$ where $\operatorname{MAX} (Q) (s) = \max_{a \in \gA} Q (s, a)$. Comparing \cref{eq:value_dice} with \cref{eq:iq_learn}, ValueDICE shares the same structural form as IQ-Learn, with $\operatorname{MAX}(Q)(s)$ playing the role of $\operatorname{LSE}(Q)(s)$. Applying the re-parameterization argument from \cref{rem:re-parameterization_view}, one can identify a term of the form $\max_{Q}(Q(s_h, a_h) - \operatorname{MAX}(Q)(s_h)) = \max_{\pi} C \cdot \mathbb{I}\{\pi_Q(s_h) = a_h\}$ embedded in \cref{eq:value_dice}, which corresponds to BC with a 0-1 loss.

\subsection{Proof of \cref{cor:compounding_error}}
\label{subsec:proof_of_cor:compounding_error}

We consider the Reset Cliff MDP introduced in \citep{rajaraman2020fundamental}. The environment is defined as follows. The state space at time step $h$ is $\mathcal{S}_h$, where $|\mathcal{S}_1| = |\mathcal{S}_2| = \cdots = |\mathcal{S}_H| = S$, and each $\mathcal{S}_h$ contains a designated bad state $b_h$. The initial state distribution is given by
$$\rho = \Bigl(\tfrac{1}{N+1},\, \ldots,\, \tfrac{1}{N+1},\, 1 - \tfrac{S-2}{N+1},\, 0\Bigr),$$
where $N = |\mathcal{D}^E|$ is the number of trajectories in the expert dataset.

We consider a fixed deterministic expert policy $\piE$. At any non-bad state $s_h \in \mathcal{S}_h \setminus \{b_h\}$, taking the expert action $\piE (s_h)$ resets the agent to a state drawn from $\rho$ and yields a reward of $1$; any other action transitions deterministically to the bad state $b_{h+1}$ and yields a reward of $0$. The bad states are absorbing: every action from $b_h$ returns to $b_{h+1}$ with reward $0$. Formally, the transition and reward functions are
\begin{align*}
    P(\cdot \mid s_h, a_h) &= \begin{cases} \rho, & s_h \in \mathcal{S}_h \setminus \{b_h\},\ a_h = \piE (s_h), \\ \delta_{b_h}, & \text{otherwise}, \end{cases} \\[6pt]
    r(s_h, a_h) &= \begin{cases} 1, & s_h \in \mathcal{S}_h \setminus \{b_h\},\ a_h = \piE (s_h), \\ 0, & \text{otherwise.} \end{cases}
\end{align*}
Here $\delta_{b_h}$ is a dirac delta distribution on the bad state $b_h$.

We begin with the following imitation gap decomposition.
\begin{align*}
    V^{\piE} - \expect \ls V^{\widehat{\pi}} \rs = V^{\piE} -  \expect \ls V^{\pi^{\bc}} \rs  + \expect \ls V^{\pi^{\bc}} - V^{\widehat{\pi}} \rs. 
\end{align*}

To prove the lower bound, we first analyze the term $V^{\pi^{\bc}} - V^{\widehat{\pi}}$. In particular, according to the policy difference lemma, we can obtain that
    \begin{align*}
        V^{\pi^{\bc}} - V^{\widehat{\pi}} = \expect \ls \sum_{h=1}^H \expect_{a \sim \pi^{\bc} (\cdot|s_h)} \ls Q^{\widehat{\pi}} (s_h, a) \rs - \expect_{a \sim \widehat{\pi} (\cdot|s_h)} \ls Q^{\widehat{\pi}} (s_h, a) \rs   \bigg| \pi^{\bc}  \rs.
    \end{align*}
    \cref{thm:iq_learn_equivalence} discloses that $\forall 2 \leq h  \leq H, \; \forall s \in \gS_h, \widehat{\pi} (a|s) = \pi^{\bc} (a|s)$. Then we can obtain that 
    \begin{align*}
        V^{\pi^{\bc}} - V^{\widehat{\pi}} = \expect \ls  \expect_{a \sim \pi^{\bc} (\cdot|s_1)} \ls Q^{\widehat{\pi}} (s_1, a) \rs - \expect_{a \sim \widehat{\pi} (\cdot|s_1)} \ls Q^{\widehat{\pi}} (s_1, a) \rs   \bigg| \pi^{\bc}  \rs.
    \end{align*}
    In the Reset Cliff MDP, if the agent takes a non-expert action, then it will transition into the absorbing states and cannot receive positive rewards anymore. This implies that $Q^{\widehat{\pi}} (s_1, a) = 0, \forall a \not= \piE (s_1)$. Then we can obtain that
    \begin{align*}
        V^{\pi^{\bc}} - V^{\widehat{\pi}} = \expect \ls  Q^{\widehat{\pi}} (s_1, \piE (s_1)) \lp \pi^{\bc} (\piE (s_1) |s_1) - \widehat{\pi} (\piE (s_1) |s_1)   \rp  \bigg| \pi^{\bc}  \rs.
    \end{align*}
    For any step $h$, we define $\gS_h (\gDE) := \{s_h \in \gS_h: s_h \in \gDE \} $ as the set of states visited in $\gDE$ in time step $h$. According to \cref{eq:pi_bc}, the BC policy can exactly recover the expert action in any visited state $s \in \gS_1 (\gDE)$ and thus $\pi^{\bc} (\piE (s) |s) - \widehat{\pi} (\piE (s) |s) = 1 - \widehat{\pi} (\piE (s) |s) \geq 0  $. For any unvisited state $s \notin \gS_1 (\gDE) \cup \{ b \}$, \cref{thm:iq_learn_equivalence} implies that $\pi^{\bc} (\piE (s) |s) - \widehat{\pi} (\piE (s) |s) = 1/|\gA| - 1/|\gA| = 0$. Then we can obtain $V^{\pi^{\bc}} - V^{\widehat{\pi}} \geq 0$ almost surely. Then we have that
    \begin{align*}
        V^{\piE} - \expect \ls V^{\widehat{\pi}} \rs = V^{\piE} -  \expect \ls V^{\pi^{\bc}} \rs  + \expect \ls V^{\pi^{\bc}} - V^{\widehat{\pi}} \rs \geq V^{\piE} -  \expect \ls V^{\pi^{\bc}} \rs. 
    \end{align*}

    Now, we can turn to analyze the imitation gap of $V^{\piE} - \expect [ V^{\pi^{\bc}} ]$. While \citet{rajaraman2020fundamental} have established the imitation gap lower bound for any offline IL method, including BC, we identify a flaw in their proof. We therefore develop a new analysis to establish the imitation gap lower bound specifically for BC.

    In the Reset Cliff MDP, the agent suffers a reward loss when it takes a non-expert action for the first time. Therefore, we define $t$ as the first time that the agent takes a non-expert action in a trajectory $\tau = \{s_1, a_1, s_2, \ldots, s_H, a_H \}$.
    \begin{align*}
        t = \begin{cases}
            \inf \{h: a_h \not= \piE (s_h)  \}, \; &\text{if} \; \exists h \in [H], a_h \not= \piE (s_h), 
            \\
            H+1, \; & \text{otherwise}.
        \end{cases}
    \end{align*}
    Then we have that
    \begin{align*}
        V^{\piE} -  V^{\pi^{\bc}} = H - \expect \ls \sum_{h=1}^H r (s_h, a_h) \bigg| \pi^{\bc} \rs  \overset{\text{(a)}}{=} \expect \ls H-t+1 \bigg| \pi^{\bc} \rs &= \sum_{h=1}^{H+1} \sP^{\pi^{\bc}} \lp t=h \rp \lp H - h + 1 \rp 
        \\
        &= \sum_{h=1}^H \sP^{\pi^{\bc}} \lp t = h \rp \lp H - h +1 \rp.
    \end{align*}
    Equation $(a)$ holds because in the Reset Cliff MDP, the agent receives zero reward once it takes a non-expert action. To analyze the probability $\sP^{\pi^{\bc}} ( t = h )$, we define another random variable $t^{u}$ as the first time that the agent visits a state uncovered in demonstrations $\gD^{\expert}$.
    \begin{align*}
        t^{u} =  \begin{cases}
            \inf \{h: s_h \notin \gS_h (\gD^{\expert}) \cup \{b_h \}  \}, \; &\text{if} \; \exists h \in [H], s_h \notin \gS_h (\gD^{\expert}) \cup \{ b_h \}, 
            \\
            H+1, \; & \text{otherwise}.
        \end{cases}
    \end{align*}
    Here $\gS_h (\gDE)$ denotes the set of states that are visited in $\gDE$ in time step $h$. Then we can obtain that
    \begin{align*}
        \sP^{\pi^{\bc}} \lp t = h  \rp \geq \sP^{\pi^{\bc}} \lp t = h, t^u = h \rp = \sP^{\pi^{\bc}} \lp t^u = h \rp \sP^{\pi^{\bc}} \lp t = h | t^u = h \rp. 
    \end{align*}
    For the conditional probability, we have that
    \begin{align*}
    \sP^{\pi^{\bc}} \lp t = h | t^u = h \rp &= \sum_{s \in \gS_h} \sP^{\pi^{\bc}} \lp t = h , s_h=s | t^u = h \rp
    \\
    &= \sum_{s \in \gS_h} \sP^{\pi^{\bc}} \lp t = h  | s_h=s, t^u = h \rp \sP^{\pi^{\bc}} \lp s_h=s | t^u = h \rp 
    \\
    &= \sum_{s \notin \gS_h (\gD^{\expert}) \cup \{b_h \}} \sP^{\pi^{\bc}} \lp t = h  | s_h=s, t^u = h \rp \sP^{\pi^{\bc}} \lp s_h=s | t^u = h \rp. 
    \end{align*}
    The last equation follows that for any $s \in \gS_h (\gD^{\expert}) \cup \{ b_h \}$, $\sP^{\pi^{\bc}} (s_h=s | t^u = h) = 0$ since $h$ is the first time step that $\pi^{\bc}$ encounters a state uncovered in $\gD^{\expert}$. For any $s \notin \gS_h (\gD^{\expert}) \cup \{b_h \}$, we have 
    \begin{align*}
        &\quad \sP^{\pi^{\bc}} \lp t = h  | s_h=s, t^u = h \rp 
        \\
        &= \sum_{a \not= \piE (s)} \sP^{\pi^{\bc}} \lp a_h = a, a_{1:h-1} = \piE (s_{1:h-1})  | s_h=s, t^u = h \rp
        \\
        &= \sum_{a \not= \piE (s)} \sP^{\pi^{\bc}} \lp a_h = a  | a_{1:h-1} = \piE (s_{1:h-1}), s_h=s, t^u = h \rp \sP^{\pi^{\bc}} \lp a_{1:h-1} = \piE (s_{1:h-1}) | s_h=s, t^u = h   \rp
        \\
        &\overset{\text{(a)}}{=} \sum_{a \not= \piE (s)} \sP^{\pi^{\bc}} \lp a_h = a  | a_{1:h-1} = \piE (s_{1:h-1}), s_h=s, t^u = h \rp
        \\
        &= \sum_{a \not= \piE (s)} \sP^{\pi^{\bc}} \lp a_h = a  | a_{1:h-1} = \piE (s_{1:h-1}), s_h=s, \forall 1 \leq t \leq h-1, s_t \in \gS_t (\gDE) \cup \{ b_t \}, s_h \notin \gS_h (\gDE) \cup \{b_h \}  \rp
        \\
        &\overset{\text{(b)}}{=} \sum_{a \not= \piE (s)} \sP^{\pi^{\bc}} \lp a_h = a  |  s_h=s, s_h \notin \gS_h (\gDE) \cup \{b_h \}  \rp
        \\
        &\overset{\text{(c)}}{=} \sum_{a \not= \piE (s)} \sP^{\pi^{\bc}} \lp a_h = a  |  s_h=s  \rp
        \\
        &= \sum_{a \not= \piE (s)} \pi^{\bc} (a|s)
        \\
        &\overset{\text{(d)}}{=} 1 - \frac{1}{|\gA|}.
    \end{align*}
    In equation $(a)$, we leverage the property that $\sP^{\pi^{\bc}} \lp a_{1:h-1} = \piE (s_{1:h-1}) | s_h=s, t^u = h   \rp = 1$ because the event $t^u = h$ means the agent visits a state $s_h \not= b$ in time step $h$, implying that it must always takes the expert action before $h$. Equation $(b)$ follows that $a_h$ is independent on $s_{1:h-1}$ and $a_{1:h-1}$ conditioned on $s_h$. Equation $(c)$ holds because the event $s_h = s$ for state $s \notin \gS_h (\gD^{\expert}) \cup \{b_h \}$ implies the event $s_h \notin \gS_h (\gDE) \cup \{b_h \} $. In equation (d), we leverage the property that in a state $s$ unvisited in $\gDE$, $\forall a \in \gA, \pi^{\bc} (a|s) = 1/|\gA|$ due to \cref{eq:pi_bc}. Then we can calculate the conditional probability.
    \begin{align*}
        \sP^{\pi^{\bc}} \lp t = h | t^u = h \rp &= \sum_{s \notin \gS_h (\gD^{\expert}) \cup \{b_h \}} \sP^{\pi^{\bc}} \lp t = h  | s_h=s, t^u = h \rp \sP^{\pi^{\bc}} \lp s_h=s | t^u = h \rp
        \\
        &= \lp 1 - \frac{1}{|\gA|} \rp \sum_{s \notin \gS_h (\gD^{\expert}) \cup \{b_h \}}  \sP^{\pi^{\bc}} \lp s_h=s | t^u = h \rp
        \\
        &= 1 - \frac{1}{|\gA|}. 
    \end{align*}
    Then we have that
    \begin{align*}
        \sP^{\pi^{\bc}} \lp t = h  \rp \geq \sP^{\pi^{\bc}} \lp t^u = h \rp \sP^{\pi^{\bc}} \lp t = h | t^u = h \rp \geq \lp 1 - \frac{1}{|\gA|} \rp \sP^{\pi^{\bc}} \lp t^u = h \rp.
    \end{align*}
    Then we proceed to analyze the imitation gap.
    \begin{align*}
        V^{\piE} -  V^{\pi^{\bc}} &= \sum_{h=1}^H \sP^{\pi^{\bc}} \lp t = h \rp \lp H - h +1 \rp
        \\
        &\geq \lp 1 - \frac{1}{|\gA|} \rp \sum_{h=1}^H \sP^{\pi^{\bc}} \lp t^u = h \rp \lp H - h +1 \rp. 
    \end{align*}
    Taking an expectation over $\gDE$ on both sides yields that
    \begin{align}
    \label{eq:gap_and_stoppingtime}
        \expect_{\gDE} \ls  V^{\piE} -  V^{\pi^{\bc}}  \rs = \lp 1 - \frac{1}{|\gA|} \rp \sum_{h=1}^H \expect_{\gDE} \ls  \sP^{\pi^{\bc}} \lp t^u = h \rp \rs \lp H - h +1 \rp. 
    \end{align}
    Then we can calculate the probability $\sP^{\pi^{\bc}} \lp t^u = h \rp$ as follows.
    \begin{align*}
        \sP^{\pi^{\bc}} \lp t^u = h \rp &= \sP^{\pi^{\bc}} \lp s_h \notin \gS_h(\gDE) \cup \{b_h \}, \forall h^\prime \in [h-1], s_{h^\prime} \in \gS_{h^\prime} (\gDE) \cup \{b_{h^\prime} \}  \rp
        \\
        &\overset{\text{(a)}}{=} \sP^{\pi^{\bc}} \lp s_h \notin \gS_h(\gDE) \cup \{b_h \}, \forall h^\prime \in [h-1], s_{h^\prime} \in \gS_{h^\prime} (\gDE)  \rp
        \\
        &\overset{\text{(b)}}{=} \sP^{\piE} \lp s_h \notin \gS_h(\gDE) \cup \{b_h \}, \forall h^\prime \in [h-1], s_{h^\prime} \in \gS_{h^\prime} (\gDE) \rp
        \\
        & \overset{\text{(c)}}{=} \lp \prod_{\ell=1}^{h-1} \sP^{\piE} (s_\ell \in \gS_h (\gDE)) \rp \sP^{\piE} \lp s_h \notin \gS_h(\gDE) \cup \{b_h \} \rp
        \\
        &= \lp \prod_{\ell=1}^{h-1} \lp \sum_{s \in \gS_\ell (\gDE)} \rho (s) \rp \rp \lp \sum_{s \notin \gS_h(\gDE) \cup \{b_h \} } \rho (s) \rp.
    \end{align*}
    Equation (a) holds because if the agent visits the state $b_{h^\prime}$ in some preceding time step $h^\prime < h$, it must visit the state $b_h$ in time step $h$. Equation (b) holds because that $\forall h^\prime \in [h-1], \forall s_{h^\prime} \in \gS_{h^\prime} (\gDE), \pi^{\bc} (\piE (s_{h^\prime})|s_{h^\prime}) = 1 $. Equation (c) leverages the property that under the expert policy, the state in each timestep is i.i.d. drawn from the initial state distribution $\rho$. Taking an expectation over $\gDE$ on both sides yields that 
    \begin{align*}
        \expect_{\gDE} \ls  \sP^{\pi^{\bc}} \lp t^u = h \rp \rs &= \expect_{\gDE} \ls \lp \prod_{\ell=1}^{h-1} \lp \sum_{s \in \gS_\ell (\gDE)} \rho (s) \rp \rp \lp \sum_{s \notin \gS_h(\gDE) \cup \{b_h \} } \rho (s) \rp \rs
        \\
        &\overset{\text{(a)}}{=}  \lp \prod_{\ell=1}^{h-1}  \expect_{\gDE} \ls \sum_{s \in \gS_\ell (\gDE)} \rho (s)  \rs \rp \expect_{\gDE} \ls \sum_{s \notin \gS_h(\gDE) \cup \{b_h \} } \rho (s) \rs.
    \end{align*}
    Equation (a) holds because the state at each time step of $\gDE$ is i.i.d drawn from the initial state distribution $\rho$. For each term in the RHS, we have that
    \begin{align*}
        & \expect_{\gDE} \ls \sum_{s \in \gS_\ell (\gDE)} \rho (s)  \rs = 1 - \sum_{s \in \gS_\ell} \rho (s) \sP \lp s \notin \gS_{\ell} (\gDE) \rp = 1 - \sum_{s \in \gS_\ell} \rho (s) \lp 1- \rho (s) \rp^N,
        \\
        & \expect_{\gDE} \ls \sum_{s \notin \gS_h(\gDE) \cup \{b_h \} } \rho (s) \rs = \sum_{s \in \gS_\ell} \rho (s) \sP \lp s \notin \gS_h(\gDE) \cup \{b_h \}  \rp = \sum_{s \in \gS_\ell} \rho (s) \lp 1 - \rho (s) \rp^N.
    \end{align*}
    We define $\varepsilon := \sum_{s \in \gS_{\ell}} \rho (s) \lp 1- \rho (s) \rp^N$ and get that $\expect_{\gDE} [  \sP^{\pi^{\bc}} ( t^u = h ) ] = (1 - \varepsilon)^{h-1} \varepsilon $. Substituting this result into Eq.(\ref{eq:gap_and_stoppingtime}) yields that
    \begin{align*}
        \expect_{\gDE} \ls  V^{\piE} -  V^{\pi^{\bc}}  \rs &= \lp 1 - \frac{1}{|\gA|} \rp \sum_{h=1}^H \lp 1 - \varepsilon \rp^{h-1} \lp H - h +1 \rp \varepsilon
        \\
        &\geq \lp 1 - \frac{1}{|\gA|} \rp \varepsilon \sum_{h=1}^{\lfloor H / 2\rfloor} \lp 1 - \varepsilon \rp^{h-1} \lp H - h +1 \rp
        \\
        &\geq \lp 1 - \frac{1}{|\gA|} \rp \frac{H \varepsilon}{2} \sum_{h=1}^{\lfloor H / 2\rfloor} \lp 1 - \varepsilon \rp^{h-1}
        \\
        &= \lp 1 - \frac{1}{|\gA|} \rp \frac{H}{2} \lp 1 - \lp 1- \varepsilon \rp^{\lfloor H / 2\rfloor} \rp
        \\
        &\geq  \lp 1 - \frac{1}{|\gA|} \rp \frac{H}{2} \lp 1 - \exp \lp - \varepsilon \lfloor \frac{H}{2} \rfloor \rp \rp. 
    \end{align*}
    The last inequality follows that $(1+x/n)^n  \leq \exp (x), \forall |x|\leq n, n \geq 1$. Besides, \cref{lem:epsilon_range} implies that $\varepsilon \geq (S - 2) / (e (N+1))$. Then we can obtain that
    \begin{align*}
        \expect_{\gDE} \ls  V^{\piE} -  V^{\pi^{\bc}}  \rs \geq  \lp 1 - \frac{1}{|\gA|} \rp \frac{H}{2} \lp 1 - \exp \lp - \frac{(S-2)}{e (N+1)} \lfloor \frac{H}{2} \rfloor \rp \rp   
    \end{align*}
    When $- \frac{(S-2)}{e (N+1)} \lfloor \frac{H}{2} \rfloor \in [-1, 0]$, according to $\exp (x) \leq 1 + x/2, \forall x \in [-1, 0]$, we have
    \begin{align*}
        \exp \lp - \frac{(S-2)}{e (N+1)} \lfloor \frac{H}{2} \rfloor \rp \leq 1 - \frac{(S-2)}{2 e (N+1)} \lfloor \frac{H}{2} \rfloor .  
    \end{align*}
    
    Then we have that
    \begin{align*}
        \expect_{\gDE} \ls  V^{\piE} -  V^{\pi^{\bc}}  \rs \geq \lp 1 - \frac{1}{|\gA|} \rp\frac{(S-2) H}{4e (N+1)} \lfloor \frac{H}{2} \rfloor \succsim \frac{SH^2}{N}. 
    \end{align*}
    When $- \frac{(S-2)}{e (N+1)} \lfloor \frac{H}{2} \rfloor < -1$, we have that
    \begin{align*}
        \expect_{\gDE} \ls  V^{\piE} -  V^{\pi^{\bc}}  \rs \geq  \lp 1 - \frac{1}{|\gA|} \rp \frac{H}{2} \lp 1 - \frac{1}{e} \rp \succsim H.
    \end{align*}
    Combining the above two inequalities finishes the proof.

\subsection{Derivation Gap in IQ-Learn}
\label{subsec:derivation_gap}

In this section, we examine why IQ-Learn, despite originating from the AIL objective, produces an optimization objective that closely resembles BC. We present the key steps in the derivation of IQ-Learn \citep{garg2021iq-learn}:
\begin{itemize}
    \item Step I. IQ-Learn starts with the following minimax objective of AIL.
\begin{align*}
    \min_{\pi \in \Pi} \lp \max_{r \in \gR} \expect_{\gDE} \ls \sum_{h=1}^H r (s_h, a_h) \rs - \expect_{\pi} \ls \sum_{h=1}^H r (s_h, a_h) + \gH (\pi (\cdot|s_h)) \rs  \rp.
\end{align*}
Here $\gR = \{ r: \gS \times \gA \rightarrow [0, 1]  \}$ is the class with bounded reward functions.
\item Step II. IQ-Learn transforms this primal problem into the dual formulation.
\begin{align}
\label{eq:dual_problem}
    \max_{r \in \gR}  \lp \min_{\pi \in \Pi}  \expect_{\gDE} \ls \sum_{h=1}^H r (s_h, a_h) \rs - \expect_{\pi} \ls \sum_{h=1}^H r (s_h, a_h) + \gH (\pi (\cdot|s_h)) \rs  \rp.
\end{align}

\item Step III. The inner minimization over policies in \cref{eq:dual_problem} corresponds to a maximum entropy RL problem with a closed-form solution. IQ-Learn plugs this closed-form solution into Eq.(\ref{eq:dual_problem}) and obtains the following single maximization objective. 
\begin{align}
\label{eq:single_maximization}
    \max_{r \in \gR} \expect_{\gDE} \ls \sum_{h=1}^H r (s_h, a_h) \rs - \expect_{s_1 \sim \rho} \ls \log \lp \sum_{a \in \gA} \exp \lp Q^{\star, \soft, r} (s_1, a) \rp \rp \rs.
\end{align}
Here $Q^{\star, \soft, r}$ is the soft-optimal Q-function regarding reward $r$.
\item Step IV. IQ-Learn applies the change-of-variable $r (s_h, a_h) = Q (s_h, a_h) - \expect_{s^\prime \sim P (\cdot|s_h, a_h)} [ \log (\sum_{a^\prime \in \gA} \exp (Q (s^\prime, a^\prime))) ]$, yielding the following objective w.r.t Q-functions.
\begin{align}
\label{eq:iq_learn_obj}
    \max_{Q \in \gQ} \expect_{\gDE} \ls \sum_{h=1}^H Q (s_h, a_h) - \expect_{s^\prime \sim P (\cdot|s_h, a_h)} \ls \log \lp \sum_{a^\prime \in \gA} \exp \lp Q (s^\prime, a^\prime) \rp \rp \rs \rs - \expect_{s_1 \sim \rho} \ls \log \lp \sum_{a^\prime \in \gA} \exp \lp Q (s_1, a^\prime) \rp \rp \rs.
\end{align}
Here $\gQ = \{ Q: \gS \times \gA \rightarrow [0, C] \}$ is the class of Q-functions. 
\end{itemize}

We identify Step IV as the source of the equivalence breakdown. The change of variables introduces a Q-function in place of the reward, but IQ-Learn overlooks the constraint that the induced reward must remain in $\gR$. However, IQ-Learn ignores that the new Q-variable must satisfy that the derived reward variable should satisfy the original constraints. Specifically, for any $Q \in \mathcal{Q}$, the induced reward function $r_Q(s_h, a) = Q(s_h, a) - \mathbb{E}_{s' \sim P(\cdot|s_h, a)}[\log(\sum_{a' \in \mathcal{A}} \exp(Q(s', a')))]$ must belong to $\gR$. This requirement translates to:
\begin{align*}
    0 \le Q(s_h,a) - \mathbb{E}_{s' \sim P(\cdot|s_h,a)}[\text{LSE}(Q)(s')] \le 1, \quad \forall (h,s_h,a)\in [H]\times \mathcal{S}_h \times \mathcal{A}.
\end{align*}
These are precisely the Bellman constraints from \cref{eq:bellman_constraints} in our primal-dual framework. Thus, the necessity of Bellman constraints emerges naturally even when viewed through IQ-Learn's derivation idea.

\section{Omitted Results in \cref{sec:method}}

\subsection{Derivation of the Primal Dual Framework in \cref{subsec:primal_dual_framework}}
\label{subsec:Derivation_of_the_Primal_Dual_Framework}
In this part, we derive the primal dual framework in \cref{subsec:primal_dual_framework} using Lagrangian duality theory. We first derive the dual V-function distribution matching formulation from the primal distribution matching.

The primal distribution matching problem is as follows: 
\begin{align*}
    & \min_{d}  \sum_{h=1}^H \lp D_{\TV} (\widehat{d^{\piE}_h}, d_h) - \widebar{H} (d_h) \rp, \nonumber
    \\
    & \;\; \mathrm{s.t.}  \sum_{a \in \gA} d_1 (s_1, a) = \rho (s_1), \forall s_1 \in \gS_1, \nonumber
    \\
    & \sum_{(s_h, a) \in \gS_h \times \gA} d_{h} (s_h, a) P_{h} (s_{h+1}|s_h, a) = \sum_{a \in \gA} d_{h+1} (s_{h+1}, a), \forall 1 \leq h \leq H-1, \forall s_{h+1} \in \gS_{h+1}, \nonumber
    \\
    &\quad \quad  d_h (s_h, a) \geq 0, \forall (h, s_h, a) \in [H] \times \gS_h \times \gA. \nonumber
\end{align*}
where 
\begin{align*}
    D_{\TV}(\widehat{d^{\piE}_h}, d_h) = \frac{1}{2}\sum\limits_{(s_h,a) \in \gS_h\times \gA} | \widehat{d^{\piE}_h}(s_h, a) - d_h(s_h,a) | \ \text{and} \  \widebar{H} (d_h) = \expect_{(s_h, a) \sim d_h} \ls - \log \lp \frac{d_h (s_h, a)}{\sum_{a \in \gA} d_h (s, a)} \rp \rs.
\end{align*}
Because the minimization objective $D_{\TV}(\widehat{d^{\piE}_h}, d_h) - \widebar{H} (d_h)$ is convex and the constraint set is a polytope, this problem is a convex problem and we can apply Lagrangian duality \citep{boyd2004convex}. First, we introduce the variable $u_h(s,a)$ to eliminate the absolute value. 
\begin{align*}
    & \min_{d, u}  \sum_{h=1}^H  \sum\limits_{(s_h,a)\in \gS_h \times \gA} \frac{1}{2} u(s_h, a) + d_h(s_h, a) \log \lp \frac{d_h(s_h, a)}{\sum_{a} d_h(s_h, a)} \rp, \nonumber
    \\
    & \;\; \mathrm{s.t.}  u(s_h,a)\geq \widehat{d^{\piE}_h}(s_h, a) - d_h(s_h,a), \forall (s_h,a,h)\in \gS_h \times \gA \times [H],\nonumber
    \\
    & \quad \quad u(s_h, a) \geq  - \widehat{d^{\piE}_h}(s_h, a) + d_h(s_h,a), \forall (s_h,a,h)\in \gS_h \times \gA \times [H], \nonumber
    \\
    & \quad \quad \sum_{a \in \gA} d_1 (s_1, a) = \rho (s_1), \forall s_1 \in \gS_1, \nonumber
    \\
    & \sum_{(s_h, a) \in \gS_h \times \gA} d_{h} (s_h, a) P_{h} (s_{h+1}|s_h, a) = \sum_{a \in \gA} d_{h+1} (s_{h+1}, a),  \forall 1 \leq h \leq H-1, \forall s_{h+1} \in \gS_{h+1}, \nonumber
    \\
    &\quad \quad  d_h (s_h, a) \geq 0, \forall (h, s_h, a) \in [H] \times \gS_h \times \gA. \nonumber
\end{align*}

We now introduce Lagrange multipliers for each constraint:
\begin{align*}
    \beta(s_h, a) \geq 0 &\ \text{for} \ u(s_h,a)\geq \widehat{d^{\piE}_h}(s_h, a) - d_h(s_h,a)\\
    \alpha(s_h, a) \geq 0 &\ \text{for} \ u(s_h, a) \geq  - \widehat{d^{\piE}_h}(s_h, a) + d_h(s_h,a)\\
    -V(s_1) &\text{ for } d_1 (s_1, a) = \rho (s_1)\\
    -V(s_{h+1}) &\text{ for} \sum_{(s_h, a) \in \gS_h \times \gA} d_{h} (s_h, a) P_{h} (s_{h+1}|s_h, a) = \sum_{a \in \gA} d_{h+1} (s_{h+1}, a) \\
    \lambda(s_h,a) &\text{ for } d_h(s_h, a) \geq 0
\end{align*}
Then we provide the Lagrange Function of the original optimization problem. 
\begin{align*}
    & \quad \quad \gL (d, u, \alpha, \beta, V, \lambda) 
    \\
    &= \sum\limits_{(s_h,a,h) \in \gS \times \gA \times [H]} \frac{1}{2} u(s_h, a_h) + d_h(s_h, a_h)\log \lp \frac{d_h(s_h, a_h)}{\sum_{a} d_h(s_h, a)} \rp \\
    & + \sum\limits_{(s_h,a,h) \in \gS \times \gA \times [H]} \beta(s_h,a)(\widehat{d^{\piE}_h}(s_h, a_h) - d_h(s_h,a_h)-u(s_h, a)) \\
    & + \sum\limits_{(s_h,a,h) \in \gS \times \gA \times [H]} \alpha(s_h,a) (-\widehat{d^{\piE}_h}(s_h, a_h) + d_h(s_h,a_h)-u(s_h, a)) \\
    & - \sum\limits_{s_1\in \gS_1} V(s_1) (\rho(s_1)-\sum\limits_{a\in \gA} d_1(s_1, a))\\
    & - \sum\limits_{h=2}^H \sum\limits_{s_h\in \gS_h} V(s_h) \lp \sum_{(s_{h-1}, a) \in \gS_{h-1} \times \gA} d_{h-1} (s_{h-1}, a) P_{h} (s_h|s_{h-1}, a) - \sum_{a \in \gA} d_{h} (s_h, a) \rp\\
    & - \sum\limits_{(s_h,a,h) \in \gS \times \gA \times [H]} \lambda(s_h,a) d_h(s_h, a)
\end{align*}
Now we extract the coefficients of the variables $u_h(s_h,a),d_h(s_h,a)$ and rearrange the term:
\begin{align*}
    & \quad \quad \gL (d, u, \alpha, \beta, V, \lambda) 
    \\
    &= \sum\limits_{(s_h,a,h) \in \gS \times \gA \times [H]} u(s_h, a) \lp \frac{1}{2} - \beta(s_h,a)-\alpha(s_h, a) \rp \\
    & + \sum\limits_{h=1}^{H-1}\sum\limits_{(s_h,a) \in \gS_h \times \gA} d_h(s_h, a) [-\beta(s_h,a) + \alpha(s_h,a)+V(s_h)\\ 
    &-\sum\limits_{s_{h+1}\in \gS_{h+1}} V(s_{h+1})P_h(s_{h+1}|s_h,a)-\lambda(s_h,a)+\log(\frac{d_h(s_h,a)}{\sum_a d_h(s_h,a)})] \\
    & + \sum\limits_{(s_H,a) \in \gS_H \times \gA} d_H(s_H, a)(-\beta(s_H,a)+\alpha(s_H,a)+V(s_H)-\lambda(s_H,a)+\log(\frac{d_H(s_H,a)}{\sum_a d_H(s_H,a)})) \\
    & + \sum\limits_{(s_h,a,h) \in \gS \times \gA \times [H]} (\beta(s_h,a)-\alpha(s_h,a))\widehat{d^{\piE}_h}(s_h, a) - \sum\limits_{s_1\in \gS_1} V(s_1)\rho(s_1)
\end{align*}
According to the KKT condition, we take the gradient of each original variable $u_h(s_h, a),d_h(s_h, a)$ and set them to 0.
\begin{align}
    \frac{\partial L(d,u,\alpha,\beta,V,\lambda)}{\partial u(s_h,a)} = \frac{1}{2} - \beta(s_h,a)-\alpha(s_h, a) = 0 \label{ugradient0equation}.
\end{align}
\begin{align}
     \forall h\in [H-1],\frac{\partial L(d,u,\alpha,\beta,V,\lambda)}{\partial d_h(s_h,a)} 
    & = \alpha(s_h,a)-\beta(s_h,a)+V(s_h)-\sum\limits_{s_{h+1}\in \gS_{h+1}} V(s_{h+1})P_h(s_{h+1}|s_h,a) \nonumber\\
    & \quad - \lambda(s_h,a) + \log(d_h(s_h,a))+1-\log(\sum_a d_h(s_h,a)) -\sum_a \frac{d_h(s_h,a)}{\sum_a d_h(s_h,a)}\nonumber\\
    & = \alpha(s_h,a)-\beta(s_h,a)+V(s_h)- \expect_{s_{h+1}\sim P_h(\cdot|s_h,a)} [V(s_{h+1})]\nonumber\\
    & \quad - \lambda(s_h,a) + \log(\frac{d_h(s_h,a)}{\sum_a d_h(s_h,a)}) \nonumber 
    \\
    &= 0\label{dhgradient0equation}
\end{align}

Similarly, we have that
\begin{align}
\label{dHgradient0equation}
    \frac{\partial L(d,u,\alpha,\beta,V,\lambda)}{\partial d_H(s_H,a)} = \alpha(s_H,a)-\beta(s_H,a)+V(s_H) - \lambda(s_H,a) + \log(\frac{d_H(s_H,a)}{\sum_a d_H(s_H,a)})=0.
\end{align}

Using the gradient of $u(s_h, a),d_h(s_h, a)$, we can get
\begin{align*}
    & \beta(s_h,a)+\alpha(s_h, a) = \frac{1}{2}\\
    &\forall h\in [H-1], \alpha(s_h,a)-\beta(s_h,a)+V(s_h)-  \expect_{s_{h+1}\sim P_h(\cdot|s_h,a)} [V (s_{h+1})] + \log(\frac{d_h(s_h,a)}{\sum_a d_h(s_h,a)}) = \lambda(s_h,a)\geq 0 \\
    & \alpha(s_H,a)-\beta(s_H,a)+V(s_H) + \log(\frac{d_H(s_H,a)}{\sum_a d_H(s_H,a)})=\lambda(s_H,a)\geq 0 
\end{align*}
Now we do the following derivation to eliminate $d_h(s_h,a)$.
\begin{align*}
    &\quad \quad \log(\frac{d_h(s_h,a)}{\sum_a d_h(s_h,a)})\geq \beta(s_h,a) - \alpha(s_h,a)-V(s_h)+ \expect_{s_{h+1}\sim P_h(\cdot|s_h,a)} [V (s_{h+1})] \\
    &\Rightarrow  \frac{d_h(s_h,a)}{\sum_a d_h(s_h,a)} \geq \exp(\beta(s_h,a) - \alpha(s_h,a)-V(s_h)+ \expect_{s_{h+1}\sim P_h(\cdot|s_h,a)} [V (s_{h+1})])\\
    &\Rightarrow 1 \geq \sum_a \exp(\beta(s_h,a) - \alpha(s_h,a)-V(s_h)+ \expect_{s_{h+1}\sim P_h(\cdot|s_h,a)} [V (s_{h+1})])\\
    &\Rightarrow  \exp(V(s_h)) \geq \sum_a \exp(\beta(s_h,a)-\alpha(s_h,a)+\expect_{s_{h+1}\sim P_h(\cdot|s_h,a)} [V (s_{h+1})]) \\
    & \Rightarrow V(s_h) \geq \log(\sum_a \exp(\beta(s_h,a)-\alpha(s_h,a)+\expect_{s_{h+1}\sim P_h(\cdot|s_h,a)} [V (s_{h+1})]))
\end{align*}

The derivation for $d_H(s_H,a)$ is similar and thus we omit the derivation. Now we use $w(s_h,a) = \beta(s_h,a)-\alpha(s_h,a)$ to eliminate the variables $\beta$ and $\alpha$, and due to the equation \cref{ugradient0equation} we can derive the scope of $w(s_h,a)$ is $-\frac{1}{2}\leq w_h(s_h,a)\leq \frac{1}{2}$.

Substituting the equations obtained by setting the gradients to zero back into the Lagrangian function, we can get the dual problem.
\begin{align*}
    & \max_{w, V} \sum\limits_{(s_h,a,h)\in \gS_h\times \gA\times [H]} w(s_h,a)\widehat{d^{\piE}_h}(s_h,a) - \expect_{s_1\sim\rho}[V(s_1)]
    \\
    &\; \; \mathrm{s.t.}   V (s_h) \geq \log(\sum_a \exp(w(s_h,a) + \expect_{s_{h+1}\sim P_h(\cdot|s_h,a)} [V (s_{h+1})])), \forall (h, s_h) \in [H-1] \times \gS_h,
    \\
    &\quad \quad  V(s_H) \geq \log(\sum_a \exp(w(s_H, a))),
    \\
    &\quad \quad -\frac{1}{2} \leq w (s_h, a) \leq \frac{1}{2}, \forall (h, s_h, a) \in [H] \times \gS_h \times \gA. \nonumber
\end{align*}
Since the optimization objective is monotonically decreasing with respect to variable $V$, we can replace the inequality with equality.
\begin{align*}
    & \max_{w, V} \sum\limits_{(s_h,a,h)\in \gS_h\times \gA\times [H]} w(s_h,a)\widehat{d^{\piE}_h}(s_h,a) - \expect_{s_1\sim\rho}[V(s_1)]
    \\
    &\; \; \mathrm{s.t.}   V (s_h) = \log(\sum_a \exp(w(s_h,a) + \expect_{s_{h+1}\sim P_h(\cdot|s_h,a)} [V (s_{h+1})])), \forall (h, s_h) \in [H-1] \times \gS_h,
    \\
    &\quad \quad  V(s_H) = \log(\sum_a \exp(w(s_H ,a))),
    \\
    &\quad \quad -\frac{1}{2} \leq w (s_h, a) \leq \frac{1}{2}, \forall (h, s_h, a) \in [H] \times \gS_h \times \gA. \nonumber
\end{align*}
We introduce another two variables $r (s_h, a) = w(s_h,a)+\frac{1}{2}, V^{\text{new}}(s_h) = V(s_h)+\frac{1}{2}$. By noting that$\sum\limits_{(s_h,a,h)\in \gS_h\times \gA\times [H]}r(s_h,a)\widehat{d^{\piE}_h}(s_h,a)=\expect_{\tau\sim\gDE}[r(s_h,a_h)]$, we can get the following optimization problem:
\begin{align*}
    & \max_{r, V^{\text{new}}} \expect_{\tau\sim\gDE}[r(s_h,a_h)] - \expect_{s_1\sim\rho}[V^{\text{new}}(s_1)]
    \\
    &\; \; \mathrm{s.t.}   V^{\text{new}} (s_h) = \log(\sum_a \exp(r(s_h,a) + \expect_{s_{h+1}\sim P_h(\cdot|s_h,a)} [V^{\text{new}} (s_{h+1})])), \forall (h, s_h) \in [H-1] \times \gS_h,
    \\
    &\quad \quad  V^{\text{new}}(s_H) = \log(\sum_a \exp(r(s_H,a))),
    \\
    &\quad \quad 0 \leq r (s_h, a) \leq 1, \forall (h, s_h, a) \in [H] \times \gS_h \times \gA. \nonumber
\end{align*}
Up to now, we have derived the dual V-function distribution matching problem in \cref{eq:v-dm}. Then we start to derive the dual Q-function distribution matching problem in \cref{eq:q-dm}. In particular, we introduce the Q-function: 
\begin{align*}
    \forall (s_h,a,h)\in \gS_h\times \gA \times [H-1],Q(s_h,a) &= r(s_h,a) + \expect_{s_{h+1}\sim P_h(\cdot|s_h,a)} [V^{\text{new}} (s_{h+1})], Q(s_H,a) = r(s_H,a).
\end{align*}
We plug the above equation into the formulation of dual V-function distribution matching, yielding dual Q-function distribution matching in \cref{eq:q-dm}.
\begin{align*}
    &\max_{Q}\;
    \mathbb{E}_{\tau \sim \gDE}
    \Bigg[
      \sum_{h=1}^H Q(s_h,a_h)
      - \mathbb{E}_{s' \sim P(\cdot\mid s_h,a_h)}
        \big[ \LSE(Q)(s') \big]
    \Bigg]
    - \mathbb{E}_{s_1 \sim \rho}
      \big[ \LSE(Q)(s_1) \big]
    \\
    &\; \; \mathrm{s.t.}\;
    0 \le
    Q(s_h,a)
    - \mathbb{E}_{s' \sim P(\cdot\mid s_h,a)}
      \big[ \LSE(Q)(s') \big]
    \le 1, (h,s_h,a)\in [H]\times \gS_h \times \gA.
\end{align*}

\subsection{Proof of \cref{thm:equivalence_ail_q_dm}}
\label{subsec:proof_of_thm:equivalence_ail_q_dm}
To prove \cref{thm:equivalence_ail_q_dm}, we establish the equivalence among AIL in \cref{eq:ail_minimax}, Dual V-DM in \cref{eq:v-dm} and Dual Q-DM in \cref{eq:q-dm}.

\begin{prop}
\label{prop:equivalence_ail_v_dm}
    Suppose that $(\widehat{r}, \widehat{V})$ is the optimal solution to Dual V-DM in \cref{eq:v-dm} and $\widehat{\pi}$ is the deriving policy, i.e., 
    \begin{align*}
        \forall h \in [H], s_h \in \gS_h, a \in \gA, \widehat{\pi} (a|s_h)  \propto \exp \lp \widehat{r} (s_h, a) + \expect_{s^\prime \sim P (\cdot|s_h, a)} \ls \widehat{V} (s^\prime) \rs \rp .
    \end{align*}
    Then $\widehat{\pi}$ is the optimal solution to AIL in \cref{eq:ail_minimax}.
\end{prop}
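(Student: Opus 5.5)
The plan is to prove the proposition through the max–min (dual) form of AIL, using strong duality for the minimax problem and the closed form of maximum-entropy RL.

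\textbf{Step 1: reduce AIL to a reward-only problem.} The AIL objective in \cref{eq:ail_minimax} is linear in $r$ over the compact convex set $\gR$. Written in terms of the occupancy measure $d^\pi$, it is convex in $d^\pi$, because the negative causal entropy $-\widebar{\gH}$ is convex and the set of feasible occupancy measures is a polytope. A minimax theorem (Sion) therefore lets us swap min and max. For fixed $r$, the inner problem $\min_\pi\bigl(-\expect_\pi[\sum_h r + \gH]\bigr)$ is maximum-entropy RL. Its unique solution is the soft-optimal policy $\pi_r(a|s_h)\propto \exp\bigl(r(s_h,a)+\expect_{s'}[V_r(s')]\bigr)$, where $V_r$ solves the soft Bellman recursion $V_r(s_h)=\LSE(r+PV_r)(s_h)$ backward from $h=H$. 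Its optimal value is $-\expect_{s_1\sim\rho}[V_r(s_1)]$. Substituting this in shows that the dual of AIL is
\[
\max_{r\in\gR}\ \expect_{\gDE}\Bigl[\sum_h r(s_h,a_h)\Bigr]-\expect_{\rho}[V_r(s_1)].
\]
The Bellman equality in \cref{eq:v-dm} forces $V=V_r$, so this is exactly Dual V-DM.

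\textbf{Step 2: recover the primal solution.} Let $(\widehat r,\widehat V)$ be optimal for Dual V-DM. By Step 1, $\widehat r$ is a maximin reward and $\widehat V=V_{\widehat r}$, so $\widehat\pi=\pi_{\widehat r}$. We must show that $\widehat\pi$ is also a minimizer of the outer problem, i.e., that $(\widehat\pi,\widehat r)$ is a saddle point. The argument uses strict convexity. In occupancy space, the map $d\mapsto \max_r L(d,r)$ equals $\sum_h D_{\TV}(\widehat{d^{\piE}_h},d_h)-\widebar{\gH}(d_h)$, which is strictly convex in the policy (conditional) coordinates. Hence the primal minimizer $\pi^\star$ is unique, and $(\pi^\star,\widehat r)$ is a saddle point by the minimax equality. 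The saddle property gives that $\pi^\star$ minimizes $L(\cdot,\widehat r)$. That inner problem has the unique minimizer $\pi_{\widehat r}$ because of the entropy term, so $\pi^\star=\pi_{\widehat r}=\widehat\pi$. States with zero visitation are unconstrained by the objective, so the identification there has to be stated up to reachable states (or under the convention that any policy agreeing on the support is optimal).

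\textbf{Main obstacle.} The delicate point is Step 2: passing from ``$\widehat r$ is dual-optimal'' to ``the policy it induces is primal-optimal''. This needs both strong duality (no gap) and uniqueness of the inner best response. I would first check the hypotheses of Sion's theorem in occupancy coordinates: $\gR$ is compact and convex, the occupancy polytope is compact, and the objective is continuous, convex in $d$ and linear in $r$. I would then invoke strict concavity of the causal entropy in the conditional coordinates, as in \citet{ho2016gail}, to get uniqueness. As a cross-check, the KKT derivation in \cref{subsec:Derivation_of_the_Primal_Dual_Framework} gives the same conclusion directly. There, stationarity gives $\log\bigl(d_h(s_h,a)/\sum_{a'}d_h(s_h,a')\bigr)= w(s_h,a)+\expect[V(s')]-V(s_h)$ wherever $\lambda=0$, which is exactly the softmax form of $\widehat\pi$ after the shift $r=w+\tfrac12$. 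So one can alternatively argue through Slater's condition and complementary slackness on the primal–dual pair.
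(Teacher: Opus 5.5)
Your proposal is correct and follows essentially the same route as the paper. Both rewrite Dual V-DM as $\max_{r\in\gR}\min_{\pi}\phi(\pi,r)$ using the soft Bellman value, apply Sion's theorem in occupancy-measure space to get strong duality, and conclude that any AIL minimizer forms a saddle point with $\widehat{r}$; that minimizer must then equal $\widehat{\pi}$ because the maximum-entropy best response is unique. Your claimed uniqueness of the primal minimizer is not needed, since any minimizer works. Your caveat about unreachable states is a legitimate refinement that the paper's ``unique solution'' assertion glosses over.
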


\begin{proof}
    Let $(\widehat{r}, \widehat{V})$ be the optimal solution to the Dual V-DM problem in \cref{eq:v-dm}. Let $\gR = \{r: r(s, a) \in [0, 1], \forall (s, a) \in \gS \times \gA \}$ denote the set of bounded reward functions. 
    
    In the Dual V-DM problem, the constraints imposed on $V$,
    \begin{align*}
        V (s_h) = \LSE (r+PV) (s_h), \quad \forall (h, s_h) \in [H] \times \gS_h,
    \end{align*}
    imply that $V$ is exactly the soft optimal value function with respect to the reward $r$, i.e., $V = V^{\star, \soft, r}$. Consequently, the expected initial value can be equivalently evaluated by solving the standard entropy-regularized reinforcement learning problem:
    \begin{align}
        \expect_{s_1 \sim \rho } \ls V (s_1) \rs = \max_{\pi\in\Pi} \expect_{\tau\sim \pi} \ls \sum_{h=1}^H \lp r(s_h,a_h)+  \mathcal{H}(\pi(\cdot|s_h)) \rp \rs. \label{eq:rl_equivalent}
    \end{align}
    Substituting \cref{eq:rl_equivalent} back into \cref{eq:v-dm}, we can reformulate the Dual V-DM problem as the following max-min optimization problem:
    \begin{equation}
        \max_{r\in \gR} \min_{\pi\in\Pi} \bigg( \expect_{\tau\sim \gDE} \ls \sum_{h=1}^H r(s_h,a_h) \rs - \expect_{\tau\sim \pi} \ls \sum_{h=1}^H \lp r(s_h,a_h)+  \mathcal{H}(\pi(\cdot|s_h)) \rp \rs \bigg). \label{eq:maxmin}
    \end{equation}
    Let $\phi (\pi, r)$ denote the objective function inside the parentheses of \cref{eq:maxmin}. The optimal reward $\widehat{r}$ is the solution to the outer maximization of \cref{eq:maxmin}, i.e., $\widehat{r} = \argmax_{r\in \gR} \underline{\phi} (r) $, where $\underline{\phi} (r) = \min_{\pi \in \Pi} \phi (\pi, r)$. Furthermore, the deriving policy $\widehat{\pi}$ is formulated by
    \begin{align*}
        \widehat{\pi} (a|s_h) = \frac{\exp \lp \widehat{r} (s_h, a) + \expect_{s^\prime \sim P (\cdot|s_h, a)} \ls \widehat{V} (s^\prime) \rs \rp}{\sum_{a^\prime \in \gA} \exp \lp \widehat{r} (s_h, a^\prime) + \expect_{s^\prime \sim P (\cdot|s_h, a^\prime)} \ls \widehat{V} (s^\prime) \rs \rp} = \frac{Q^{\star, \soft, \widehat{r}} (s, a)}{\sum_{a^\prime \in \gA} Q^{\star, \soft, \widehat{r}} (s, a^\prime) },   
    \end{align*}
    indicating that $\widehat{\pi}$ is the soft optimal policy regarding $\widehat{r}$. Therefore, we have that  $\widehat{\pi}$ is the solution to the inner minimization of \cref{eq:maxmin} given $\widehat{r}$, and this solution is unique due to the property of maximum entropy RL. 
    
     The AIL problem in \cref{eq:ail_minimax} is exactly the min-max counterpart: $\min_{\pi\in\Pi} \max_{r\in\gR} \phi (\pi, r)$. Let $\pi^{\ail} \in \argmin_{\pi \in \Pi} \widebar{\phi} (\pi), \widebar{\phi} (\pi) = \max_{r\in\gR} \phi (\pi, r) $ be the AIL's solution. Our target is to prove that $\widehat{\pi} = \pi^{\ail}$. We first prove that $(\pi^{\ail}, \widehat{r})$ is the saddle point of the min-max problem using the minimax theorem \citep{sion1958general}. In particular, we transition from the policy space to the state-action visitation distribution space (occupancy measure) $\mathrm{D}$, where
     \begin{align*}
         \mathrm{D} = \{ &\forall (h, s_h, a) \in [H] \times \gS_h \in \gA, d_h (s_h, a) \geq 0, 
         \\
         &\forall (h, s_{h+1}) \in [H-1] \times \gS_{h+1}, \sum_{s_h \in \gS_h} \sum_{a \in \gA} d_{h} (s_{h}, a) P (s_{h+1}| s_h, a) = d_{h+1} (s_{h+1})   \}.
     \end{align*}
     It is a well-known result that there exists a bijective mapping between the occupancy measure $d$ and the policy $\pi$ \citep{syed08lp}. Then we rewrite the objective function as
     \begin{align*}
         \phi (\pi, r) = \sum_{h=1}^H \sum_{(s_h,a) \in \gS_h \times \gA} \lp \widehat{d^{\piE}_h} (s_h, a)r(s_h ,a) \rp - \sum_{h=1}^H \sum_{(s_h,a) \in \gS_h \times \gA} \lp d^{\pi}_h(s_h,a)r(s_h,a) - \widebar{H} (d^{\pi}_h) \rp. 
     \end{align*}
     Accordingly, we introduce the objective function in the occupancy measure space,
     \begin{align*}
         J (d, r) = \sum_{h=1}^H \sum_{(s_h,a) \in \gS_h \times \gA} \lp \widehat{d^{\piE}_h} (s_h, a)r(s_h ,a) \rp - \sum_{h=1}^H \sum_{(s_h,a) \in \gS_h \times \gA} \lp d_h(s_h,a)r(s_h,a) -  \widebar{H} (d_h) \rp, 
     \end{align*}
     and consider the min-max problem in the occupancy measure space $\min_{d \in \mathrm{D}} \max_{r \in \gR}$. Observe that $J(d,r)$ is linear (and thus concave) in $r$ and strictly convex in $d^{\pi}$ according to \citep[Lemma 3.1]{ho2016gail}. Since the feasible domains $\mathrm{D}$ and $\gR$ are both compact and convex, Sion's minimax theorem \citep{sion1958general} applies, yielding strong duality:
    \begin{align}
        \min_{d \in \mathrm{D}} \max_{r\in \gR} J(d,r) = \max_{r\in \gR} \min_{d \in \mathrm{D}} J(d,r). \label{eq:minimax_duality}
    \end{align}
    This directly implies strong duality in the policy space.
\begin{align*}
    \min_{\pi \in \Pi} \max_{r\in \gR} \phi (\pi, r) = \min_{d \in \mathrm{D}} \max_{r\in \gR} J(d,r) = \max_{r\in \gR} \min_{d \in \mathrm{D}} J(d,r) = \max_{r\in \gR} \min_{\pi \in \Pi}  \phi (\pi, r).
\end{align*}
Recall that $\pi^{\ail} \in \argmin_{\pi \in \Pi} \widebar{\phi} (\pi)$ and $\widehat{r} \in \argmax_{r\in \gR} \underline{\phi} (r) $. Due to strong duality, we have that $(\pi^{\ail}, \widehat{r})$ constitutes a saddle point for $\phi (\pi,r)$, which implies:
\begin{align*}
    \pi^{\ail} \in \argmin_{\pi \in \Pi} \phi (\pi, \widehat{r}).
\end{align*}
Recall that given $\widehat{r}$, $\argmin_{\pi \in \Pi} \phi (\pi, \widehat{r})$ is a maximum entropy RL problem, which admits a unique optimal solution $\widehat{\pi}$. Then we have that $\pi^{\ail} = \widehat{\pi}$, which finishes the proof.
\end{proof}

\begin{prop}
\label{prop:equivalence_q_dm_v_dm}
    Suppose that $\widehat{Q}$ is the optimal solution to Dual Q-DM in \cref{eq:q-dm}, and $\widehat{r}$ and $\widehat{V}$ are the deriving reward and V-function, respectively, i.e., 
    \begin{align*}
        \forall h \in[H], s_h \in \gS_h, a \in \gA, \ &\widehat{r} (s_h, a) = \widehat{Q} (s_h, a) - \expect_{s^\prime \sim P (\cdot|s_h, a)} \ls \log \lp \sum_{a^\prime \in \gA} \exp \lp \widehat{Q} (s^\prime, a^\prime) \rp \rp \rs, 
        \\
        &\widehat{V} (s_h) = \log \lp \sum_{a \in \gA} \exp \lp \widehat{Q} (s_h, a) \rp \rp. 
    \end{align*}
    Then $(\widehat{r}, \widehat{V})$ is the optimal solution to Dual V-DM in \cref{eq:v-dm}.
\end{prop}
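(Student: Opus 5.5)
The plan is to show that the change of variables $Q \mapsto (r_Q, V_Q)$ is a bijection between the feasible set of Dual Q-DM and the feasible set of Dual V-DM. We will also show that this map preserves the objective value. Optimality of $\widehat{Q}$ then transfers directly to optimality of $(\widehat{r},\widehat{V})$.

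\textbf{Step 1: forward map.} Given any $Q$ feasible for \cref{eq:q-dm}, set
\[
r_Q(s_h,a) = Q(s_h,a) - \expect_{s'\sim P(\cdot|s_h,a)}[\LSE(Q)(s')], \qquad V_Q(s_h) = \LSE(Q)(s_h).
\]
The Bellman constraints \cref{eq:bellman_constraints} are exactly $0 \le r_Q \le 1$. Moreover, $Q(s_h,a) = r_Q(s_h,a) + \expect_{s'}[V_Q(s')]$ holds by definition. This gives $V_Q(s_h) = \LSE(r_Q + PV_Q)(s_h)$, so $(r_Q,V_Q)$ is feasible for \cref{eq:v-dm}. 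At the last layer the successor term is absent, with the convention $V\equiv 0$ beyond $\gS_H$, so $Q(s_H,\cdot) = r_Q(s_H,\cdot)$.

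\textbf{Step 2: inverse map.} Given $(r,V)$ feasible for \cref{eq:v-dm}, define $Q(s_h,a) = r(s_h,a) + \expect_{s'}[V(s')]$. The equality constraint gives $\LSE(Q)(s_h) = V(s_h)$, so $r_Q = r$, $V_Q = V$, and $Q$ satisfies the Bellman constraints. Hence the two maps are mutually inverse between the feasible sets. Note that $V$ is uniquely determined by $r$ through backward induction from layer $H$; this confirms that no feasible $(r,V)$ is missed.

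\textbf{Step 3: objective equality.} Substituting $r_Q$ into the Dual V-DM objective $\expect_{\tau\sim\gDE}[\sum_h r(s_h,a_h)] - \expect_{\rho}[V(s_1)]$ gives
\[
\expect_{\gDE}\Bigl[\sum_h Q(s_h,a_h) - \expect_{s'\sim P(\cdot|s_h,a_h)}[\LSE(Q)(s')]\Bigr] - \expect_{\rho}[\LSE(Q)(s_1)],
\]
which is precisely the Dual Q-DM objective. Therefore, if some feasible $(r,V)$ had a strictly larger V-DM value than $(\widehat{r},\widehat{V})$, its preimage $Q$ would strictly beat $\widehat{Q}$ in \cref{eq:q-dm}, contradicting optimality. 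This proves the claim.

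The main obstacle is bookkeeping rather than analysis. Two points need care. First, the terminal layer must be handled consistently, where the expectation over $s'$ is vacuous. Second, the paper's expectations over $P(\cdot|s_h,a_h)$ in the Q-objective must be matched with the V-form: the expert objective uses the expected successor log-partition, not a sampled one. Once the boundary convention is fixed, every step is an identity.
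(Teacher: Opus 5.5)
Your proposal is correct and takes essentially the same route as the paper. The paper also checks that $(\widehat{r},\widehat{V})$ is feasible via the Bellman constraints. It then sends an arbitrary feasible $(r,V)$ of Dual V-DM to $Q=r+PV$ and uses the equality constraint to obtain $\LSE(Q)=V$, so that $r_Q=r\in[0,1]$ and the objective values coincide. It concludes from the optimality of $\widehat{Q}$. Your full bijection and explicit terminal-layer convention go slightly beyond what the paper spells out, but they do not change the argument.
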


\begin{proof}
We first show that $(\widehat{r}, \widehat{V})$ is a feasible solution to Dual V-DM in \cref{eq:v-dm}. According to Bellman constraints in \cref{eq:bellman_constraints}, we have that $0 \leq \widehat{r} (s_h, a) \leq 1, \forall (h, s_h, a) \in [H] \times \gS_h \times \gA$. Besides, we have that $\widehat{Q}$ is the soft optimal Q-function regarding $\widehat{r}$ and $\widehat{V}$ is the corresponding soft optimal V-function. This implies that $\widehat{V}$ satisfies the soft Bellman optimality equation regarding $\widehat{r}$, i.e., 
\begin{align*}
    \widehat{V} (s_h) = \LSE (\widehat{r}+P\widehat{V}) (s_h), \forall (h, s_h) \in [H] \times \gS_h.
\end{align*}
Together, we have that $(\widehat{r}, \widehat{V})$ is a feasible solution to Dual V-DM in \cref{eq:v-dm}.

Then we prove that $(\widehat{r}, \widehat{V})$ is an optimal solution to Dual V-DM in \cref{eq:v-dm}. According to the connection between $\widehat{Q}$ and $(\widehat{r}, \widehat{V})$, we have that
 \begin{align*}
     &\quad \expect_{\tau \sim \gDE} \ls \sum_{h=1}^H \widehat{r} (s_h, a_h) \rs - \expect_{s_1 \sim \rho } \ls \widehat{V} (s_1) \rs
     \\
     &= \expect_{\gDE} \ls \sum_{h=1}^H \widehat{Q} (s_h, a_h) - \expect_{s^\prime \sim P (\cdot|s_h, a_h)} \ls \log \lp \sum_{a^\prime \in \gA} \exp \lp \widehat{Q}_{h+1} (s^\prime, a^\prime) \rp \rp \rs \rs 
     \\
     &\; - \expect_{s_1 \sim \rho} \ls \log \lp \sum_{a^\prime \in \gA} \exp \lp \widehat{Q} (s_1, a^\prime) \rp \rp \rs.
 \end{align*}
 For any feasible solution $(r, V)$ to Dual V-DM in \cref{eq:v-dm}, we define the corresponding Q-function as
 \begin{align*}
     \forall (h, s_h, a) \in [H] \times \gS_h \times \gA,  Q (s_h, a) = r (s_h, a) + \expect_{s^\prime \sim P_h (\cdot|s_h, a)} \ls V (s^\prime) \rs.
 \end{align*}
 Notice that $V$ is feasible and thus satisfies that
 \begin{align*}
    \forall (h, s_h) \in [H] \times \gS_h, V (s_h) = \log \lp \sum_{a \in \gA} \exp \lp r (s_h, a) + \expect_{s^\prime \sim P (\cdot|s_h, a)} \ls V (s^\prime) \rs \rp \rp = \log \lp \sum_{a \in \gA} \exp \lp Q (s_h, a) \rp \rp.
 \end{align*}
 Then we can have that $\forall (h, s_h, a) \in [H] \times \gS_h \times \gA$, 
 \begin{align*}
      Q (s_h, a) - \expect_{s^\prime \sim P (\cdot|s_h, a)} \ls \log \lp \sum_{a^\prime \in \gA} \exp \lp Q (s^\prime, a^\prime) \rp \rp \rs = Q (s_h, a) - \expect_{s^\prime \sim P (\cdot|s_h, a)} \ls V (s^\prime) \rs = r (s_h, a) \in [0, 1].  
 \end{align*}
 Therefore, $Q$ is feasible w.r.t Dual Q-DM in \cref{eq:q-dm}. Then for any feasible solution $(r, V)$ to Dual V-DM in \cref{eq:v-dm}, it holds that
 \begin{align*}
     &\quad \expect_{\tau \sim \gDE} \ls \sum_{h=1}^H \widehat{r} (s_h, a_h) \rs - \expect_{s_1 \sim \rho } \ls \widehat{V} (s_1) \rs
     \\
     &= \expect_{\gDE} \ls \sum_{h=1}^H \widehat{Q} (s_h, a_h) - \expect_{s^\prime \sim P (\cdot|s_h, a_h)} \ls \log \lp \sum_{a^\prime \in \gA} \exp \lp \widehat{Q} (s^\prime, a^\prime) \rp \rp \rs \rs - \expect_{s_1 \sim \rho} \ls \log \lp \sum_{a^\prime \in \gA} \exp \lp \widehat{Q} (s_1, a^\prime) \rp \rp \rs
     \\
     & \overset{\text{(a)}}{\geq} \expect_{\gDE} \ls \sum_{h=1}^H Q (s_h, a_h) - \expect_{s^\prime \sim P (\cdot|s_h, a_h)} \ls \log \lp \sum_{a^\prime \in \gA} \exp \lp Q (s^\prime, a^\prime) \rp \rp \rs \rs - \expect_{s_1 \sim \rho} \ls \log \lp \sum_{a^\prime \in \gA} \exp \lp Q (s_1, a^\prime) \rp \rp \rs
     \\
     &= \expect_{\tau \sim \gDE} \ls \sum_{h=1}^H r_h (s_h, a_h) \rs - \expect_{s_1 \sim \rho } \ls V (s_1) \rs.
 \end{align*}
 Inequality $\text{(a)}$ follows that $\widehat{Q}$ is the optimal solution to Dual Q-DM in \cref{eq:q-dm} and $Q$ is a feasible solution. This implies that $(\widehat{r}, \widehat{V})$ is an optimal solution to Dual V-DM in \cref{eq:v-dm}, which finishes the proof.
\end{proof}

With \cref{prop:equivalence_ail_v_dm} and \cref{prop:equivalence_q_dm_v_dm}, we can now prove \cref{thm:equivalence_ail_q_dm}.
\begin{proof}[Proof of \cref{thm:equivalence_ail_q_dm}]
    Suppose that $\widetilde{Q}$ is the optimal solution to Dual Q-DM in \cref{eq:q-dm}, we define that
    \cref{prop:equivalence_q_dm_v_dm}:
    \begin{align*}
        \forall h \in[H], s_h \in \gS_h, a \in \gA, \ &\widetilde{r} (s_h, a) = \widetilde{Q} (s_h, a) - \expect_{s^\prime \sim P (\cdot|s_h, a)} \ls \log \lp \sum_{a^\prime \in \gA} \exp \lp \widetilde{Q} (s^\prime, a^\prime) \rp \rp \rs, 
        \\
        &\widetilde{V} (s_h) = \log \lp \sum_{a \in \gA} \exp \lp \widetilde{Q} (s_h, a) \rp \rp.
    \end{align*}
    According to \cref{prop:equivalence_q_dm_v_dm}, $(\widetilde{r}, \widetilde{V})$ is the optimal solution to Dual V-DM. \cref{prop:equivalence_ail_v_dm} further implies that
    \begin{align*}
        \forall h \in [H], s_h \in \gS_h, a \in \gA, \widetilde{\pi} (a|s_h)  \propto \exp \lp \widetilde{r} (s_h, a) + \expect_{s^\prime \sim P (\cdot|s_h, a)} \ls \widetilde{V} (s^\prime) \rs \rp
    \end{align*}
     is the optimal solution to AIL in \cref{eq:ail_minimax}. Since $\widetilde{r} (s_h, a) = \widetilde{Q} (s_h, a) - \expect_{s^\prime \sim P (\cdot|s_h, a)} \ls \log \lp \sum_{a^\prime \in \gA} \exp \lp \widetilde{Q} (s^\prime, a^\prime) \rp \rp \rs=\widetilde{Q}(s_h,a)-\expect_{s'\sim P(\cdot|s_h,a)}\ls  \widetilde{V}(s') \rs$, $\widetilde{\pi}$ satisfies that
    \begin{align*}
        \forall h \in [H], s_h \in \gS_h, a \in \gA, \widetilde{\pi} (a|s_h)  \propto \exp \lp \widetilde{Q}(s_h,a) \rp ,
    \end{align*}
    which is exactly the softmax policy $\pi_{\widetilde{Q}} =\text{softmax}(\widetilde{Q})$. This implies that $\pi_{\widetilde{Q}} $ is the optimal solution to AIL in \cref{eq:ail_minimax}, which finishes the proof.

\end{proof}

\subsection{Proof of \cref{prop:generalization}}
\label{subsec:proof_of_prop:generalization}
To prove \cref{prop:generalization}, we first establish the following technical lemma.

\begin{lem}
\label{lem:tight_constraints}
    Suppose that $\widetilde{Q}$ is the Q-function learned by Dual Q-DM via \cref{eq:q-dm}. It holds that
    \begin{itemize}
        \item $\forall h \in [H]$, $\exists (s^{\expert}_h, a^{\expert}_h) \in \gDE$ such that $\widetilde{Q} (s^{\expert}_h, a^{\expert}_h) - \expect_{s^\prime \sim P (\cdot|s^{\expert}_h, a^{\expert}_h)} \ls \LSE (\widetilde{Q}) (s^\prime) \rs = 1$.
        \item $\forall h \in [H]$, $\forall (s_h, a_h) \notin \gDE$ such that $\widetilde{Q} (s_h, a_h) - \expect_{s^\prime \sim P (\cdot|s_h, a_h)} \ls \LSE (\widetilde{Q}) (s^\prime) \rs = 0$. 
    \end{itemize}
\end{lem}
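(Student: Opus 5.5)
The plan is to reparameterize Dual Q-DM in terms of the induced reward $r_Q(s_h,a) := Q(s_h,a) - \expect_{s'\sim P(\cdot|s_h,a)}[\LSE(Q)(s')]$, exactly as in the proof of \cref{prop:equivalence_q_dm_v_dm}. The map $Q \mapsto r_Q$ is a bijection between feasible $Q$ (those satisfying \cref{eq:bellman_constraints}) and rewards $r \in \gR$. Its inverse is the backward soft Bellman recursion $Q(s_H,a)=r(s_H,a)$ and $Q(s_h,a) = r(s_h,a)+\expect_{s'}[\LSE(Q)(s')]$. Telescoping as in Section~\ref{sec:equivalence}, the Dual Q-DM objective becomes
\[
J(r) = \sum_{h=1}^H \expect_{(s,a)\sim \widehat{d^{\piE}_h}}[r(s,a)] - \expect_{s_1\sim\rho}[V^{\star,\soft,r}(s_1)],
\]
so $\widetilde{Q}$ corresponds to a maximizer $\widetilde r = r_{\widetilde{Q}}$ of $J$ over the box $\gR$. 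Two standard facts about entropy-regularized RL then drive the argument. First, $r\mapsto \expect_\rho[V^{\star,\soft,r}(s_1)]$ is convex, being a composition of log-sum-exps of affine maps, so $J$ is concave. Second, its gradient is $\partial \expect_\rho[V^{\star,\soft,r}(s_1)]/\partial r(s_h,a) = d^{\pi_r}_h(s_h,a)$, where $\pi_r=\softmax(Q^{\star,\soft,r})$ has full support. Consequently
\[
\frac{\partial J}{\partial r(s_h,a)} = \widehat{d^{\piE}_h}(s_h,a) - d^{\pi_r}_h(s_h,a).
\]

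\emph{Second bullet.} Take $(s_h,a)\notin\gDE$ and suppose $\widetilde r(s_h,a)>0$. There the partial derivative is $-d^{\widetilde\pi}_h(s_h,a)$. If $s_h$ is reachable, this is strictly negative, because the softmax policy is fully supported. Lowering $\widetilde r(s_h,a)$ slightly stays feasible and, by concavity, strictly increases $J$, which contradicts optimality. Hence $\widetilde r(s_h,a)=0$, which is exactly $\widetilde{Q}(s_h,a)-\expect_{s'}[\LSE(\widetilde{Q})(s')]=0$.

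\emph{First bullet.} Fix $h$ and suppose every demonstrated pair at step $h$ has $\widetilde r<1$. Let $D_h$ be the set of step-$h$ pairs in $\gDE$, and increase $\widetilde r$ by $\delta>0$ on $D_h$ only; this is feasible for small $\delta$. The directional derivative is
\[
\sum_{(s,a)\in D_h}\widehat{d^{\piE}_h}(s,a) - \sum_{(s,a)\in D_h} d^{\widetilde\pi}_h(s,a) = 1 - d^{\widetilde\pi}_h(D_h).
\]
When $|\gA|\ge 2$, full support of $\widetilde\pi$ puts positive mass on non-demonstrated actions at a reachable demonstrated state, so $d^{\widetilde\pi}_h(D_h)<1$ and the derivative is strictly positive. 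Concavity of $J$ then gives a strict improvement for small $\delta$, a contradiction. So some $(s^{\expert}_h,a^{\expert}_h)\in\gDE$ attains $\widetilde r=1$.

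\emph{Main obstacle.} The delicate step is strictness. The derivatives above vanish at pairs the learner cannot reach, namely $d^{\widetilde\pi}_h(s_h,\cdot)=0$, which happens for states with $\rho$-mass zero or unreachable under the dynamics. At such pairs $J$ does not depend on $\widetilde r$, and the optimum is not unique there. I would handle this as the proof of \cref{thm:iq_learn_equivalence} does: restrict the claim to reachable pairs. Alternatively, note that setting $\widetilde r=0$ on unreachable pairs preserves optimality, and select that optimizer. I would also verify carefully the gradient identity, i.e.\ the soft-RL envelope theorem in the layered finite-horizon setting.
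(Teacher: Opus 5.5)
Your proof is correct. For the second bullet it coincides with the paper's: the paper also passes to the reward-space problem (\cref{lem:equivalent_qdm_rdm}) and uses the gradient $\widehat{d^{\piE}_h}(s_h,a_h) - d^{\pi^{\star,\soft,r}}_h(s_h,a_h)$, which is negative off the demonstrations.

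For the first bullet the paper uses the same perturbation as you. Its $\overline{Q}$ is the soft-optimal Q-function of $r_{\widetilde Q}$ raised by the minimal slack $\delta=\min_{D_h}(1-r_{\widetilde Q})$ on $D_h$. Where you differ is in controlling the change of $\expect_{\rho}[\LSE(\overline{Q})(s_1)]$. The paper does this by a finite-difference backward induction, \cref{lem:q_backward}: a log-sum-exp increases by at most the largest coordinate increase, and strictly less when some action is left unchanged. Hence the initial value rises by less than $\delta$ while the expert term rises by exactly $\delta$. Your envelope identity replaces that induction by the single inequality $d^{\widetilde\pi}_h(D_h)<1$. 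It also turns both bullets into first-order conditions of one concave program over a box, which makes the source of strictness visible. The paper's version avoids differentiating the soft value and stays in Q-space, in line with its value-flow narrative. The cost is an auxiliary lemma stated for product sets $\gS^{\uparrow}_{h+1}\times\gA^{\uparrow}$ and applied to $D_h$ somewhat loosely. You also do not need concavity: a strictly positive directional derivative along a feasible direction already gives a local improvement.

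Two remarks on hypotheses. Your reachability concern is settled by the paper's layered convention that $\gS_h$ is the set of states reachable at step $h$. A fully supported policy then gives $d^{\widetilde\pi}_h(s_h,a)>0$ for every pair, which is what the paper tacitly uses.

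For the first bullet, however, $|\gA|\ge 2$ alone does not suffice. You need some step-$h$ state carrying an undemonstrated action. This holds for the deterministic expert of \cref{prop:generalization} but can fail for a stochastic one. For example, take $H=1$, a single state and two actions demonstrated equally often; then every $r\equiv c\in[0,1]$ is optimal, so no demonstrated pair need have $r=1$. The paper's proof needs exactly the same condition, through the strict-subset case of \cref{lem:q_backward}. So this is an omission in the lemma's statement, not in your approach.
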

The proof can be found in \cref{subsec:proof_of_lem1_tight_constraints}. Now, we proceed to prove \cref{prop:generalization}.

\begin{proof}[Proof of \cref{prop:generalization}]
    Suppose that $\widetilde{Q}$ is the Q-function recovered by Dual Q-DM, and $r_{\widetilde{Q}} (s_h, a_h) := \widetilde{Q} (s_h, a_h) - \expect_{s^\prime \sim P (\cdot|s_h, a_h)} [ \LSE (\widetilde{Q}) (s^\prime) ]$ and $\widetilde{V} (s_h) = \LSE (\widetilde{Q}) (s_h)$ are the derived V-functions and reward functions, respectively. We use backward mathematical induction to prove that the solution of Dual Q-DM satisfies the following two properties:
    \begin{enumerate}
        \item $\forall h\in[H-1], s_h\in \gS_h, a \not= \piE(s_h), \widetilde{Q}(s_h, \piE(s_h))>\widetilde{Q}(s_h,a)$.
        \item  $\forall h\in [H-1], s_{h}^{\expert}\in \gS_{h}^{\expert}, s_h\notin \gS_{h}^{\expert}, \widetilde{V}(s^{\expert}_h)\geq \widetilde{V}(s_h)$ and $\exists \widehat{s_h^{\expert}}\in \gS_h^{\expert} \text{ such that } \forall s_h\notin \gS_h^{\expert}, \widetilde{V}(\widehat{s^{\expert}_h})>\widetilde{V}(s_h)$. Here $\gS^{\expert}_{h+1} = \{s_{h+1}: \exists s_{h} \in \gS_{h}, P (s_{h+1} | s_{h}, \piE (s_{h})) > 0 \}$ and $\gS^{\expert}_{1} = \{s_1 \in \gS_1: \rho (s_1) > 0 \}$ denote the set of expert-reachable states.
    \end{enumerate}

    \textbf{Initialization (at time step $H$):} Before proceeding to the base case at time step $H-1$, we first show that property (2) holds at time step $H$. According to \cref{lem:tight_constraints}, $\forall s_H\notin \gS^{\expert}_H, r_{\widetilde{Q}}(s_H,a)=0$, thus
    \begin{align*}
        \widetilde{V}(s_H) &= \log\left(\sum_a \exp(r_{\widetilde{Q}}(s_H,a))\right) = \log(|\gA|).
    \end{align*}
    That is, the V-function attains the lowest value of $\log(|\gA|)$ at any state uncovered by demonstrations. Meanwhile, \cref{lem:tight_constraints} shows that $\exists \widehat{s^{\expert}_H}\in \gS^{\expert}_H, r_{\widetilde{Q}}(\widehat{s^{\expert}_H},a) = 1$, which implies $\widetilde{V}(\widehat{s^{\expert}_H}) > \log(|\gA|)$. Therefore, at time step $H$, we have $\widetilde{V}(s_H^{\expert})\geq \widetilde{V}(s_H)$ for all $s_H^{\expert}\in \gS_H^{\expert}, s_H\notin \gS_H^{\expert}$, and there strictly exists $\widehat{s^{\expert}_H}\in \gS^{\expert}_H$ such that $\widetilde{V}(\widehat{s_H^{\expert}})>\widetilde{V}(s_H)$ for all $s_H\notin \gS^{\expert}_H$.
    
    \textbf{Base Case (at time step $H-1$):} We first prove that property (1) holds at time step $H-1$. Notice that $\widetilde{Q}$ satisfies the following Bellman optimality equation. 
    \begin{align*}
        \widetilde{Q}(s_{H-1},a^{\expert}) &= r_{\widetilde{Q}}(s_{H-1},a^{\expert}) + \expect_{s^\prime \sim P(\cdot|s_{H-1},a^{\expert})}[\widetilde{V}(s^\prime)], \\
        \widetilde{Q}(s_{H-1},a) &= r_{\widetilde{Q}}(s_{H-1},a) + \expect_{s^\prime \sim P(\cdot|s_{H-1},a)}[\widetilde{V}(s^\prime)], \quad \forall a\neq a^{\expert}.
    \end{align*}
    \cref{lem:tight_constraints} indicates that $\forall a \not= a^{\expert}, r_{\widetilde{Q}}(s_{H-1},a^{\expert}) \geq r_{\widetilde{Q}}(s_{H-1},a) = 0$. Then we have:
    \begin{align}
        &\quad \widetilde{Q}(s_{H-1},a^{\expert}) - \widetilde{Q}(s_{H-1},a) \notag \\
        &\geq \expect_{s^\prime \sim P(\cdot|s_{H-1},a^{\expert})}[\widetilde{V}(s^\prime)] - \expect_{s^\prime \sim P(\cdot|s_{H-1},a)}[\widetilde{V}(s^\prime)] \notag \\
        &= \sum_{s^\prime\in \gS^{\expert}_H} P(s^\prime|s_{H-1},a^{\expert})\widetilde{V}(s^\prime) - \sum_{s^\prime\in \gS_H} P(s^\prime|s_{H-1},a)\widetilde{V}(s^\prime) \notag \\
        &= P(\widehat{s^{\expert}_H}|s_{H-1},a^{\expert})\widetilde{V}(\widehat{s^{\expert}_H}) + \sum_{\substack{s^{\expert}_H\in \gS^{\expert}_H\\s^{\expert}_H\neq \widehat{s^{\expert}_H}}} P(s^{\expert}_H|s_{H-1},a^{\expert})\widetilde{V}(s^{\expert}_H) \notag \\
        &\quad - P(\widehat{s^{\expert}_H}|s_{H-1},a)\widetilde{V}(\widehat{s^{\expert}_H}) - \sum_{\substack{s^{\expert}_H\in \gS^{\expert}_H\\s^{\expert}_H\neq \widehat{s^{\expert}_H}}} P(s^{\expert}_H|s_{H-1},a)\widetilde{V}(s^{\expert}_H) - \sum_{s_H\notin \gS^{\expert}_H} P(s_H|s_{H-1},a)\widetilde{V}(s_H) \notag \\
        &= \widetilde{V}(\widehat{s^{\expert}_H})\left(P(\widehat{s^{\expert}_H}|s_{H-1},a^{\expert}) - P(\widehat{s^{\expert}_H}|s_{H-1},a)\right) \notag \\
        &\quad + \sum_{\substack{s^{\expert}_H\in \gS^{\expert}_H\\s^{\expert}_H\neq \widehat{s^{\expert}_H}}} \left(P(s^{\expert}_H|s_{H-1},a^{\expert}) - P(s^{\expert}_H|s_{H-1},a)\right)\widetilde{V}(s^{\expert}_H) - \sum_{s_H\notin \gS^{\expert}_H} P(s_H|s_{H-1},a)\widetilde{V}(s_H) \notag \\
        &\overset{\text{(a)}}{\geq} \widetilde{V}(\widehat{s^{\expert}_H})\left(P(\widehat{s^{\expert}_H}|s_{H-1},a^{\expert}) - P(\widehat{s^{\expert}_H}|s_{H-1},a)\right) \notag \\
        &\quad + \lp \min_{\substack{s^{\expert}_H\in \gS^{\expert}_H\\s^{\expert}_H\neq \widehat{s^{\expert}_H}}} \widetilde{V}(s^{\expert}_H) \rp \sum_{\substack{s^{\expert}_H\in \gS^{\expert}_H\\s^{\expert}_H\neq \widehat{s^{\expert}_H}}} \left(P(s^{\expert}_H|s_{H-1},a^{\expert}) - P(s^{\expert}_H|s_{H-1},a)\right) - \lp \max_{s_H\notin \gS^{\expert}_H} \widetilde{V}(s_H) \rp \sum_{s_H\notin \gS^{\expert}_H} P(s_H|s_{H-1},a) \notag \\
        &\overset{\text{(b)}}{\geq} \widetilde{V}(\widehat{s^{\expert}_H})\left(P(\widehat{s^{\expert}_H}|s_{H-1},a^{\expert}) - P(\widehat{s^{\expert}_H}|s_{H-1},a)\right) \notag \\
        &\quad + \lp \max_{s_H\notin \gS^{\expert}_H} \widetilde{V}(s_H) \rp \sum_{\substack{s^{\expert}_H\in \gS^{\expert}_H\\s^{\expert}_H\neq \widehat{s^{\expert}_H}}} \left(P(s^{\expert}_H|s_{H-1},a^{\expert}) - P(s^{\expert}_H|s_{H-1},a)\right) - \lp \max_{s_H\notin \gS^{\expert}_H}\widetilde{V}(s_H) \rp \sum_{s_H\notin \gS^{\expert}_H} P(s_H|s_{H-1},a) \notag \\
        &= \widetilde{V}(\widehat{s^{\expert}_H})\left(P(\widehat{s^{\expert}_H}|s_{H-1},a^{\expert}) - P(\widehat{s^{\expert}_H}|s_{H-1},a)\right) \notag \\
        &\quad + \max_{s_H\notin \gS^{\expert}_H}\widetilde{V}(s_H) \left( \sum_{\substack{s^{\expert}_H\in \gS^{\expert}_H\\s^{\expert}_H\neq \widehat{s^{\expert}_H}}} \left(P(s^{\expert}_H|s_{H-1},a^{\expert}) - P(s^{\expert}_H|s_{H-1},a)\right) - \sum_{s_H\notin \gS^{\expert}_H} P(s_H|s_{H-1},a) \right) \notag \\
        &\overset{\text{(c)}}{=} \delta \left(\widetilde{V}(\widehat{s^{\expert}_H}) - \max_{s_H\notin \gS^{\expert}_H}\widetilde{V}(s_H)\right) \notag \\
        &\overset{\text{(d)}}{>} 0. \notag
    \end{align}
    Inequality (a) holds because in TD MDPs, $\forall h \in [H-1], s_{h} \in \gS_{h}, s^{\expert}_{h+1} \in \gS^{\expert}_{h+1}$, $P (s^{\expert}_{h+1} | s_h, \piE (s_h)) > P (s^{\expert}_{h+1} | s_h, a), \forall a \not= \piE (s_h)$. Inequality (b) holds because we have established that $\widetilde{V}(s_H^{\expert}) \geq \widetilde{V}(s_H)$ for all $s_H^{\expert}\in \gS_H^{\expert}, s_H\notin \gS_H^{\expert}$. For equation (c), we define $\delta = P(\widehat{s^{\expert}_H}|s_{H-1},a^{\expert}) - P(\widehat{s^{\expert}_H}|s_{H-1},a)$. Because the sum of transition probabilities equals $1$, we know that:
    \begin{equation*}
        \delta + \sum_{\substack{s^{\expert}_H\in \gS^{\expert}_H\\s^{\expert}_H\neq \widehat{s^{\expert}_H}}} \left(P(s^{\expert}_H|s_{H-1},a^{\expert}) - P(s^{\expert}_H|s_{H-1},a)\right) - \sum_{s_H\notin \gS^{\expert}_H} P(s_H|s_{H-1},a) = 1 - 1 = 0,
    \end{equation*}
    which seamlessly justifies substitution (c). To prove the strict inequality (d), we have both $\delta = P(\widehat{s^{\expert}_H}|s_{H-1},a^{\expert}) - P(\widehat{s^{\expert}_H}|s_{H-1},a) > 0$ due to property of TD MDPs and $\forall s_H\notin \gS^{\expert}_H, \widetilde{V}(\widehat{s^{\expert}_H}) > \widetilde{V}(s_H)$ and $\delta > 0$ due to our analysis at the initialization stage. Thus, property (1) holds at step $H-1$.
    
    Next, we verify property (2) at step $H-1$. By definition, $\widetilde{V}(s_h) = \log\left(\sum_{a\in \gA} \exp(\widetilde{Q}(s_h,a))\right)$. We prove property (2) by showing that $\forall s^{\expert}_{H-1}\in \gS^{\expert}_{H-1}, s_{H-1}\notin \gS^{\expert}_{H-1}, a\in\gA, \widetilde{Q}(s^{\expert}_{H-1},a) \geq \widetilde{Q}(s_{H-1},a)$. Using the definition of the Q-function and knowing $r_{\widetilde{Q}}(s_{H-1},a)=0$, we have that
    \begin{align}
        & \quad \widetilde{Q}(s^{\expert}_{H-1},a) - \widetilde{Q}(s_{H-1},a) \notag \\
        &\overset{\text{(a)}}{\geq} \expect_{s^\prime\sim P(\cdot|s^{\expert}_{H-1},a)}[\widetilde{V}(s^\prime)] - \expect_{s^\prime\sim P(\cdot|s_{H-1},a)}[\widetilde{V}(s^\prime)] \notag \\
        &= \sum_{s^{\expert}_H\in\gS^{\expert}_H} P(s^{\expert}_H|s^{\expert}_{H-1},a)\widetilde{V}(s^{\expert}_H) - \sum_{s_H\in\gS_H} P(s_H|s_{H-1},a)\widetilde{V}(s_H) \notag \\
        &= \sum_{s^{\expert}_H\in\gS^{\expert}_H} P(s^{\expert}_H|s^{\expert}_{H-1},a)\widetilde{V}(s^{\expert}_H) - \sum_{s^{\expert}_H\in\gS^{\expert}_H} P(s^{\expert}_H|s_{H-1},a)\widetilde{V}(s^{\expert}_H) - \sum_{s_H\notin \gS^{\expert}_H} P(s_H|s_{H-1},a)\widetilde{V}(s_H) \notag \\
        &= \sum_{s^{\expert}_H\in\gS^{\expert}_H} \left(P(s^{\expert}_H|s^{\expert}_{H-1},a) - P(s^{\expert}_H|s_{H-1},a)\right)\widetilde{V}(s^{\expert}_H) - \sum_{s_H\notin \gS^{\expert}_H} P(s_H|s_{H-1},a)\widetilde{V}(s_H) \notag \\
        &\geq \lp \min_{s^{\expert}_H\in\gS^{\expert}_H}\widetilde{V}(s^{\expert}_H) \rp \sum_{s^{\expert}_H\in\gS^{\expert}_H} \left(P(s^{\expert}_H|s^{\expert}_{H-1},a) - P(s^{\expert}_H|s_{H-1},a)\right) - \lp \max_{s_H\notin \gS^{\expert}_H}\widetilde{V}(s_H) \rp \sum_{s_H\notin \gS^{\expert}_H} P(s_H|s_{H-1},a) \notag \\
        &\overset{\text{(b)}}{=} \gamma \left(\min_{s^{\expert}_H\in\gS^{\expert}_H}\widetilde{V}(s^{\expert}_H) - \max_{s_H\notin \gS^{\expert}_H}\widetilde{V}(s_H)\right) \notag \\
        &\overset{\text{(c)}}{\geq} 0. \notag
    \end{align}
    Inequality (a) holds because \cref{lem:tight_constraints} shows that the induced reward $r_{\widetilde{Q}}$ attains the lowest value 0 at any unvisited state $s_{H-1}$. Equation (b) defines $\delta = \sum_{s^{\expert}_H\in\gS^{\expert}_H}\left(P(s^{\expert}_H|s^{\expert}_{H-1},a) - P(s^{\expert}_H|s_{H-1},a)\right)$, which equals $\sum_{s_H\notin \gS^{\expert}_H} P(s_H|s_{H-1},a)$ because probabilities sum to $1$. The property of TD MDPs ensures that $\delta = \sum_{s^{\expert}_H\in\gS^{\expert}_H}\left(P(s^{\expert}_H|s^{\expert}_{H-1},a) - P(s^{\expert}_H|s_{H-1},a)\right) \geq 0 $. Besides, we have established that $\forall s^{\expert}_H\in\gS^{\expert}_H, s_H\notin \gS^{\expert}_H, \widetilde{V} (s^{\expert}_H) \geq \widetilde{V} (s_H)$. Together, we can prove inequality (c).

    Furthermore, \cref{lem:tight_constraints} indicates that there exists $\widehat{s^{\expert}_{H-1}}\in \gS^{\expert}_{H-1}$ and $a^{\expert}_{H-1}$ such that $r_{\widetilde{Q}}(\widehat{s^{\expert}_{H-1}},a^{\expert}_{H-1})=1$. In this case, inequality (a) strictly holds, proving that $\widetilde{V}(\widehat{s_{H-1}^{\expert}}) > \widetilde{V}(s_{H-1})$.
    
    \textbf{Inductive Hypothesis:} Assume that at time step $h$, properties (1) and (2) hold.
    
    \textbf{Inductive Step:} We need to prove the properties hold for time step $h-1$. The proof mirrors the Base Case exactly. We first prove property (1). According to property (1) in the inductive hypothesis, we have that $\exists \widehat{s^{\expert}_h} \in \gS^{\expert}_h, \forall s_h \notin \gS^{\expert}_h$, $\widetilde{V} (\widehat{s^{\expert}_h}) > \widetilde{V} (s_h)$. Then we have that
    \begin{align*}
        & \quad \widetilde{Q}(s_{h-1},a^{\expert}) - \widetilde{Q}(s_{h-1},a) \\
        &\overset{\text{(a)}}{\geq} \expect_{s^\prime \sim P(\cdot|s_{h-1},a^{\expert})}[\widetilde{V}(s^\prime)] - \expect_{s^\prime \sim P(\cdot|s_{h-1},a)}[\widetilde{V}(s^\prime)] \\
        &= \widetilde{V}(\widehat{s^{\expert}_h})\left(P(\widehat{s^{\expert}_h}|s_{h-1},a^{\expert}) - P(\widehat{s^{\expert}_h}|s_{h-1},a)\right) \\
        &\quad + \sum_{\substack{s^{\expert}_h\in \gS^{\expert}_h\\s^{\expert}_h\neq \widehat{s^{\expert}_h}}} \left(P(s^{\expert}_h|s_{h-1},a^{\expert}) - P(s^{\expert}_h|s_{h-1},a)\right)\widetilde{V}(s^{\expert}_h) - \sum_{s_h\notin \gS^{\expert}_h} P(s_h|s_{h-1},a)\widetilde{V}(s_h)  \\
        &\geq \widetilde{V}(\widehat{s^{\expert}_h})\left(P(\widehat{s^{\expert}_h}|s_{h-1},a^{\expert}) - P(\widehat{s^{\expert}_h}|s_{h-1},a)\right) \\
        &\quad + \lp \min_{\substack{s^{\expert}_h\in \gS^{\expert}_h\\s^{\expert}_h\neq \widehat{s^{\expert}_h}}} \widetilde{V}(s^{\expert}_h) \rp \sum_{\substack{s^{\expert}_h\in \gS^{\expert}_h\\s^{\expert}_h\neq \widehat{s^{\expert}_h}}} \left(P(s^{\expert}_h|s_{h-1},a^{\expert}) - P(s^{\expert}_h|s_{h-1},a)\right) - \lp \max_{s_h\notin \gS^{\expert}_h}\widetilde{V}(s_h) \rp \sum_{s_h\notin \gS^{\expert}_h} P(s_h|s_{h-1},a)
        \\
        &\overset{\text{(b)}}{\geq} \widetilde{V}(\widehat{s^{\expert}_h})\left(P(\widehat{s^{\expert}_h}|s_{h-1},a^{\expert}) - P(\widehat{s^{\expert}_h}|s_{h-1},a)\right) \\
        &\quad + \lp \max_{s_h\notin \gS^{\expert}_h} \widetilde{V}(s_h) \rp \sum_{\substack{s^{\expert}_h\in \gS^{\expert}_h\\s^{\expert}_h\neq \widehat{s^{\expert}_h}}} \left(P(s^{\expert}_h|s_{h-1},a^{\expert}) - P(s^{\expert}_h|s_{h-1},a)\right) - \lp \max_{s_h\notin \gS^{\expert}_h}\widetilde{V}(s_h) \rp \sum_{s_h\notin \gS^{\expert}_h} P(s_h|s_{h-1},a) \\
        &\overset{\text{(c)}}{=} \delta \left(\widetilde{V}(\widehat{s^{\expert}_h}) - \max_{s_h\notin \gS^{\expert}_h}\widetilde{V}(s_h)\right) \\
        &\overset{\text{(d)}}{>} 0,
    \end{align*}
    Inequality (a) follows that \cref{lem:tight_constraints} indicates that the derived reward $r_{\widetilde{Q}}$ attains the lowest value at a non-expert state-action pair $(s_{h-1}, a)$. Inequality (b) follows the inductive hypothesis that $\forall s^{\expert}_h\in \gS^{\expert}_h, s_h \notin \gS^{\expert}_h$, $\widetilde{V} (s^{\expert}_h) \geq \widetilde{V} (s_h)$. In equation (c), we define $\delta = P(\widehat{s^{\expert}_h}|s_{h-1},a^{\expert}) - P(\widehat{s^{\expert}_h}|s_{h-1},a)$ and leverage the property the sum of the transition probabilities equals 1. In the strict inequality (d), we have both $\delta = P(\widehat{s^{\expert}_h}|s_{h-1},a^{\expert}) - P(\widehat{s^{\expert}_h}|s_{h-1},a) > 0$ due to the property of TD MDPs and $\widetilde{V}(\widehat{s^{\expert}_h}) > \max_{s_h\notin \gS^{\expert}_h}\widetilde{V}(s_h)$ due to the inductive hypothesis.   
    
    Property (2) also follows identical logic. In particular, $\forall s^{\expert}_{h-1} \in \gS^{\expert}_{h-1}, s_{h-1} \notin \gS^{\expert}_{h-1} $, we have
    \begin{align*}
        &\quad \widetilde{Q}(s^{\expert}_{h-1},a) - \widetilde{Q}(s_{h-1},a) \\
        &\overset{\text{(a)}}{\geq} \expect_{s^\prime\sim P(\cdot|s^{\expert}_{h-1},a)}[\widetilde{V}(s^\prime)] - \expect_{s^\prime\sim P(\cdot|s_{h-1},a)}[\widetilde{V}(s^\prime)] \\
        &= \sum_{s^{\expert}_h\in\gS^{\expert}_h} P(s^{\expert}_h|s^{\expert}_{h-1},a) \widetilde{V}(s^{\expert}_h) - \sum_{s^{\expert}_h\in\gS^{\expert}_h} P(s^{\expert}_h|s_{h-1},a) \widetilde{V}(s^{\expert}_h) - \sum_{s_h\notin \gS^{\expert}_h} P(s_h|s_{h-1},a) \widetilde{V}(s_h)    
        \\
        &\overset{\text{(b)}}{\geq} \lp \min_{s^{\expert}_h\in\gS^{\expert}_h}\widetilde{V}(s^{\expert}_h) \rp \sum_{s^{\expert}_h\in\gS^{\expert}_h} \left(P(s^{\expert}_h|s^{\expert}_{h-1},a) - P(s^{\expert}_h|s_{h-1},a)\right) - \lp \max_{s_h\notin \gS^{\expert}_h}\widetilde{V}(s_h) \rp \sum_{s_h\notin \gS^{\expert}_h} P(s_h|s_{h-1},a) \\
        &\overset{\text{(c)}}{=} \delta \left(\min_{s^{\expert}_h\in\gS^{\expert}_h}\widetilde{V}(s^{\expert}_h) - \max_{s_h\notin \gS^{\expert}_h}\widetilde{V}(s_h)\right) \\
        &\overset{\text{(d)}}{\geq} 0.
    \end{align*}
    
    Inequality (a) holds because \cref{lem:tight_constraints} shows that the induced reward $\widetilde{r}$ attains the lowest value 0 at any non-expert state-action pair $(s_{h-1}, a)$. Inequality (b) follows the property of TD MDPs. In equation (c), we define that $\delta = \sum_{s^{\expert}_h\in\gS^{\expert}_h} P(s^{\expert}_h|s^{\expert}_{h-1},a) - P(s^{\expert}_h|s_{h-1},a)$. In inequality (d), we have both $\delta > 0$ due to the property of TD MDPs and $\min_{s^{\expert}_h\in\gS^{\expert}_h}\widetilde{V}(s^{\expert}_h) - \max_{s_h\notin \gS^{\expert}_h}\widetilde{V}(s_h) \geq 0$ due to the inductive hypothesis. Then we can prove that $\forall s^{\expert}_{h-1} \in \gS^{\expert}_{h-1}, s_{h-1} \notin \gS^{\expert}_{h-1},   \widetilde{V}(s^{\expert}_{h-1}) \geq \widetilde{V} (s_{h-1})$.

    Again, \cref{lem:tight_constraints} shows that $\exists (\widehat{s^{\expert}_{h-1}}, a^{\expert}_{h-1}) \in \gDE, r_{\widetilde{Q}} (s^{\expert}_{h-1}, a^{\expert}_{h-1}) = 1$. In this case, inequality (a) strictly holds. Then we can get that $\exists \widehat{s^{\expert}_{h-1}} \in \gS^{\expert}_{h-1}, \forall s_{h-1} \notin \gS^{\expert}_{h-1},   \widetilde{V}(\widehat{s^{\expert}_{h-1}}) > \widetilde{V} (s_{h-1})$. Up to now, we have finished the induction analysis for the Q-function recovered by Dual Q-DM.

    We continue to analyze the Q-function recovered by IQ-Learn. Recall the objective of IQ-Learn:
    \begin{align*}
        \max_{Q \in \gQ} \gL (Q):=  \expect_{\tau \sim \gD^{\expert}}  
    \ls \sum_{h=1}^H Q (s_h, a_h) 
    -  (\LSE Q) (s_{h+1}) \rs  - \expect_{s_1 \sim \rho }   
    \ls (\LSE Q) (s_1)  \rs.
    \end{align*}
    Here $\gQ = \{ Q: \gS \times \gA \rightarrow [0, C] \}$ denotes the class of bounded Q-functions. In IQ-Learn, there are no other constraints beyond the boundedness constraint. Therefore, we analyze its solution by analyzing the gradient.

    In time steps $2 \leq h \leq H-1$, for an unvisited state-action pair $(s_h, a_h) \notin \gDE$, we have that
    \begin{align*}
        \frac{\gL (Q)}{\partial Q(s_h, a_h)} = 0.
    \end{align*}
    As such, the IQ-Learn's Q-function remains at its initial value, which we assume to be a constant. That is, $\widehat{Q} (s_h, a) = C, \forall a \in \gA$. At time step $h=1$, for all $s_1 \notin \gDE, a \in \gA$,
    \begin{align*}
        \frac{\gL (Q)}{\partial Q(s_1, a)} = -\rho(s_1) \frac{\exp(Q(s_1,a))}{\sum_a \exp(Q (s_1,a))} < 0.
    \end{align*}
    Consequently, all $Q (s_1,a)$ will be minimized uniformly, leading to the solution $\widehat{Q} (s_1, a) = 0, \forall a \in \gA$. We finish the proof.
\end{proof}

\subsection{An Example for Illustrating the Importance of Bellman Constraints}
\label{subsec:example}

\begin{figure}[htbp]
    \centering
    \includegraphics[width=0.5\linewidth]{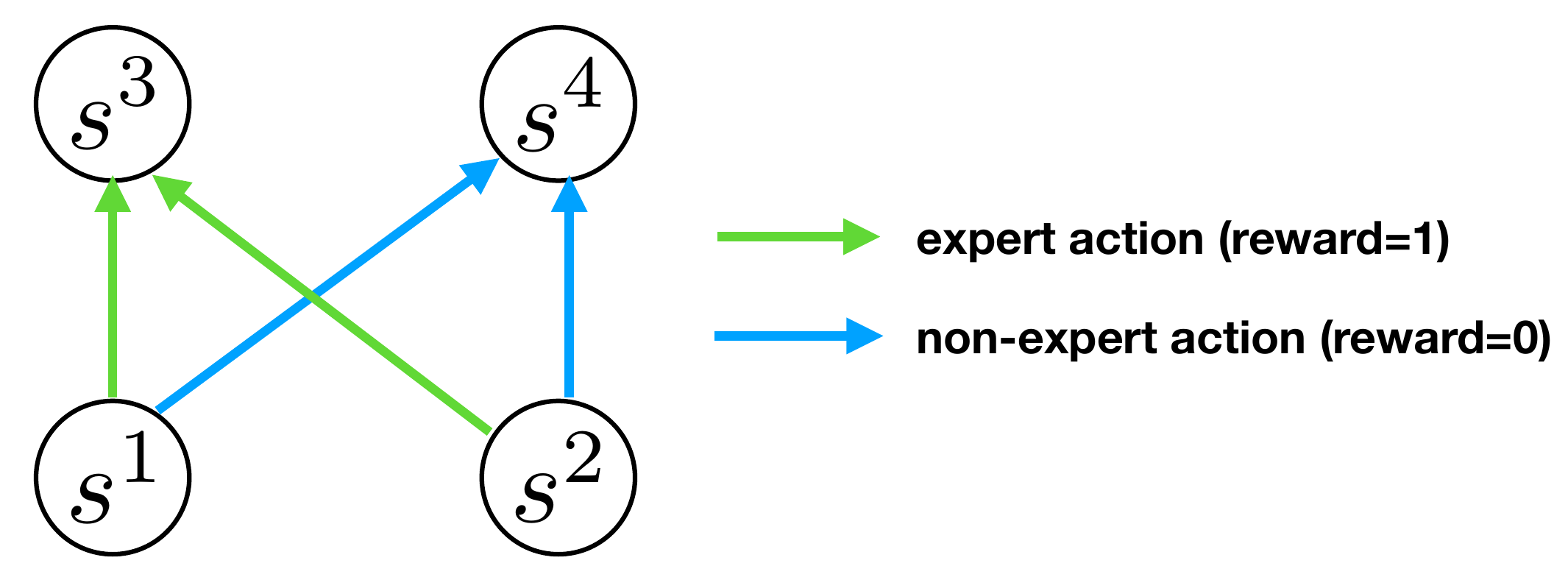}
    \caption{An Example of TD MDP. Here arrows denote the corresponding transitions.}
    \label{fig:td_mdp_example}
\end{figure}

Consider the TD MDP shown in \cref{fig:td_mdp_example}, where $\gS = \{ s^1, s^2, s^3, s^4 \}$, $\gA = \{a^1, a^2 \}$ with $a^1$ being the expert action and $H=2$. We consider a uniform initial state distribution over $\{s^1, s^2 \}$ and use an entropy coefficient of $\alpha=0.1$. The demonstration dataset consists of a single expert trajectory $\gDE = \{ s^1, a^1, s^3, a^1 \}$, rendering $s^2$ an unvisited state. We study generalization by characterizing the learned Q-function at this unvisited state.

In this example, Dual Q-DM and IQ-Learn share the same objective
\begin{align*}
\gL(Q) = Q(s^1, a^1) - \LSE(Q)(s^3) + Q(s^3, a^1) - \tfrac{1}{2} \LSE(Q)(s^1) - \tfrac{1}{2} \LSE(Q)(s^2),
\end{align*}
but they yield markedly different Q-functions at $s^2$. IQ-Learn minimizes this objective without enforcing Bellman constraints. Moreover, $\gL(Q)$ is monotonically decreasing in both $Q(s^2, a^1)$ and $Q(s^2, a^2)$. Consequently, IQ-Learn drives both values to the same lower bound, resulting in $
\widehat{Q}(s^2, a^1) = \widehat{Q}(s^2, a^2) = 0$, and thus fails to distinguish the expert action at the unvisited state.

In contrast, Dual Q-DM minimizes the same objective subject to Bellman constraints. By \cref{lem:tight_constraints}, these constraints attain the maximal value on visited state-action pairs and the minimal value on unvisited ones, yielding
\begin{align*}
& r_{\widetilde{Q}}(s^1, a^1) = 1, \quad 
r_{\widetilde{Q}}(s^1, a^2) = r_{\widetilde{Q}}(s^2, a^1) = r_{\widetilde{Q}}(s^2, a^2) = 0, \\
& \widetilde{Q}(s^3, a^1) = 1, \quad 
\widetilde{Q}(s^3, a^2) = \widetilde{Q}(s^4, a^1) = \widetilde{Q}(s^4, a^2) = 0.
\end{align*}
The resulting Q-function satisfies the soft Bellman optimality equation with respect to this reward. Consequently, the Q-values at $s^2$ are given by
\begin{align*}
\widetilde{Q}(s^2, a^1) 
&= r_{\widetilde{Q}}(s^2, a^1) + \LSE(\widetilde{Q}(s^3))
= \log (e^{10} + 1), \\
\widetilde{Q}(s^2, a^2) 
&= r_{\widetilde{Q}}(s^2, a^2) + \LSE(\widetilde{Q}(s^4))
= \log (2).
\end{align*}
Although $r_{\widetilde{Q}}(s^2, a^1)$ and $r_{\widetilde{Q}}(s^2, a^2)$ are both zero, the Bellman constraints propagate values from successor states. As a result, $\widetilde{Q}(s^2, a^1)$ reflects the higher value of the visited successor state $s^3$, whereas $\widetilde{Q}(s^2, a^2)$ reflects the lower value of the unvisited successor state $s^4$. This enables Dual Q-DM to correctly identify the expert action at the unvisited state $s^2$.

\subsection{Practical Implementation of Dual Q-DM}
\label{app:practical_implementation}
In this part, we present the practical implementation of Dual Q-DM, which incorporates Bellman constraints via penalization. \cref{alg:practical_dual_qdm} outlines the complete procedure.
\begin{equation}
\label{eq:dual_q_dm_discrete}
    \begin{split}
        \ell (Q) &= \expect_{\tau \sim \gDE} \ls \sum_{h=1}^H Q (s_h, a_h) -  \LSE (Q) (s_{h+1})    \rs - \expect_{s_1 \sim \rho} \ls \LSE (Q) (s_{1})  \rs
    \\
    &\; - \beta \expect_{\tau \sim \gD^{\text{online}}} \bigg[ \sum_{h=1}^H \big( \mathrm{ReLU}(0 - Q (s_h, a_h) + \LSE (\widebar{Q}) (s_{h+1}) ) \big) ^2 + \big( \mathrm{ReLU}( Q (s_h, a_h) - \LSE (\widebar{Q}) (s_{h+1}) - 1) \big)^2 \bigg]
    \end{split}
\end{equation}
where $\mathrm{ReLU} (x) = \max \{0, x \}$, $\gD^{\text{online}}$ is the online replay buffer and $\beta > 0$ controls the strengths of the Bellman constraints. Besides, to improve the training stability, we use the target network $\widebar{Q}$ to calculate the Bellman backup. This penalty term imposes a quadratic penalty whenever the learned Q-function violates the Bellman constraints, encouraging the solution to remain feasible. More importantly, it is evaluated over the \emph{entire online dataset}, ensuring that Bellman constraints hold across a broad state-action space beyond the demonstrations, which is essential for generalization as suggested by our theoretical analysis.

\begin{algorithm}[htbp]
\caption{Practical Implementation of Dual Q-DM for Discrete Control}
\label{alg:practical_dual_qdm}
\begin{algorithmic}[1]
\REQUIRE{Q-value $Q^0$, target Q-value $\widebar{Q}^0 = Q^0$, and dataset $\gD^{\text{online}} = \emptyset$.}
\FOR{$k = 1, 2, \ldots, K$}
\STATE{Apply $\pi_{Q^{k-1}}$ to roll out a trajectory $\tau^{k-1}$ and append it to the online replay buffer $\gD^{\text{online}} = \gD^{\text{online}} \cup \{ \tau^{k-1} \}$.}
\STATE{Update the Q-value function by $Q^{k} \leftarrow Q^{k-1} + \alpha_{Q} \nabla \ell (Q) $ from \cref{eq:dual_q_dm_discrete}.}
\STATE{Update the target Q-value by $\widebar{Q}^{k} \leftarrow \tau Q^{k} + (1-\tau) \widebar{Q}^{k-1}$.}
\ENDFOR
\end{algorithmic}
\end{algorithm}

For tasks with continuous action spaces, it is difficult to directly calculate the log-partition function $(\LSE Q) (s) = \log (\sum_{a \in \gA} \exp (Q(s, a)))$ and derive the softmax policy $\pi_{Q} = \softmax (Q)$. To 
circumvent this issue, we follow \citep{haarnoja2018sac, garg2021iq-learn} and trains an additional policy model $\pi$ via solving
\begin{align}
\label{eq:policy_obj}
    \max_{\pi} \ell (\pi) := \expect_{s \sim \gD^{\text{online}}, a \sim \pi (\cdot|s)} \ls Q (s, a) - \log (\pi (a|s)) \rs.
\end{align}
It is direct to prove that the solution to the above optimization problem is exactly $\softmax (Q)$. As a result, we can approximate the softmax policy by maximizing $\ell (Q)$. With a learned policy $\pi$, we can further approximate the log-partition function $(\LSE Q) (s)$ by $\expect_{ a \sim \pi (\cdot|s)} \ls Q (s, a) - \log (\pi (a|s)) \rs$. Then the objective for the Q-function becomes

\begin{equation}
\label{eq:dual_q_dm_continuous}
    \begin{split}
        \ell (Q) &= \expect_{\tau \sim \gDE} \ls \sum_{h=1}^H Q (s_h, a_h) - \expect_{a^\prime \sim \pi (\cdot|s_{h+1})} \ls Q (s_{h+1}, a^\prime) - \log ( \pi (a^\prime|s_{h+1}) ) \rs    \rs 
        \\
        &\; - \expect_{s_1 \sim \rho, a^\prime \sim \pi (\cdot|s_1)} \ls Q (s_1, a^\prime) - \log (\pi (a^\prime|s_1))  \rs
    \\
    &\; - \beta \expect_{\tau \sim \gD^{\text{online}}} \bigg[ \sum_{h=1}^H \big( \mathrm{ReLU}(0 - Q (s_h, a_h) + \expect_{a^\prime \sim \pi (\cdot|s_{h+1})} \ls \widebar{Q} (s_{h+1}, a^\prime) - \log ( \pi (a^\prime|s_{h+1}) ) \rs  ) \big) ^2 
    \\
    &\; + \big( \mathrm{ReLU}( Q (s_h, a_h) - \expect_{a^\prime \sim \pi (\cdot|s_{h+1})} \ls \widebar{Q} (s_{h+1}, a^\prime) - \log ( \pi (a^\prime|s_{h+1}) ) \rs - 1) \big)^2 \bigg]
    \end{split}
\end{equation}

\begin{algorithm}[htbp]
\caption{Practical Implementation of Dual Q-DM for Continuous Control}
\label{alg:practical_dual_qdm_continuous}
\begin{algorithmic}[1]
\REQUIRE{Q-value $Q^0$, target Q-value $\widebar{Q}^0 = Q^0$, policy $\pi^0$ and dataset $\gD^{0} = \emptyset$.}
\FOR{$k = 1, 2, \ldots, K$}
\STATE{Apply $\pi^{k-1}$ to roll out a trajectory $\tau^{k-1}$ and append it to the dataset $\gD^{k} = \gD^{k-1} \cup \{ \tau^{k-1} \}$.}
\STATE{Update the Q-value function by $Q^{k} \leftarrow Q^{k-1} + \alpha_{Q} \nabla \ell (Q) $ from \cref{eq:dual_q_dm_continuous}.}
\STATE{Update the policy by $\pi^{k} \leftarrow \pi^{k-1} + \alpha_{\pi} \nabla \ell (\pi)$ from \cref{eq:policy_obj}.}
\STATE{Update the target Q-value by $\widebar{Q}^{k} \leftarrow \tau Q^{k} + (1-\tau) \widebar{Q}^{k-1}$.}
\ENDFOR
\end{algorithmic}
\end{algorithm}

\subsection{A Detailed Comparison of Various Q-based IL Approaches}
\label{subsec:comparison}
We perform a thorough comparison of Dual Q-DM with prior Q-based methods. The practical objective of Dual Q-DM is
\begin{align*}
    & \text{Dual Q-DM: } \max_{Q} \expect_{\tau \sim \gDE} \ls \sum_{h=1}^H Q (s_h, a_h) -  \LSE (Q) (s_{h+1})    \rs - \expect_{s_1 \sim \rho} \ls \LSE (Q) (s_{1})  \rs 
    \\
    & - \beta \expect_{\tau \sim \gD^{\text{online}}} \bigg[ \sum_{h=1}^H \big( \mathrm{ReLU}(0 - Q (s_h, a_h) + \LSE (Q) (s_{h+1}) ) \big) ^2  + \big( \mathrm{ReLU}( Q (s_h, a_h) - \LSE (Q) (s_{h+1}) - 1) \big)^2 \bigg].
\end{align*}
For comparison, we consider several IQ-Learn variants. The standard IQ-Learn with TV divergence optimizes:
\begin{align*}
    \text{IQ-Learn (TV): } \max_{Q \in \gQ}  \expect_{\tau \sim \gD^{\expert}}  
    \ls \sum_{h=1}^H Q (s_h, a_h) 
    -  (\LSE Q) (s_{h+1}) \rs  - \expect_{s_1 \sim \rho }   
    \ls (\LSE Q) (s_1)  \rs. 
\end{align*}
The standard IQ-Learn with $\chi^2$-divergence adds a squared Bellman error term calculated on \emph{demonstrations}.
\begin{align*}
     \text{IQ-Learn ($\chi^2$): } &\max_{Q \in \gQ}  \expect_{\tau \sim \gD^{\expert}}  
    \ls \sum_{h=1}^H Q (s_h, a_h) 
    -  (\LSE Q) (s_{h+1}) \rs  - \expect_{s_1 \sim \rho }   
    \ls (\LSE Q) (s_1)  \rs 
    \\
    &\quad - \frac{1}{4} \expect_{\tau \sim \gD^{\expert}}  
    \ls \sum_{h=1}^H \lp Q (s_h, a_h) 
    -  (\LSE Q) (s_{h+1}) \rp^2 \rs.
\end{align*}
\citet{garg2021iq-learn} included a regularization term in its official implementation but did not introduce it in the paper. We call this variant Reg IQ-Learn, which is formulated as
\begin{align*}
    \text{Reg IQ-Learn: } &\max_{Q \in \gQ}  \expect_{\tau \sim \gD^{\expert}}  
    \ls \sum_{h=1}^H Q (s_h, a_h) 
    -  (\LSE Q) (s_{h+1}) \rs  - \expect_{s_1 \sim \rho }   
    \ls (\LSE Q) (s_1)  \rs 
    \\
    &\quad - \frac{1}{4} \expect_{\tau \sim \gD^{\text{online}}}  
    \ls \sum_{h=1}^H \lp Q (s_h, a_h) 
    -  (\LSE Q) (s_{h+1}) \rp^2 \rs.
\end{align*}
The additional regularization corresponds to the squared Bellman error evaluated on the \emph{online dataset}. LS-IQ \citep{lsiq2023} mainly follows the Reg IQ-Learn formulation and is thus omitted from this comparison. We present the key distinctions as follows.

\textbf{Dual Q-DM V.S. IQ-Learn ($\chi^2$) and IQ-Learn (TV).} First, compared to IQ-Learn (TV), Dual Q-DM incorporates Bellman constraints to establish temporal coupling across the Q-function. \cref{sec:method} demonstrates that Bellman constraints, by propagating Q-value information from visited to unvisited states, are essential for Q-based IL to generalize beyond demonstrations and mitigate compounding errors. Empirical results in \cref{fig:reset_cliff} and \cref{fig:mujoco} corroborate this theoretical finding.

Second, compared to IQ-Learn (TV), IQ-Learn ($\chi^2$) adds a Bellman error term computed on expert data. While this superficially resembles our Bellman constraints by introducing coupling in the Q-function, its impact is limited because it only couples Q-values on state-action pairs visited in demonstrations. Since \cref{thm:iq_learn_equivalence} establishes that both IQ-Learn (TV) and BC can already recover expert actions in demonstrations, this additional constraint provides minimal benefit. In contrast, Dual Q-DM induces Q-function coupling across the broader online dataset, extending well beyond the demonstrations and thereby enabling generalization to unvisited states. \cref{fig:reset_cliff} empirically confirms that Dual Q-DM demonstrates substantially better performance than IQ-Learn ($\chi^2$).

\textbf{Reg IQ-Learn V.S. IQ-Learn ($\chi^2$).} Unlike IQ-Learn ($\chi^2$), Reg IQ-Learn computes the Bellman error term on the broader online data. From our theoretical perspective, this enables coupling across state-action pairs beyond the expert demonstrations, facilitating generalization to unvisited states. \cref{fig:mujoco} confirms that Reg IQ-Learn substantially outperforms standard IQ-Learn. While previous works \citep{lsiq2023, rize2025karimi} have interpreted this regularization as implicit reward regularization for training stability, we uncover a fundamental RL mechanism: the regularization propagates Q-value information through environment dynamics from visited to unvisited states, enabling effective generalization.

\textbf{Dual Q-DM V.S. Reg IQ-Learn.} Although Reg IQ-Learn's regularization establishes Q-function coupling, it introduces reward bias: the Bellman error term explicitly assigns a minimum reward of 0 to all online state-action pairs, even those closely resembling expert behavior. This biases Q-function learning. In contrast, Dual Q-DM's penalty term constrains the reward range without prescribing specific values, avoiding this bias. The experimental results in \cref{fig:mujoco} corroborate this advantage: Dual Q-DM outperforms Reg IQ-Learn.

\section{Extension to the Infinite-horizon Setting}
\label{sec:extension_infinite_horizon}
In this part, we extend the main results of this paper to the infinite-horizon setting.
\subsection{Infinite-horizon MDP Setting}
We consider the infinite-horizon discounted MDP $\mathcal{M} = (\mathcal{S}, \mathcal{A}, P, r^\star, \gamma, \rho)$, where $\gamma \in (0,1)$ is the discount factor and all other notation follows \cref{sec:method}. Unlike the finite-horizon setting, the state space $\mathcal{S}$ is stationary (not layered). A policy $\pi: \mathcal{S} \rightarrow \Delta(\mathcal{A})$ induces a discounted stationary occupancy measure:
\begin{align*}
    d^\pi(s, a) := (1-\gamma) \sum_{t=1}^\infty \gamma^{t-1} \mathbb{P}^\pi(s_t = s,\, a_t = a).
\end{align*}

\textbf{BC in the infinite-horizon setting.}
BC in the infinite-horizon setting directly applies the same MLE objective as in the finite-horizon case (\cref{eq:bc_objective}):
\begin{align*}
    \pi^{\bc} = \argmax_{\pi \in \Pi} \; \expect_{(s,a) \sim \gDE} \ls \log \pi(a|s) \rs.
\end{align*}
In function space, the BC policy recovers the empirical expert conditional distribution on visited states and assigns a uniform distribution on states uncovered by $\gDE$. As a consequence, BC cannot infer expert actions at unvisited states and suffers from compounding errors. Its imitation gap scales as $\mathcal{O}\lp 1/(1-\gamma)^2 \rp$ \citep{xu2020error}, directly analogous to the $\mathcal{O}(H^2)$ bound in the finite-horizon case \citep{ross2010efficient, rajaraman2020fundamental}, with the effective horizon $1/(1-\gamma)$ playing the role of $H$.

\textbf{AIL in the Infinite-horizon Setting.}
AIL in the infinite-horizon setting minimizes the discrepancy between the learner's stationary occupancy measure and the expert's, analogous to the finite-horizon formulation in \cref{eq:distribution_matching_policy}:
\begin{align}
    \min_{\pi \in \Pi} \; D_{\TV}\lp \widehat{d^{\piE}}, d^\pi \rp - \widebar{\gH}(d^\pi),
    \label{eq:ail_infinite}
\end{align}
where $\widehat{d^{\piE}}(s,a)$ is the empirical estimation of the expert's state-action distribution based on finite demonstrations, and $\widebar{\gH}(d^\pi) = \expect_{(s,a) \sim d^\pi}[-\log(\pi(a|s))]$ is the entropy regularization term. The equivalent minimax formulation is:
\begin{align}
    \min_{\pi \in \Pi} \max_{r \in \gR} \;\;
    \expect_{(s,a) \sim \gDE}\ls r(s,a) \rs
    - \expect_{(s,a) \sim d^\pi}\ls r(s,a) - \log \pi(a|s) \rs,
    \label{eq:ail_minimax_infinite}
\end{align}
where $\gR = \{r: \gS \times \gA \rightarrow [0,1]\}$ is the class of bounded reward functions. This is the direct analogue of the finite-horizon AIL minimax objective in \cref{eq:ail_minimax}, with the per-step summation replaced by the stationary occupancy measure expectation. Through global occupancy measure matching, AIL achieves an imitation gap of $\mathcal{O}(1/(1-\gamma))$ \citep{xu2020error}, i.e., linear in the effective horizon, thereby mitigating compounding errors.

\textbf{IQ-Learn in the Infinite-horizon Setting.}
Following Eq.~(9) of \citet{garg2021iq-learn}, IQ-Learn in the infinite-horizon setting optimizes the following objective over Q-functions:
\begin{align}
    \max_{Q \in \gQ} \;\;
    \expect_{(s,a,s') \sim \gDE} \ls Q(s, a) - \gamma \LSE(Q)(s') \rs
    - (1-\gamma)\, \expect_{s_1 \sim \rho} \ls \LSE(Q)(s_1) \rs.
    \label{eq:iq_learn_infinite}
\end{align}
After obtaining $\widehat{Q}$ by solving \cref{eq:iq_learn_infinite}, IQ-Learn derives the corresponding softmax policy $\pi_{\widehat{Q}}(a|s) \propto \exp(\widehat{Q}(s,a))$.

\textbf{Dual Q-DM in the Infinite-horizon Setting.}
The natural extension of Dual Q-DM to the infinite-horizon setting shares a similar objective to IQ-Learn in \cref{eq:iq_learn_infinite} but additionally imposes Bellman constraints:
\begin{align}
    &\max_{Q} \;\;
    \expect_{(s,a,s') \sim \gDE} \ls Q(s, a) - \gamma \expect_{s' \sim P (\cdot|s, a)} \ls \LSE(Q)(s') \rs \rs
    - (1-\gamma)\, \expect_{s_1 \sim \rho} \ls \LSE(Q)(s_1) \rs,
    \nonumber \\
    &\quad \mathrm{s.t.} \quad
    0 \leq Q(s, a) - \gamma\, \expect_{s' \sim P(\cdot|s,a)} \ls \LSE(Q)(s') \rs \leq 1,
    \quad \forall (s, a) \in \mathcal{S} \times \mathcal{A}.
    \label{eq:q-dm_infinite}
\end{align}
Here the Bellman constraints bound the implicit reward $r_Q(s,a) := Q(s,a) - \gamma\,\expect_{s' \sim P(\cdot|s,a)}[\LSE(Q)(s')]$ to the interval $[0,1]$, consistent with the finite-horizon formulation.

\subsection{Extension of \cref{thm:iq_learn_equivalence}.}
We now show that the reduction of IQ-Learn to BC established in \cref{thm:iq_learn_equivalence} extends to the infinite-horizon setting.

\begin{thm}[Extension of \cref{thm:iq_learn_equivalence} to Infinite-horizon MDPs]
\label{thm:iq_learn_equivalence_infinite}
Consider infinite-horizon stationary MDPs. Suppose that $\widehat{Q}$ is the optimal solution to IQ-Learn in \cref{eq:iq_learn_infinite} and $\pi_{\widehat{Q}}$ is the derived policy. Assume the softmax policy class realizes the BC policy, i.e., $\pi^{\bc} \in \{ \pi_Q : Q \in \gQ \}$. Then the following holds:
\begin{itemize}
    \item For any non-initial state $s$ with $\rho(s) = 0$,\; $\pi_{\widehat{Q}}(\cdot|s) = \pi^{\bc}(\cdot|s)$.
    \item For any initial state uncovered by demonstrations, i.e., $\rho(s_1) > 0$ and $s_1 \notin \gDE$,\;
    $\pi_{\widehat{Q}}(\cdot|s_1) = \pi^{\bc}(\cdot|s_1) = \unif(\mathcal{A})$.
\end{itemize}
\end{thm}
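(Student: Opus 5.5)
The plan is to mirror the re-parameterization argument from the proof of \cref{thm:iq_learn_equivalence}, adapted to stationary states. Write the IQ-Learn objective in \cref{eq:iq_learn_infinite} as a sum of per-state contributions. For each state $s$, collect every term involving $Q(s,\cdot)$. This gives $\widehat{d^{\piE}}(s,a)\,Q(s,a)$ from the first term, $-\gamma\,\mu(s)\LSE(Q)(s)$, and $-(1-\gamma)\rho(s)\LSE(Q)(s)$. Here $\mu(s)$ denotes the empirical (discounted) frequency with which $s$ appears as a successor $s'$ in $\gDE$. Because the policy is stationary, the discounted empirical expert occupancy satisfies the flow identity $\widehat{d^{\piE}}(s)=(1-\gamma)\rho(s)+\gamma\mu(s)$ in expectation. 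I will assume, as the paper implicitly does, that $\widehat{d^{\piE}}$ is the empirical discounted occupancy built so that this identity holds exactly on the data, with $\rho$ replaced by the empirical initial distribution where needed. The per-state objective then becomes
\[
\sum_a \widehat{d^{\piE}}(s,a)Q(s,a) - \bigl(\widehat{d^{\piE}}(s) + (1-\gamma)(\rho(s)-\widehat{\rho}(s))\bigr)\LSE(Q)(s),
\]
and the objective decouples across states since $\gQ$ is a box constraint.

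\textbf{Case I: $\rho(s)=0$.} The coefficient of $\LSE$ is exactly $\widehat{d^{\piE}}(s)$, so the per-state term equals $\widehat{d^{\piE}}(s)\sum_a \widehat{\pi^{\expert}}(a|s)\log\pi_Q(a|s)$. This is the BC log-likelihood at $s$. By the realizability assumption $\pi^{\bc}\in\Pi_{\gQ}$, maximizing over $Q(s,\cdot)$ yields $\pi_{\widehat{Q}}(\cdot|s)=\pi^{\bc}(\cdot|s)$ whenever $s$ is visited. If $s$ is unvisited, the term is identically zero. In that case I adopt the same constant-initialization convention as Case II(b) of the finite-horizon proof, which gives the uniform distribution, matching \cref{eq:pi_bc}.

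\textbf{Case II: $\rho(s_1)>0$ and $s_1\notin\gDE$.} Every occurrence of $s_1$ in the data has weight zero, so only $-(1-\gamma)\rho(s_1)\LSE(Q)(s_1)$ remains. This term is strictly decreasing in each $Q(s_1,a)$, so the optimum over $[0,C]$ sets $\widehat{Q}(s_1,a)=0$ for all $a$. The softmax is therefore uniform, which coincides with $\pi^{\bc}(\cdot|s_1)=\unif(\gA)$ by \cref{eq:pi_bc}.

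\textbf{Main obstacle.} The delicate step is Case I: it requires the coefficient of $\LSE(Q)(s)$ to match the empirical state occupancy exactly. In the finite-horizon proof, the layered structure made this automatic. Here the flow identity may hold only in expectation, so I would first define the empirical occupancy via discounted trajectory sums and verify the identity at states with $\rho(s)=0$, where the mismatch term vanishes. Should exactness fail, I would fall back to stating the result for the population version of the expert occupancy, for which the identity is standard.
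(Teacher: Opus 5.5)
Your proposal is correct and follows essentially the paper's route. Your per-state flow identity $\widehat{d^{\piE}}(s)=(1-\gamma)\widehat{\rho}(s)+\gamma\mu(s)$ is the state-aggregated form of the paper's trajectory-level telescoping, which holds exactly because the paper reads $\gDE$-expectations as $(1-\gamma)\expect_{\tau\sim\gDE}[\sum_{t}\gamma^{t-1}(\cdot)]$. Your ``main obstacle'' is therefore resolved just as you anticipate, and your two cases (pure BC log-likelihood when $\rho(s)=0$, hence $\widehat{\rho}(s)=0$; uniform suppression by $-(1-\gamma)\rho(s_1)\LSE(Q)(s_1)$ at uncovered initial states) match the paper. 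Your explicit decoupling over the box $\gQ$ and your initialization convention for unvisited non-initial states are slightly more careful than the paper's own write-up.
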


\begin{proof}
The proof applies the same re-parameterization technique as \cref{thm:iq_learn_equivalence}. We apply a telescoping transformation to rewrite the IQ-Learn objective. Using the relation $d^{\piE}(s,a) = (1-\gamma)\sum_{t=1}^\infty \gamma^{t-1}\mathbb{P}^{\piE}(s_t=s, a_t=a)$, we have:
\begin{align*}
    &\quad \expect_{(s,a,s') \sim \gDE} \ls Q(s,a) - \gamma \LSE(Q)(s') \rs - (1-\gamma)\,\expect_{s_1 \sim \rho}\ls \LSE(Q)(s_1) \rs \\
    &= (1-\gamma)\,\expect_{\tau \sim \gDE}\ls \sum_{t=1}^{\infty} \gamma^{t-1} \lp Q(s_t, a_t) - \gamma \LSE(Q)(s_{t+1}) \rp \rs
    - (1-\gamma)\,\expect_{s_1 \sim \rho}\ls \LSE(Q)(s_1) \rs \\
    &\overset{(a)}{=} (1-\gamma)\,\expect_{\tau \sim \gDE}\ls \sum_{t=1}^{\infty} \gamma^{t-1} \underbrace{\lp Q(s_t, a_t) - \LSE(Q)(s_t) \rp}_{\log \pi_Q(a_t|s_t)} \rs \\
    &\quad + (1-\gamma) \lp \expect_{s_1 \sim \gDE}\ls \LSE(Q)(s_1) \rs - \expect_{s_1 \sim \rho}\ls \LSE(Q)(s_1) \rs \rp \\
    &= \underbrace{\expect_{(s,a) \sim \widehat{d^{\piE}}}\ls \log \pi_Q(a|s) \rs}_{\text{BC objective}}
    + (1-\gamma) \lp \expect_{s_1 \sim \gDE}\ls \LSE(Q)(s_1) \rs - \expect_{s_1 \sim \rho}\ls \LSE(Q)(s_1) \rs \rp,
\end{align*}
where step (a) follows the telescoping argument: for any trajectory $\tau$,
\begin{align*}
    \sum_{t=1}^{\infty} \gamma^{t-1} \lp Q(s_t,a_t) - \gamma \LSE(Q)(s_{t+1}) \rp
    = \sum_{t=1}^{\infty} \gamma^{t-1} \lp Q(s_t,a_t) - \LSE(Q)(s_t) \rp + \LSE(Q)(s_1),
\end{align*}
using $\sum_{t=1}^{\infty}\gamma^{t-1}\LSE(Q)(s_t) - \sum_{t=1}^{\infty}\gamma^t \LSE(Q)(s_{t+1}) = \LSE(Q)(s_1)$ and the fact that the boundary term $\gamma^t \LSE(Q)(s_{t+1}) \to 0$ as $t \to \infty$ (since $Q$ is bounded and $\gamma < 1$). Let $\mathcal{L}(Q)$ denote the full objective. We now analyze the optimal $Q$ at two types of states.

\textbf{Non-initial states.} For any state $s$ with $\rho (s) = 0$, the second term $(1-\gamma)\lp\expect_{s_1 \sim \gDE}[\LSE(Q)(s_1)] - \expect_{s_1 \sim \rho}[\LSE(Q)(s_1)]\rp$ does not depend on $Q(s, \cdot)$. Therefore:
\begin{align*}
    \argmax_{Q(s,\cdot)} \mathcal{L}(Q)
    = \argmax_{Q(s,\cdot)} \expect_{(s,a) \sim \widehat{d^{\piE}}}\ls \log \pi_Q(a|s) \rs,
\end{align*}
which is precisely the BC objective. Hence $\pi_{\widehat{Q}}(\cdot|s) = \pi^{\bc}(\cdot|s)$ for all such states.

\textbf{Unvisited initial states.} For any state $s$ with $\rho(s_1) > 0,\; s_1 \notin \gDE$, so the BC term contributes no signal. Optimizing over $Q(s_1, \cdot)$ reduces to:
\begin{align*}
    \argmax_{Q(s_1,\cdot)} \mathcal{L}(Q)
    = \argmin_{Q(s_1,\cdot)}\; (1-\gamma)\,\expect_{s_1 \sim \rho}\ls \LSE(Q)(s_1) \rs.
\end{align*}
Since $\LSE(Q)(s_1) = \log\sum_{a \in \mathcal{A}} \exp(Q(s_1,a))$ is monotonically increasing in $Q(s_1,a)$ for every action $a$, the minimizer uniformly suppresses all action values, yielding $\pi_{\widehat{Q}}(a|s_1) = 1/|\mathcal{A}|$ for all $a \in \mathcal{A}$. This coincides with $\pi^{\bc}(\cdot|s_1)$ on unvisited initial states.
\end{proof}

\subsection{Extension of \cref{thm:equivalence_ail_q_dm}}
\label{subsec:extension_thm2_infinite}

We now extend \cref{thm:equivalence_ail_q_dm} to the infinite-horizon setting, showing that the infinite-horizon Dual Q-DM in \cref{eq:q-dm_infinite} recovers the optimal solution to the infinite-horizon AIL objective in \cref{eq:ail_infinite}.

\begin{thm}[Extension of \cref{thm:equivalence_ail_q_dm} to Infinite-horizon MDPs]
\label{thm:equivalence_ail_q_dm_infinite}
Consider infinite-horizon stationary MDPs. Suppose that $\widetilde{Q}$ is the optimal solution to Dual Q-DM in \cref{eq:q-dm_infinite} and $\pi_{\widetilde{Q}} = \softmax(\widetilde{Q})$ is the derived policy. Then $\pi_{\widetilde{Q}}$ is the optimal solution to AIL in \cref{eq:ail_infinite}:
\begin{align*}
    \pi_{\widetilde{Q}} \in \argmin_{\pi \in \Pi} \; D_{\TV}\lp \widehat{d^{\piE}}, d^\pi \rp - \widebar{\gH}(d^\pi).
\end{align*}
\end{thm}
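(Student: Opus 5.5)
The plan mirrors the finite-horizon argument of \cref{thm:equivalence_ail_q_dm}. We pass through an infinite-horizon Dual V-DM formulation: we show that $\widetilde{Q}$ induces an optimal solution of that problem, and then that this solution is a saddle point of the AIL minimax problem in \cref{eq:ail_minimax_infinite}.

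Concretely, the infinite-horizon Dual V-DM is
\begin{align*}
\max_{r \in \gR, V}\; \expect_{(s,a)\sim\gDE}[r(s,a)] - (1-\gamma)\expect_{s_1\sim\rho}[V(s_1)] \quad \mathrm{s.t.}\quad V(s)=\LSE(r+\gamma PV)(s),\ \forall s.
\end{align*}

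\textbf{Step 1 (Dual Q-DM $\Rightarrow$ Dual V-DM).} Given $\widetilde{Q}$, set
\begin{align*}
\widetilde{r}(s,a)=\widetilde{Q}(s,a)-\gamma\expect_{s'\sim P(\cdot|s,a)}[\LSE(\widetilde{Q})(s')], \qquad \widetilde{V}=\LSE(\widetilde{Q}).
\end{align*}
The Bellman constraints in \cref{eq:q-dm_infinite} give $\widetilde{r}\in\gR$. By construction $\widetilde{Q}=\widetilde{r}+\gamma P\widetilde{V}$, so $\widetilde{V}=\LSE(\widetilde{r}+\gamma P\widetilde{V})$ and the pair is feasible.

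Conversely, any feasible $(r,V)$ yields $Q=r+\gamma PV$. This $Q$ satisfies $\LSE(Q)=V$ and hence the Bellman constraints. The two objectives coincide under this correspondence. Optimality of $\widetilde{Q}$ therefore transfers to $(\widetilde{r},\widetilde{V})$, exactly as in \cref{prop:equivalence_q_dm_v_dm}.

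Here the soft Bellman equation has a unique bounded solution because the soft backup is a $\gamma$-contraction in sup norm. Hence $\widetilde{V}=V^{\star,\soft,\widetilde{r}}$, where $V^{\star,\soft,r}$ denotes the optimal entropy-regularized value for reward $r$, normalized so that
\begin{align*}
(1-\gamma)\expect_{\rho}[V^{\star,\soft,r}]=\max_{\pi}\expect_{d^\pi}\bigl[r-\log\pi\bigr].
\end{align*}

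\textbf{Step 2 (Dual V-DM $\Rightarrow$ AIL).} Substituting this identity turns Dual V-DM into $\max_{r\in\gR}\min_\pi \phi(\pi,r)$, where $\phi$ is the objective of \cref{eq:ail_minimax_infinite}.

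We rewrite $\phi$ over the discounted occupancy polytope
\begin{align*}
\mathrm{D}=\Bigl\{d\ge0:\ \sum_a d(s,a)=(1-\gamma)\rho(s)+\gamma\sum_{s'',a''}P(s|s'',a'')d(s'',a'')\ \ \forall s\Bigr\},
\end{align*}
using the bijection between stationary policies and elements of $\mathrm{D}$. The resulting function $J(d,r)$ is linear in $r$ and strictly convex in $d$, since the conditional entropy term is strictly concave in the sense of \citet{ho2016gail}. Both $\mathrm{D}$ and $\gR$ are compact and convex, so Sion's theorem gives strong duality.

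Consequently, for the AIL solution $\pi^{\ail}$ and the optimal $\widetilde{r}$, the pair $(\pi^{\ail},\widetilde{r})$ is a saddle point. This gives $\pi^{\ail}\in\argmin_\pi\phi(\pi,\widetilde{r})$. That inner problem is maximum-entropy RL with reward $\widetilde{r}$, whose unique solution is the soft-optimal policy
\begin{align*}
\pi(a|s)\propto\exp\bigl(\widetilde{r}(s,a)+\gamma\expect_{s'}[\widetilde{V}(s')]\bigr)=\exp\bigl(\widetilde{Q}(s,a)\bigr).
\end{align*}
Hence $\pi^{\ail}=\pi_{\widetilde{Q}}$, as claimed.

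\textbf{Main obstacle.} The delicate part is the infinite-horizon technicalities rather than the algebra. We must verify compactness of $\mathrm{D}$ and strict convexity of the objective in $d$; this is clean for finite $\gS,\gA$, which we assume as in the tabular analysis. We must also check uniqueness of the soft-optimal policy on states with $d(s)=0$. Off the support, the bijection between $\pi$ and $d$ fails, so we define both $\pi^{\ail}$ and $\pi_{\widetilde{Q}}$ up to $d^\pi$-null states. Alternatively, if $\rho$ has full support, the correspondence is exact. Uniqueness on the support follows from strict concavity of the entropy-regularized RL objective in $d$.
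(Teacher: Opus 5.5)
Your proposal is correct and takes the same route as the paper. You map $\widetilde{Q}$ to an optimal pair $(\widetilde{r},\widetilde{V})$ of the infinite-horizon Dual V-DM as in \cref{prop:equivalence_q_dm_v_dm_infinite}, then use Sion's theorem on the discounted occupancy polytope, the resulting saddle point, and uniqueness of the maximum-entropy RL solution as in \cref{prop:equivalence_ail_v_dm_infinite}. Your extra remarks make explicit two points the paper asserts without proof: the $\gamma$-contraction identifying $\widetilde{V}$ with $V^{\star,\soft,\widetilde{r}}$, and uniqueness only up to $d^{\pi}$-null states, which suffices because the AIL objective depends on $\pi$ only through $d^{\pi}$.
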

The proof parallels the finite-horizon proof in \cref{subsec:proof_of_thm:equivalence_ail_q_dm}, extending the primal-dual framework to the infinite-horizon setting. We establish the equivalence through three formulations.

\textbf{Step 1: Primal Distribution Matching (Infinite-horizon).}
We reformulate AIL in \cref{eq:ail_infinite} as a constrained optimization over the stationary occupancy measure $d$:
\begin{align}
    \min_{d \geq 0} \;\; & D_{\TV}\lp \widehat{d^{\piE}}, d \rp - \widebar{\gH}(d), \nonumber \\
    \mathrm{s.t.} \;\; & \sum_{a \in \gA} d(s,a) = (1-\gamma)\rho(s) + \gamma \sum_{(s',a') \in \gS \times \gA} d(s',a') P(s|s',a'), \quad \forall s \in \gS.
    \label{eq:primal_dm_infinite}
\end{align}
The constraint is the discounted Bellman flow equation for occupancy measures, which characterizes all valid stationary occupancy measures induced by some policy $\pi$ \citep{puterman2014markov}. This is the exact infinite-horizon analogue of the primal distribution matching in \cref{eq:primal-dm}.

\textbf{Step 2: Dual V-Function Distribution Matching (Infinite-horizon).}
We apply Lagrangian duality to \cref{eq:primal_dm_infinite}, introducing a multiplier $V(s)$ for each flow constraint. The dual problem is:
\begin{align}
    \max_{r,\, V} \;\; & \expect_{(s,a) \sim \gDE}\ls r(s,a) \rs - (1-\gamma)\,\expect_{s \sim \rho}\ls V(s) \rs, \nonumber \\
    \mathrm{s.t.} \;\; & V(s) = \LSE\lp r + \gamma PV \rp(s), \quad \forall s \in \gS, \label{eq:v-dm_infinite} \\
    & 0 \leq r(s,a) \leq 1, \quad \forall (s,a) \in \gS \times \gA, \nonumber
\end{align}
where $(r + \gamma PV)(s,a) := r(s,a) + \gamma\,\expect_{s' \sim P(\cdot|s,a)}[V(s')]$ denotes the discounted Bellman backup operator. The first constraint requires $V$ to satisfy the soft Bellman optimality equation with respect to $r$, i.e., $V$ is the soft optimal value function under $r$. Strong duality holds since \cref{eq:primal_dm_infinite} is a convex program in $d$ with feasible solutions.

\textbf{Step 3: Change of Variables to Dual Q-DM (Infinite-horizon).}
Define the Q-function $Q(s,a) := r(s,a) + \gamma\,\expect_{s' \sim P(\cdot|s,a)}[V(s')]$. Substituting into \cref{eq:v-dm_infinite} and using $V(s) = \LSE(Q)(s)$ yields:
\begin{align*}
    &\quad \expect_{(s,a) \sim \gDE}\ls r(s,a) \rs - (1-\gamma)\,\expect_{s \sim \rho}\ls V(s) \rs 
    \\
    &= \expect_{(s,a) \sim \gDE}\ls Q(s,a) - \gamma \expect_{s' \sim P (\cdot|s, a)} \ls \LSE(Q)(s') \rs \rs - (1-\gamma)\,\expect_{s \sim \rho}\ls \LSE(Q)(s) \rs,
\end{align*}
and the constraint $0 \leq r(s,a) \leq 1$ becomes $0 \leq Q(s,a) - \gamma\,\expect_{s' \sim P (\cdot|s, a)} [\LSE(Q)(s')] \leq 1$. 

We first establish the equivalence between primal distribution matching and dual V-Function distribution matching.

\begin{prop}[Extension of \cref{prop:equivalence_ail_v_dm} to Infinite-horizon MDPs]
\label{prop:equivalence_ail_v_dm_infinite}
    Consider infinite-horizon stationary MDPs. Suppose that $(\widehat{r}, \widehat{V})$ is the optimal solution to Dual V-DM in \cref{eq:v-dm_infinite} and $\widehat{\pi}$ is the derived policy, i.e.,
    \begin{align*}
        \forall s \in \gS, a \in \gA, \quad \widehat{\pi}(a|s) \propto \exp\lp \widehat{r}(s,a) + \gamma\,\expect_{s' \sim P(\cdot|s,a)}\ls \widehat{V}(s') \rs \rp.
    \end{align*}
    Then $\widehat{\pi}$ is the optimal solution to AIL in \cref{eq:ail_minimax_infinite}.
\end{prop}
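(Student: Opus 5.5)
We will mirror the proof of \cref{prop:equivalence_ail_v_dm}, replacing finite-horizon sums by discounted occupancy measures. The first step is to identify the constraint $V(s) = \LSE(r+\gamma PV)(s)$ for all $s$ as the discounted soft Bellman optimality equation. Since $\gamma<1$, the soft Bellman operator $V \mapsto \LSE(r+\gamma PV)$ is a $\gamma$-contraction in sup-norm (log-sum-exp is $1$-Lipschitz in sup-norm). So for each $r\in\gR$ the constraint admits exactly one solution, $V = V^{\star,\soft,r}$. By the standard entropy-regularized RL identity,
\begin{align*}
(1-\gamma)\expect_{s\sim\rho}[V^{\star,\soft,r}(s)] = \max_{\pi\in\Pi} \expect_{(s,a)\sim d^\pi}\ls r(s,a) - \log\pi(a|s)\rs,
\end{align*}
and the maximizer is unique and equal to the softmax policy $\pi(a|s)\propto\exp(r(s,a)+\gamma\expect_{s'}[V(s')])$. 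Substituting this identity, Dual V-DM in \cref{eq:v-dm_infinite} becomes $\max_{r\in\gR}\min_{\pi\in\Pi}\phi(\pi,r)$, where $\phi(\pi,r)=\expect_{\gDE}[r]-\expect_{d^\pi}[r-\log\pi]$. This is exactly the max-min counterpart of \cref{eq:ail_minimax_infinite}. Moreover, $\widehat\pi$ is the unique minimizer of $\phi(\cdot,\widehat r)$.

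The second step is strong duality. We pass to occupancy measures using the bijection between policies and feasible $d$ satisfying the discounted flow constraint in \cref{eq:primal_dm_infinite}, with $\pi(a|s)=d(s,a)/\sum_{a'}d(s,a')$. This gives
\begin{align*}
J(d,r)=\sum_{s,a}\widehat{d^{\piE}}(s,a)r(s,a)-\sum_{s,a}d(s,a)r(s,a)-\widebar{\gH}(d).
\end{align*}
$J$ is linear in $r$, and it is convex in $d$ because the negative conditional entropy $\sum d\log(d/\sum_a d)$ is convex (it is a perspective of negative entropy, as in Lemma 3.1 of Ho \& Ermon). The feasible set $\mathrm{D}$ is a compact convex polytope for finite $\gS,\gA$, since $\sum_{s,a}d=1$, and $\gR=[0,1]^{\gS\times\gA}$ is compact and convex. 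Sion's minimax theorem therefore gives $\min_{d}\max_r J=\max_r\min_d J$, and hence the same equality for $\phi$ in policy space.

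The third step concludes. Let $\pi^{\ail}\in\argmin_\pi\max_r\phi(\pi,r)$, and let $\widehat r$ be the outer maximizer, which is optimal for Dual V-DM by the first step. Strong duality together with attainment (compactness and continuity of $J$) makes $(\pi^{\ail},\widehat r)$ a saddle point. Hence $\pi^{\ail}\in\argmin_\pi\phi(\pi,\widehat r)$, and by uniqueness of the soft-optimal policy, $\pi^{\ail}=\widehat\pi$.

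The main obstacle is the subtle point in the uniqueness and bijection steps for the infinite-horizon case. At states with $\sum_a d(s,a)=0$, the policy is not determined by $d$, and $\phi$ is insensitive to $\pi$ there. I would handle this by noting that the soft-optimal policy is defined everywhere via $\widehat V$, so it is a minimizer. Uniqueness then holds on the support of $d^{\widehat\pi}$, and the softmax policy has full support, so its reachable set is all states reachable from $\rho$. Off that set, any AIL-optimal policy can be taken to agree with $\widehat\pi$, which yields $\widehat\pi\in\argmin$. This is the form the conclusion needs. I would also check that the contraction argument justifies treating the equality constraint in \cref{eq:v-dm_infinite} as defining $V$ uniquely, replacing the backward-induction argument used in the layered finite-horizon case.
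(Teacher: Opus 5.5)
Your proposal is correct and follows the paper's proof essentially step for step. You identify $V$ as the soft-optimal value so that Dual V-DM becomes $\max_{r}\min_{\pi}\phi_\infty(\pi,r)$, apply Sion's theorem on occupancy measures, and conclude via the saddle point $(\pi^{\ail},\widehat r)$ and uniqueness of the soft-optimal policy. Your extra care tightens points the paper glosses over rather than changing the route: the contraction argument for uniqueness of $V$, using plain convexity for Sion, and restricting uniqueness to states reachable from $\rho$ so that the conclusion reads $\widehat\pi\in\argmin$.
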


\begin{proof}
    Let $(\widehat{r}, \widehat{V})$ be the optimal solution to Dual V-DM in \cref{eq:v-dm_infinite} and let $\gR = \{r: r(s,a) \in [0,1], \forall (s,a) \in \gS \times \gA\}$.

    The constraint $V(s) = \LSE(r + \gamma PV)(s)$ for all $s \in \gS$ in \cref{eq:v-dm_infinite} requires $V$ to be exactly the soft optimal value function with respect to the reward $r$ in the infinite-horizon discounted setting, i.e., $V = V^{\star, \soft, r}$. A standard result in discounted entropy-regularized RL gives:
    \begin{align}
        (1-\gamma)\,\expect_{s \sim \rho}\ls V(s) \rs
        = \max_{\pi \in \Pi}\; \expect_{(s,a) \sim d^\pi}\ls r(s,a) - \log \pi(a|s) \rs.
        \label{eq:rl_equivalent_infinite}
    \end{align}
    Substituting \cref{eq:rl_equivalent_infinite} into the objective of \cref{eq:v-dm_infinite}, the Dual V-DM problem becomes:
    \begin{align}
        \max_{r \in \gR}\; \min_{\pi \in \Pi}\; \phi_\infty(\pi, r),
        \quad \text{where} \quad
        \phi_\infty(\pi, r) := \expect_{(s,a) \sim \gDE}\ls r(s,a) \rs - \expect_{(s,a) \sim d^\pi}\ls r(s,a) - \log \pi(a|s) \rs.
        \label{eq:maxmin_infinite}
    \end{align}
    The optimal $\widehat{r}$ solves the outer maximization: $\widehat{r} = \argmax_{r \in \gR}\, \underline{\phi}_\infty(r)$, where $\underline{\phi}_\infty(r) = \min_{\pi \in \Pi} \phi_\infty(\pi, r)$. The derived policy $\widehat{\pi}$ is the soft optimal policy with respect to $\widehat{r}$, i.e., the unique solution to the inner minimization:
    \begin{align*}
        \widehat{\pi}(a|s) = \frac{\exp\lp \widehat{r}(s,a) + \gamma\,\expect_{s' \sim P(\cdot|s,a)}\ls \widehat{V}(s') \rs\rp}
        {\sum_{a' \in \gA} \exp\lp \widehat{r}(s,a') + \gamma\,\expect_{s' \sim P(\cdot|s,a')}\ls \widehat{V}(s') \rs\rp},
    \end{align*}
    which is the soft optimal policy $\pi^{\star, \soft, \widehat{r}}$ in the infinite-horizon setting.

    The AIL problem in \cref{eq:ail_minimax_infinite} is the min-max counterpart: $\min_{\pi \in \Pi} \max_{r \in \gR} \phi_\infty(\pi, r)$. We apply Sion's minimax theorem \citep{sion1958general}. Transitioning to the occupancy measure space, we define:
    \begin{align*}
        \mathrm{D}_\infty = \Big\{ d \geq 0 : \sum_{a \in \gA} d(s,a) = (1-\gamma)\rho(s) + \gamma \sum_{(s',a') \in \gS \times \gA} d(s',a') P(s|s',a'), \;\forall s \in \gS \Big\},
    \end{align*}
    and the objective in occupancy measure space:
    \begin{align*}
        J_\infty(d,r) = \expect_{(s,a) \sim \gDE}\ls r(s,a) \rs - \expect_{(s,a) \sim d}\ls r(s,a) - \log \pi_d(a|s) \rs,
    \end{align*}
    where $\pi_d(a|s) = d(s,a)/\sum_{a'} d(s,a')$. There is a bijection between $\mathrm{D}_\infty$ and $\Pi$ \citep{syed08lp}. The objective $J_\infty(d,r)$ is linear (hence concave) in $r$ and strictly convex in $d$ due to the entropy term. Since $\mathrm{D}_\infty$ and $\gR$ are both compact and convex, Sion's minimax theorem applies, yielding strong duality:
    \begin{align*}
        \min_{\pi \in \Pi} \max_{r \in \gR}\, \phi_\infty(\pi,r)
        = \min_{d \in \mathrm{D}_\infty} \max_{r \in \gR}\, J_\infty(d,r)
        = \max_{r \in \gR} \min_{d \in \mathrm{D}_\infty}\, J_\infty(d,r)
        = \max_{r \in \gR} \min_{\pi \in \Pi}\, \phi_\infty(\pi,r).
    \end{align*}
    Let $\pi^{\ail} \in \argmin_{\pi} \max_{r} \phi_\infty(\pi,r)$ be the AIL solution. By strong duality, $(\pi^{\ail}, \widehat{r})$ is a saddle point of $\phi_\infty$, so $\pi^{\ail} \in \argmin_{\pi} \phi_\infty(\pi, \widehat{r})$. Since the inner minimization given $\widehat{r}$ is a maximum-entropy RL problem with a unique solution $\widehat{\pi}$, we conclude $\pi^{\ail} = \widehat{\pi}$.
\end{proof}

\begin{prop}[Extension of \cref{prop:equivalence_q_dm_v_dm} to Infinite-horizon MDPs]
\label{prop:equivalence_q_dm_v_dm_infinite}
Consider infinite-horizon stationary MDPs. Suppose that $\widetilde{Q}$ is the optimal solution to Dual Q-DM in \cref{eq:q-dm_infinite}, and $\widetilde{r}$ and $\widetilde{V}$ are the derived reward and V-function, respectively, i.e.,
    \begin{align*}
        \forall (s,a) \in \gS \times \gA, \quad
        &\widetilde{r}(s,a) = \widetilde{Q}(s,a) - \gamma\,\expect_{s' \sim P(\cdot|s,a)}\ls \LSE(\widetilde{Q})(s') \rs, \\
        &\widetilde{V}(s) = \LSE(\widetilde{Q})(s) = \log\lp \sum_{a' \in \gA} \exp\lp \widetilde{Q}(s,a') \rp \rp.
    \end{align*}
    Then $(\widetilde{r}, \widetilde{V})$ is the optimal solution to Dual V-DM in \cref{eq:v-dm_infinite}.
\end{prop}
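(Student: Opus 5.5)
The plan is to mirror the proof of \cref{prop:equivalence_q_dm_v_dm} in the finite-horizon case, replacing the layered backward structure by the discounted soft Bellman fixed point. The argument has two parts: \emph{feasibility} of $(\widetilde{r},\widetilde{V})$ for \cref{eq:v-dm_infinite}, and \emph{optimality}, obtained by transporting any feasible dual pair $(r,V)$ back into a feasible Q-function for \cref{eq:q-dm_infinite}.

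\textbf{Feasibility.} The Bellman constraints in \cref{eq:q-dm_infinite} give directly that $0\le \widetilde{r}(s,a)\le 1$ for all $(s,a)$. By definition of $\widetilde{r}$ we have $\widetilde{Q}(s,a)=\widetilde{r}(s,a)+\gamma\,\expect_{s'\sim P(\cdot|s,a)}[\widetilde{V}(s')]$. Applying $\LSE$ over actions and using $\widetilde{V}=\LSE(\widetilde{Q})$ yields $\widetilde{V}(s)=\LSE(\widetilde{r}+\gamma P\widetilde{V})(s)$ for every $s$, which is the equality constraint of \cref{eq:v-dm_infinite}. In the finite-horizon case this identity is immediate; here it is still pointwise and algebraic, so no contraction argument is needed to check it. Optionally, I would remark that $\widetilde{V}$ is then the unique soft optimal value for $\widetilde{r}$. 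This follows because the soft Bellman operator is a $\gamma$-contraction in sup norm, since $\LSE$ is $1$-Lipschitz in sup norm, and it matches the interpretation used in \cref{prop:equivalence_ail_v_dm_infinite}.

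\textbf{Optimality.} Take any feasible $(r,V)$ for \cref{eq:v-dm_infinite} and define $Q(s,a):=r(s,a)+\gamma\,\expect_{s'}[V(s')]$. The equality constraint gives $V=\LSE(Q)$, so $Q(s,a)-\gamma\,\expect_{s'}[\LSE(Q)(s')]=r(s,a)\in[0,1]$ and $Q$ is feasible for \cref{eq:q-dm_infinite}. The Step 3 computation preceding the proposition then shows that the Dual V-DM objective at $(r,V)$ equals the Dual Q-DM objective at $Q$. The same computation shows that the Dual V-DM objective at $(\widetilde{r},\widetilde{V})$ equals the Dual Q-DM objective at $\widetilde{Q}$. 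Optimality of $\widetilde{Q}$ therefore gives that the objective at $(\widetilde{r},\widetilde{V})$ dominates the objective at any feasible $(r,V)$.

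\textbf{Main obstacle.} The only delicate point is boundedness and well-definedness in the infinite-horizon setting: the maps $Q\mapsto(r,V)$ and $(r,V)\mapsto Q$ must stay within bounded functions, so that the expectations are finite. For a feasible $(r,V)$, $V$ is the unique fixed point of a contraction with $r\in[0,1]$, hence bounded by $(1+\log|\gA|)/(1-\gamma)$, and so $Q$ is bounded. Conversely, I would argue that any feasible $Q$ with bounded $\LSE(Q)$ inherits such a bound from the constraint. With this in place, the two objective identities hold exactly, which completes the proof.
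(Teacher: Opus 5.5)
Your proposal is correct and follows the paper's proof step for step. Feasibility comes from the Bellman constraints together with the pointwise identity $\LSE(\widetilde{r}+\gamma P\widetilde{V})=\LSE(\widetilde{Q})=\widetilde{V}$. Optimality comes from sending any feasible $(r,V)$ to the feasible $Q=r+\gamma PV$, which has the same objective value, and then invoking the optimality of $\widetilde{Q}$.

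The boundedness discussion is not needed in the paper's finite (tabular) setting, where every function is bounded. In an infinite state space that part of your argument would be circular: the contraction gives uniqueness only among bounded functions, and unbounded solutions of the soft Bellman equation can exist.
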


\begin{proof}
We first show that $(\widetilde{r}, \widetilde{V})$ is a feasible solution to Dual V-DM in \cref{eq:v-dm_infinite}.

From the Bellman constraints of Dual Q-DM in \cref{eq:q-dm_infinite}:
\begin{align*}
    0 \leq \widetilde{Q}(s,a) - \gamma\,\expect_{s' \sim P(\cdot|s,a)}\ls \LSE(\widetilde{Q})(s') \rs \leq 1,
    \quad \forall (s,a) \in \gS \times \gA,
\end{align*}
which directly gives $0 \leq \widetilde{r}(s,a) \leq 1$ for all $(s,a)$.

Next, we verify that $\widetilde{V}$ satisfies the soft Bellman optimality equation in \cref{eq:v-dm_infinite}:
\begin{align*}
    \LSE(\widetilde{r} + \gamma P\widetilde{V})(s)
    &= \log \sum_{a \in \gA} \exp\lp \widetilde{r}(s,a) + \gamma\,\expect_{s' \sim P(\cdot|s,a)}\ls \widetilde{V}(s') \rs \rp \\
    &= \log \sum_{a \in \gA} \exp\lp \widetilde{Q}(s,a) - \gamma\,\expect_{s'}\ls \LSE(\widetilde{Q})(s') \rs + \gamma\,\expect_{s'}\ls \LSE(\widetilde{Q})(s') \rs \rp \\
    &= \log \sum_{a \in \gA} \exp\lp \widetilde{Q}(s,a) \rp
    = \LSE(\widetilde{Q})(s) = \widetilde{V}(s).
\end{align*}
Together, $(\widetilde{r}, \widetilde{V})$ is a feasible solution to Dual V-DM in \cref{eq:v-dm_infinite}.

We now prove that $(\widetilde{r}, \widetilde{V})$ achieves the optimal value of Dual V-DM in \cref{eq:v-dm_infinite}.

By the definition of $\widetilde{r}$ and $\widetilde{V}$:
\begin{align*}
    &\quad \expect_{(s,a) \sim \gDE}\ls \widetilde{r}(s,a) \rs - (1-\gamma)\,\expect_{s \sim \rho}\ls \widetilde{V}(s) \rs \\
    &= \expect_{(s,a) \sim \gDE}\ls \widetilde{Q}(s,a) - \gamma \expect_{s' \sim P (\cdot|s, a)} \ls \LSE(\widetilde{Q})(s') \rs \rs
    - (1-\gamma)\,\expect_{s \sim \rho}\ls \LSE(\widetilde{Q})(s) \rs.
\end{align*}
For any feasible solution $(r, V)$ to Dual V-DM in \cref{eq:v-dm_infinite}, define the corresponding Q-function:
\begin{align*}
    \forall (s,a) \in \gS \times \gA, \quad Q(s,a) = r(s,a) + \gamma\,\expect_{s' \sim P(\cdot|s,a)}\ls V(s') \rs.
\end{align*}
Since $V$ is feasible and satisfies $V(s) = \LSE(r + \gamma PV)(s)$:
\begin{align*}
    V(s) = \log\sum_{a \in \gA}\exp\lp r(s,a) + \gamma\,\expect_{s'}\ls V(s') \rs \rp = \log\sum_{a \in \gA}\exp(Q(s,a)) = \LSE(Q)(s).
\end{align*}
Therefore $r(s,a) = Q(s,a) - \gamma\,\expect_{s' \sim P (\cdot|s, a)} [\LSE(Q)(s')] \in [0,1]$, confirming that $Q$ is feasible for Dual Q-DM in \cref{eq:q-dm_infinite}. Moreover:
\begin{align*}
    &\quad \expect_{(s,a) \sim \gDE}\ls r(s,a) \rs - (1-\gamma)\,\expect_{s \sim \rho}\ls V(s) \rs
    \\
    &= \expect_{(s,a) \sim \gDE}\ls Q(s,a) - \gamma \expect_{s^\prime \sim P (\cdot|s, a)} \ls  \LSE(Q)(s') \rs \rs - (1-\gamma)\,\expect_{s \sim \rho}\ls \LSE(Q)(s) \rs.
\end{align*}
Since $\widetilde{Q}$ is the optimal solution to Dual Q-DM in \cref{eq:q-dm_infinite} and $Q$ is a feasible solution:
\begin{align*}
    &\quad \expect_{(s,a) \sim \gDE}\ls \widetilde{Q}(s,a) - \gamma \expect_{s' \sim P (\cdot|s, a)} \ls \LSE(\widetilde{Q})(s') \rs \rs - (1-\gamma)\,\expect_{s \sim \rho}\ls \LSE(\widetilde{Q})(s) \rs \\
    &\overset{\text{(a)}}{\geq}
    \expect_{(s,a) \sim \gDE}\ls Q(s,a) - \gamma \expect_{s' \sim P (\cdot|s, a)} \ls \LSE(Q)(s') \rs \rs - (1-\gamma)\,\expect_{s \sim \rho}\ls \LSE(Q)(s) \rs \\
    &= \expect_{(s,a) \sim \gDE}\ls r(s,a) \rs - (1-\gamma)\,\expect_{s \sim \rho}\ls V(s) \rs,
\end{align*}
where inequality (a) holds because $\widetilde{Q}$ is optimal and $Q$ is feasible for Dual Q-DM in \cref{eq:q-dm_infinite}. This shows $(\widetilde{r}, \widetilde{V})$ achieves a value no less than any feasible $(r,V)$, and is therefore an optimal solution to Dual V-DM in \cref{eq:v-dm_infinite}.
\end{proof}

\begin{proof}[Proof of \cref{thm:equivalence_ail_q_dm_infinite}]
    Suppose that $\widetilde{Q}$ is the optimal solution to Dual Q-DM in \cref{eq:q-dm_infinite}. Following \cref{prop:equivalence_q_dm_v_dm_infinite}, we define the derived reward and V-function:
    \begin{align*}
        \forall (s,a) \in \gS \times \gA, \quad
        &\widetilde{r}(s,a) = \widetilde{Q}(s,a) - \gamma\,\expect_{s' \sim P(\cdot|s,a)}\ls \LSE(\widetilde{Q})(s') \rs, \\
        &\widetilde{V}(s) = \LSE(\widetilde{Q})(s) = \log\lp \sum_{a' \in \gA} \exp\lp \widetilde{Q}(s,a') \rp \rp.
    \end{align*}
    By \cref{prop:equivalence_q_dm_v_dm_infinite}, $(\widetilde{r}, \widetilde{V})$ is the optimal solution to Dual V-DM in \cref{eq:v-dm_infinite}. \cref{prop:equivalence_ail_v_dm_infinite} further implies that
    \begin{align*}
        \forall (s,a) \in \gS \times \gA, \quad \widetilde{\pi}(a|s) \propto \exp\lp \widetilde{r}(s,a) + \gamma\,\expect_{s' \sim P(\cdot|s,a)}\ls \widetilde{V}(s') \rs \rp
    \end{align*}
    is the optimal solution to AIL in \cref{eq:ail_minimax_infinite}. Since $\widetilde{r}(s,a) = \widetilde{Q}(s,a) - \gamma\,\expect_{s'}[\LSE(\widetilde{Q})(s')] = \widetilde{Q}(s,a) - \gamma\,\expect_{s' \sim P (\cdot|s, a)}[\widetilde{V}(s')]$, the derived policy simplifies to:
    \begin{align*}
        \widetilde{\pi}(a|s) \propto \exp\lp \widetilde{Q}(s,a) \rp,
    \end{align*}
    which is exactly the softmax policy $\pi_{\widetilde{Q}} = \softmax(\widetilde{Q})$. Therefore, $\pi_{\widetilde{Q}}$ is the optimal solution to AIL in \cref{eq:ail_minimax_infinite}, which finishes the proof.
\end{proof}

\section{Technical Lemmas}

\subsection{Basic Technical Lemmas}

\begin{lem}
\label{lem:q_backward}
    Consider two Q-functions $Q$ and $Q^\prime$. For any timestep $h \in [H-1]$, in state-action pairs $(s_{h+1}, a_{h+1}) \in \gS^{\uparrow}_{h+1} \times \gA^{\uparrow}$, $Q^\prime (s_{h+1}, a_{h+1}) = Q (s_{h+1}, a_{h+1}) + \delta (s_{h+1}, a_{h+1})$ for certain $\delta (s_{h+1}, a_{h+1}) > 0$ and in the remaining state-action pairs $(s_{h+1}, a_{h+1}) \notin \gS^{\uparrow}_{h+1} \times \gA^{\uparrow}$, $Q^\prime (s_{h+1}, a_{h+1}) = Q (s_{h+1}, a_{h+1})$. Besides, we have that $\forall (s_h, a_h) \in \gS_h \times \gA,   r_{Q} (s_{h}, a_{h}) = r_{Q^\prime} (s_{h}, a_h)$. Then it holds that
    \begin{align*}
        0 \leq  \max_{ (s_h, a_h) \in \gS_h \times \gA_h} Q^\prime (s_{h}, a_h) - Q (s_h, a_h) \leq \max_{(s_{h+1}, a_{h+1}) \in \gS^{\uparrow}_{h+1} \times \gA^{\uparrow}} \delta (s_{h+1}, a_{h+1}).
    \end{align*}
    If $\gA^{\uparrow} \subset \gA$, we have that
    \begin{align*}
        0 \leq \max_{ (s_h, a_h) \in \gS_h \times \gA_h} Q^\prime (s_{h}, a_h) - Q (s_h, a_h) < \max_{(s_{h+1}, a_{h+1}) \in \gS^{\uparrow}_{h+1} \times \gA^{\uparrow}} \delta (s_{h+1}, a_{h+1}).
    \end{align*}
\end{lem}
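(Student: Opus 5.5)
The plan is to reduce everything to a pointwise comparison of the log-partition terms at step $h+1$, using the hypothesis that the induced rewards coincide at step $h$. By definition $r_Q(s_h,a_h) = Q(s_h,a_h) - \expect_{s'\sim P(\cdot|s_h,a_h)}[\LSE(Q)(s')]$, and similarly for $Q'$. The assumption $r_Q = r_{Q'}$ on $\gS_h\times\gA$ therefore gives, for every $(s_h,a_h)$,
\begin{align*}
Q'(s_h,a_h) - Q(s_h,a_h) = \expect_{s'\sim P(\cdot|s_h,a_h)}\ls \LSE(Q')(s') - \LSE(Q)(s') \rs .
\end{align*}
So it suffices to bound $\Delta(s') := \LSE(Q')(s') - \LSE(Q)(s')$ for every $s'\in\gS_{h+1}$. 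Write $\bar\delta := \max_{(s_{h+1},a_{h+1})\in \gS^{\uparrow}_{h+1}\times\gA^{\uparrow}} \delta(s_{h+1},a_{h+1}) > 0$.

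\textbf{Bounds on $\Delta(s')$.} If $s'\notin\gS^{\uparrow}_{h+1}$, then $Q'(s',\cdot)=Q(s',\cdot)$, so $\Delta(s')=0$. If $s'\in\gS^{\uparrow}_{h+1}$, then $Q(s',\cdot)\le Q'(s',\cdot)\le Q(s',\cdot)+\bar\delta$ coordinatewise. Since $\LSE$ is monotone in each coordinate and satisfies $\LSE(x+c\mathbf{1})=\LSE(x)+c$, we get $0\le \Delta(s')\le\bar\delta$. Taking expectations over $s'$ gives $0\le Q'(s_h,a_h)-Q(s_h,a_h)\le\bar\delta$ for every pair, and hence the same bounds for the maximum over the finite set $\gS_h\times\gA$.

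\textbf{Strict case.} Now suppose $\gA^{\uparrow}\subsetneq\gA$, and pick $a_0\notin\gA^{\uparrow}$, so that $Q'(s',a_0)=Q(s',a_0)$. For $s'\in\gS^{\uparrow}_{h+1}$, split the exponential sum into actions in $\gA^{\uparrow}$ and the rest:
\begin{align*}
\sum_{a} e^{Q'(s',a)} \le e^{\bar\delta}\sum_{a\in\gA^{\uparrow}} e^{Q(s',a)} + \sum_{a\notin\gA^{\uparrow}} e^{Q(s',a)} < e^{\bar\delta}\sum_{a} e^{Q(s',a)} .
\end{align*}
The last inequality is strict because the term $e^{Q(s',a_0)}>0$ appears without the factor $e^{\bar\delta}>1$. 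Hence $\Delta(s')<\bar\delta$ for these states. For $s'\notin\gS^{\uparrow}_{h+1}$ we have $\Delta(s')=0<\bar\delta$.

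Thus $\Delta(s')<\bar\delta$ for every $s'\in\gS_{h+1}$. Because the expectation is a convex combination of finitely many such values, $Q'(s_h,a_h)-Q(s_h,a_h)<\bar\delta$ for every pair. Taking the maximum over the finite set $\gS_h\times\gA$ preserves the strict inequality.

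\textbf{Main obstacle.} The only delicate point is this strict step. It needs strict monotonicity of $\LSE$ relative to uniform shifts: some action must remain unshifted, and that action must carry positive softmax weight. The latter holds automatically since every exponential is positive. Finiteness of $\gS_{h+1}$ is also used, so that the maximum of the differences is attained and remains strictly below $\bar\delta$.
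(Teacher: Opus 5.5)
Your proposal is correct and follows essentially the same route as the paper. Both arguments use $r_Q = r_{Q'}$ at step $h$ to write $Q'-Q$ as the expected difference of $\LSE$ values at step $h+1$, bound that difference by monotonicity (you invoke shift-invariance of $\LSE$ where the paper splits the exponential sum over $\gA^{\uparrow}$ and its complement), and get strictness from an unshifted action $a_0\notin\gA^{\uparrow}$ with positive weight. Your pointwise-then-average treatment of the strict case is a slightly cleaner version of the paper's bound through the maximum over $s_{h+1}$.
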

\begin{proof}
    We have that $Q^\prime$ is greater than $Q$ in certain state-action pairs in time step $h+1$. To maintain $\forall (s_h, a_h) \in \gS_h \times \gA,   r_{Q} (s_{h}, a_{h}) = r_{Q^\prime} (s_{h}, a_h)$, we obtain that
    \begin{align*}
        Q^\prime (s_{h}, a_h) - Q (s_h, a_h) &= \expect_{s^\prime \sim P (\cdot|s_h, a_h)} \ls \LSE (Q^\prime) (s^\prime) \rs - \expect_{s^\prime \sim P (\cdot|s_h, a_h)} \ls \LSE (Q) (s^\prime) \rs
        \\
        &= \sum_{s_{h+1} \in \gS^{\uparrow}_{h+1}} P (s_{h+1}|s_h, a_h) \lp \LSE (Q^\prime) (s_{h+1}) - \LSE (Q) (s_{h+1}) \rp.
    \end{align*}
    Since $\LSE (Q^\prime) (s_{h+1}) \geq \LSE (Q) (s_{h+1})$, we can get the lower bound.
    \begin{align*}
        Q^\prime (s_{h}, a_h) - Q (s_h, a_h) \geq \sum_{s_{h+1} \in \gS^{\uparrow}_{h+1}} P (s_{h+1}|s_h, a_h) \geq 0. 
    \end{align*}
    For the upper bound, we have that
    \begin{align*}
        Q^\prime (s_{h}, a_h) - Q (s_h, a_h)
        &= \sum_{s_{h+1} \in \gS^{\uparrow}_{h+1}} P (s_{h+1}|s_h, a_h) \lp \LSE (Q^\prime) (s_{h+1}) - \LSE (Q) (s_{h+1}) \rp
        \\
        &\leq \max_{s_{h+1} \in \gS^{\uparrow}_{h+1}} \lp \LSE (Q^\prime) (s_{h+1}) - \LSE (Q) (s_{h+1}) \rp
        \\
        &\leq \max_{s_{h+1} \in \gS^{\uparrow}_{h+1}} \log \lp \frac{\sum_{a \in \gA} \exp (Q^\prime (s_{h+1}, a)) }{\sum_{a \in \gA} \exp (Q (s_{h+1}, a))} \rp
        \\
        &= \max_{s_{h+1} \in \gS^{\uparrow}_{h+1}} \log \lp \frac{\sum_{a \in \gA^{\uparrow}} \exp (Q^\prime (s_{h+1}, a)) + \sum_{a \notin \gA^{\uparrow}} \exp (Q^\prime (s_{h+1}, a))   }{\sum_{a \in \gA^{\uparrow}} \exp (Q (s_{h+1}, a)) + \sum_{a \notin \gA^{\uparrow}} \exp (Q (s_{h+1}, a))} \rp 
    \end{align*}
    We define that $\delta_{h+1} := \max_{(s_{h+1}, a_{h+1}) \in \gS^{\uparrow}_{h+1} \times \gA^{\uparrow}} \delta (s_{h+1}, a_{h+1})$. Then it holds that
    \begin{align*}
        Q^\prime (s_{h}, a_h) - Q (s_h, a_h) &= \max_{s_{h+1} \in \gS^{\uparrow}_{h+1}} \log \lp \frac{\sum_{a \in \gA^{\uparrow}} \exp (Q^\prime (s_{h+1}, a)) + \sum_{a \notin \gA^{\uparrow}} \exp (Q^\prime (s_{h+1}, a))   }{\sum_{a \in \gA^{\uparrow}} \exp (Q (s_{h+1}, a)) + \sum_{a \notin \gA^{\uparrow}} \exp (Q (s_{h+1}, a))} \rp
        \\
        &\leq \max_{s_{h+1} \in \gS^{\uparrow}_{h+1}} \log \lp \frac{(\sum_{a \in \gA^{\uparrow}} \exp (Q (s_{h+1}, a))) \exp (d) + \sum_{a \notin \gA^{\uparrow}} \exp (Q (s_{h+1}, a))   }{\sum_{a \in \gA^{\uparrow}} \exp (Q (s_{h+1}, a)) + \sum_{a \notin \gA^{\uparrow}} \exp (Q (s_{h+1}, a))} \rp
        \\
        &\leq \delta.
    \end{align*}
    Furthermore, if $\gA^{\uparrow} \subset \gA$, we have the following strict inequality.
    \begin{align*}
        Q^\prime (s_{h}, a_h) - Q (s_h, a_h) < \delta.
    \end{align*}
    We complete the proof.
\end{proof}

\begin{lem}
    \label{lem:equivalent_qdm_rdm}
    Suppose that $\widetilde{Q}$ is the Q-function learned by Dual Q-DM via \cref{eq:q-dm} and $r_{\widetilde{Q}} (s_h, a) := \widetilde{Q} (s_h, a) - \expect_{s^\prime \sim P (\cdot|s_h, a)} \ls \LSE (\widetilde{Q}) (s^\prime) \rs$. Then $r_{\widetilde{Q}}$ is the optimal solution to \cref{eq:r-dm}.
    \begin{equation}
\label{eq:r-dm}
    \begin{split}
     &\max_{r} \expect_{\tau \sim \gDE} \ls \sum_{h=1}^H r (s_h, a_h) \rs - \expect_{s_1 \sim \rho} \ls V^{\star, \soft, r} (s_1) \rs
    \\
    & \quad \mathrm{s.t.} \; 0 \leq r (s_h, a) \leq 1, \forall (h, s_h, a) \in [H] \times \gS_h \times \gA.
    \end{split}
\end{equation}
\end{lem}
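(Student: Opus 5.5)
The plan is to establish a two-way correspondence between feasible points of Dual Q-DM in \cref{eq:q-dm} and feasible points of the reward-only problem \cref{eq:r-dm}, under which the two objectives coincide. Optimality of $\widetilde{Q}$ will then transfer directly to optimality of $r_{\widetilde{Q}}$. This mirrors the argument of \cref{prop:equivalence_q_dm_v_dm}, with $V$ eliminated, since the soft Bellman equation determines $V$ uniquely from $r$.

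\textbf{Step 1 (Q to r).} I would first check that $r_{\widetilde{Q}}$ is feasible for \cref{eq:r-dm}. The Bellman constraints \cref{eq:bellman_constraints} give $0\le r_{\widetilde{Q}}\le 1$ directly. Next, by backward induction over the layers $h=H,\dots,1$ (using the layered state space and the convention $\LSE(\widetilde{Q})(s_{H+1})=0$), I would show that $\widetilde{Q}=Q^{\star,\soft,r_{\widetilde{Q}}}$ and $\LSE(\widetilde{Q})=V^{\star,\soft,r_{\widetilde{Q}}}$. At step $H$ we have $\widetilde{Q}(s_H,\cdot)=r_{\widetilde{Q}}(s_H,\cdot)$. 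At earlier steps, $\widetilde{Q}(s_h,a)=r_{\widetilde{Q}}(s_h,a)+\expect_{s'}[\LSE(\widetilde{Q})(s')]$, which is exactly the soft Bellman recursion. Substituting into the objective of \cref{eq:r-dm} then reproduces the Dual Q-DM objective evaluated at $\widetilde{Q}$.

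\textbf{Step 2 (r to Q).} Conversely, for any $r$ with $0\le r\le 1$, I would set $Q:=Q^{\star,\soft,r}$. The soft Bellman equation gives $V^{\star,\soft,r}=\LSE(Q)$ and $r_Q=r\in[0,1]$, so $Q$ is feasible for \cref{eq:q-dm}. Its Dual Q-DM objective equals the \cref{eq:r-dm} objective at $r$, because the terms $Q(s_h,a_h)-\expect_{s'}[\LSE(Q)(s')]$ are exactly $r(s_h,a_h)$ and $\LSE(Q)(s_1)=V^{\star,\soft,r}(s_1)$.

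\textbf{Step 3 (optimality).} Combining the two steps, for every feasible $r$,
\[
\text{obj}_{r}(r_{\widetilde{Q}}) = \text{obj}_{Q}(\widetilde{Q}) \ge \text{obj}_{Q}(Q^{\star,\soft,r}) = \text{obj}_{r}(r),
\]
so $r_{\widetilde{Q}}$ is optimal for \cref{eq:r-dm}.

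The only delicate point is the identification in Step 1 of $\LSE(\widetilde{Q})$ with the soft-optimal value. It needs the finite-horizon layered structure, so that the recursion is well founded and has a unique solution, together with the boundary convention at step $H$. Once that is in place, the remaining steps are bookkeeping.
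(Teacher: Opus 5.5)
Your proposal is correct and follows the paper's proof essentially step for step. Both arguments get feasibility of $r_{\widetilde{Q}}$ from the Bellman constraints, identify $\widetilde{Q}=Q^{\star,\soft,r_{\widetilde{Q}}}$ so the two objectives coincide, and then compare against $Q^{\star,\soft,r}$ as a feasible Dual Q-DM point for every $r\in[0,1]$; your backward induction over layers with the convention $\LSE(\widetilde{Q})(s_{H+1})=0$ only spells out what the paper asserts in one line.
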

\begin{proof}
$\widetilde{Q}$ is the optimal solution to
    \begin{align*} 
        &\max_{Q} \expect_{\tau\sim\gDE}\ls \sum\limits_{h=1}^H Q(s_h,a_h) - \expect_{s'\sim P(\cdot|s_h,a_h)}[\LSE(Q)(s')] \rs -\expect_{s_1\sim\rho}[\LSE(Q)(s_1)]\\
    &\mathrm{s.t.}\    0 \leq Q(s_h,a) - \expect_{s' \sim P(\cdot\mid s_h,a)}[ \LSE(Q)(s') ] \leq 1, \forall (h,s_h,a)\in [H]\times \gS_h \times \gA.
    \end{align*}
    Bellman constraints ensure that $0 \leq r_{\widetilde{Q}} (s_h, a) \leq 1, \forall (h, s_h, a) \in [H] \times \gS_h \times \gA$, implying that $r_{\widetilde{Q}}$ is a feasible solution to \cref{eq:r-dm}. Notice that $\widetilde{Q}$ satisfies the soft Bellman optimality equation regarding $r_{\widetilde{Q}}$, implying that $\widetilde{Q}$ is the soft optimal Q-function regarding $r_{\widetilde{Q}}$, i.e., $\widetilde{Q} = Q^{\star, \soft, r_{\widetilde{Q}}}$. Then the objective function can be reformulated as
    \begin{align*}
        &\quad \expect_{\tau\sim\gDE}\ls \sum\limits_{h=1}^H \widetilde{Q} (s_h,a_h) - \expect_{s'\sim P(\cdot|s_h,a_h)}[\LSE(\widetilde{Q})(s')] \rs -\expect_{s_1\sim\rho}[\LSE(\widetilde{Q}) (s_1)]
        \\
        &= \expect_{\tau\sim\gDE}\ls \sum\limits_{h=1}^H r_{\widetilde{Q}} (s_h, a_h) \rs -\expect_{s_1\sim\rho}[V^{\star, \soft, r_{\widetilde{Q}} } (s_1) ].
    \end{align*}
For any reward $r$ satisfying that $0 \leq r (s_h, a) \leq 1, \forall (h, s_h, a) \in [H] \times \gS_h \times \gA$, let $Q^{\star, \soft, r}$ denote the corresponding soft optimal Q-function. It is direct to have that $Q^{\star, \soft, r}$ is a feasible solution to Dual Q-DM in \cref{eq:q-dm}. Since $\widetilde{Q}$ is an optimal solution to Dual Q-DM in \cref{eq:q-dm}, we have that
\begin{align*}
    &\quad \expect_{\tau\sim\gDE}\ls \sum\limits_{h=1}^H \widetilde{Q} (s_h,a_h) - \expect_{s'\sim P(\cdot|s_h,a_h)}[\LSE(\widetilde{Q})(s')] \rs -\expect_{s_1\sim\rho}[\LSE(\widetilde{Q}) (s_1)]
    \\
    &\geq \expect_{\tau\sim\gDE}\ls \sum\limits_{h=1}^H Q^{\star, \soft, r} (s_h,a_h) - \expect_{s'\sim P(\cdot|s_h,a_h)}[\LSE(Q^{\star, \soft, r})(s')] \rs -\expect_{s_1\sim\rho}[\LSE(Q^{\star, \soft, r}) (s_1)]. 
\end{align*}
This is equivalent to 
\begin{align*}
     \expect_{\tau\sim\gDE}\ls \sum\limits_{h=1}^H r_{\widetilde{Q}} (s_h, a_h) \rs -\expect_{s_1\sim\rho}[V^{\star, \soft, r_{\widetilde{Q}} } (s_1) ] \geq \expect_{\tau\sim\gDE}\ls \sum\limits_{h=1}^H r (s_h, a_h) \rs -\expect_{s_1\sim\rho}[V^{\star, \soft, r } (s_1) ]. 
\end{align*}
This implies that $r_{\widetilde{Q}}$ is the optimal solution to \cref{eq:r-dm}, which completes the proof.

\end{proof}

\begin{lem}
\label{lem:epsilon_range}
    In the Reset Cliff MDP with initial state distribution $\rho=\{\frac{1}{N+1},\cdots,\frac{1}{N+1},1-\frac{S-2}{N+1}, 0\}$, then $\sum\limits_{s_h\in S_h \setminus \{b_h\}}\rho(s_h)(1-\rho(s_h))^N\geq\frac{S-2}{e(N+1)}$.
\end{lem}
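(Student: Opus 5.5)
The plan is to bound the sum from below by discarding nonnegative terms and then evaluating the remaining terms exactly. The non-bad states of $\gS_h$ are $S-1$ states in total. Under $\rho$, $S-2$ of them carry mass $\frac{1}{N+1}$ and one carries mass $1-\frac{S-2}{N+1}$; the bad state $b_h$ carries mass $0$ and is excluded from the sum anyway. I will implicitly use that $\rho$ is a valid distribution, which requires $S-2\le N+1$, so that $1-\frac{S-2}{N+1}\in[0,1]$.

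\textbf{Step 1: drop the large-mass term.} Every summand $\rho(s_h)(1-\rho(s_h))^N$ is nonnegative, because $\rho(s_h)\in[0,1]$. Hence the term for the state with mass $1-\frac{S-2}{N+1}$ can be discarded, and it suffices to prove
\[
\sum_{s_h:\,\rho(s_h)=\frac{1}{N+1}} \rho(s_h)\bigl(1-\rho(s_h)\bigr)^N \;=\; \frac{S-2}{N+1}\Bigl(\frac{N}{N+1}\Bigr)^N \;\ge\; \frac{S-2}{e(N+1)} .
\]

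\textbf{Step 2: reduce to an elementary inequality.} After Step 1, it remains to show $\bigl(\frac{N}{N+1}\bigr)^N\ge e^{-1}$. This is equivalent to $\bigl(1+\frac1N\bigr)^N\le e$, which follows from $\log(1+x)\le x$ applied at $x=1/N$ and multiplied by $N$. For $N=0$ the factor equals $1$, so the inequality holds trivially there as well. Combining the two steps gives the lemma.

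I do not expect a genuine obstacle. The only point needing care is to make sure the discarded term is indeed nonnegative, which rests on the implicit requirement $S-2\le N+1$ noted above.
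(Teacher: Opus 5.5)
Your proof is correct and follows the paper's argument essentially line for line: the paper also discards the nonnegative term $(1-\frac{S-2}{N+1})(\frac{S-2}{N+1})^N$ from the large-mass state, and then bounds $\frac{S-2}{N+1}(\frac{N}{N+1})^N$ from below using $(1+\frac{1}{N})^N\le e$. Your remarks on the validity condition $S-2\le N+1$ and on the degenerate case $N=0$ are small points of care that the paper leaves implicit.
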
  

\begin{proof}
   $\frac{S-2}{N+1}(\frac{N}{N+1})^N+(1-\frac{S-2}{N+1})(\frac{S-2}{N+1})^N\geq \frac{S-2}{N+1}(\frac{N}{N+1})^N=\frac{S-2}{N+1} \frac{1}{(1+\frac{1}{N})^N}\geq \frac{S-2}{e(N+1)}$, where the last inequality follows that $(1+\frac{1}{N})^N \leq e$.
\end{proof}

\subsection{Proof of \cref{lem:tight_constraints}}
\label{subsec:proof_of_lem1_tight_constraints}
We prove the first claim by contradiction. We assume that the first claim does not hold and $\exists h \in [H],  \forall (s^{\expert}_h, a^{\expert}_h) \in \gDE$,
    \begin{align}
    \label{eq:loose}
        \widetilde{Q} (s^{\expert}_h, a^{\expert}_h) - \expect_{s^\prime \sim P_h (\cdot|s^{\expert}_h, a^{\expert}_h)} \ls \LSE (\widetilde{Q}) (s^\prime) \rs < 1.
    \end{align}
    For ease of presentation, we define the reward induced by Q-function as
    \begin{align*}
        r_{Q} (s_h, a) =  Q (s_h, a) - \expect_{s^\prime \sim P (\cdot|s_h, a)} \ls \LSE (Q) (s^\prime) \rs.
    \end{align*}
    We will prove that $\widetilde{Q}$ is not an optimal solution to Dual Q-DM in \cref{eq:q-dm}, thereby yielding a contradiction. We construct another Q-function $\widebar{Q}$. In timestep $h+1 \leq \ell \leq H$, $\widebar{Q}$ is identical to $\widetilde{Q}$, i.e., $\widebar{Q} (s_{\ell}, a) = \widetilde{Q} (s_\ell, a), \forall (s_\ell, a) \in \gS_{\ell} \times \gA$. This implies that $r_{\widebar{Q}} (s_{\ell}, a) = r_{\widetilde{Q}} (s_{\ell}, a) , \forall (s_{\ell}, a) \in \gS_{\ell} \times \gA $.

    Notably, $\widebar{Q}$ differs from $\widetilde{Q}$ in timestep $\ell \in [h]$. In step $h$, we define that 
    \begin{align*}
        \delta (s_h, a) := 1 - \lp \widetilde{Q} (s_h, a) - \expect_{s^\prime \sim P_h (\cdot|s_h, a)} \ls \LSE (\widetilde{Q}) (s^\prime) \rs \rp.
    \end{align*}
    According to \cref{eq:loose}, we have that $\forall (s^{\expert}_h, a^{\expert}_h) \in \gDE$, $\delta (s^{\expert}_h, a^{\expert}_h) > 0$ and thus $\delta := \min_{(s^{\expert}_h, a^{\expert}_h) \in \gDE} \delta (s^{\expert}_h, a^{\expert}_h) >0 $. Thus we construct $\widebar{Q}$ as
    \begin{align*}
        \forall (s^{\expert}_h, a^{\expert}_h) \in \gDE, \widebar{Q} (s^{\expert}_h, a^{\expert}_h) = \widetilde{Q} (s^{\expert}_h, a^{\expert}_h) + \delta.
    \end{align*}
    For other state-action pairs $(s_h, a_h) \notin \gDE$, 
    \begin{align*}
        \widebar{Q} (s_h, a_h) = \widetilde{Q} (s_h, a_h).
    \end{align*}
    We have finished the construction of $\widebar{Q}$ in time step $h$ and obtain that
    \begin{align*}
         \forall (s^{\expert}_h, a^{\expert}_h) \in \gDE, r_{\widebar{Q}} (s^{\expert}_h, a^{\expert}_h) = r_{\widetilde{Q}} (s^{\expert}_h, a^{\expert}_h) + \delta,  \forall (s_h, a_h) \notin \gDE, r_{\widebar{Q}} (s_h, a_h) = r_{\widetilde{Q}} (s_h, a_h),  
    \end{align*}
    where $\delta > 0$.
    
        Then we further construct $\widebar{Q}$ in preceding time steps $1 \leq \ell \leq h-1$ such that $r_{\widebar{Q}} (s_\ell, a_\ell) = r_{\widetilde{Q}} (s_\ell, a_\ell), \forall (s_\ell, a_\ell) \times \gS_\ell \times \gA$. We perform the construction via backward induction. In time step $h-1$, we apply the second claim in \cref{lem:q_backward} and obtain that
    \begin{align*}
        0 \leq \max_{ (s_{h-1}, a_{h-1}) \in \gS_{h-1} \times \gA_h} \widebar{Q} (s_{h-1}, a_{h-1}) - Q (s_{h-1}, a_{h-1}) < \max_{(s_{h}, a_{h}) \in \gS^{\uparrow}_{h} \times \gA^{\uparrow}} \delta (s_{h}, a_{h}) = \delta.
    \end{align*}
    This means that the growth magnitude in time step $h-1$ is strictly less than that in time step $h$. Then we consider the following two cases.
    \begin{itemize}
        \item \textbf{Case I: } $\max_{ (s_{h-1}, a_{h-1}) \in \gS_{h-1} \times \gA_h} \widebar{Q} (s_{h-1}, a_{h-1}) - Q (s_{h-1}, a_{h-1}) = 0$. As $\widebar{Q}$ is identical to $Q$ in time step $h-1$, we can simply keep that $Q (s_\ell, a_\ell) = \widebar{Q} (s_{\ell}, a_\ell), \forall (s_\ell, a_\ell) \in \gS_\ell \times \gA$ in preceding timesteps $1 \leq \ell \leq h-2$.
        \item \textbf{Case II: } $\max_{ (s_{h-1}, a_{h-1}) \in \gS_{h-1} \times \gA} \widebar{Q} (s_{h-1}, a_{h-1}) - Q (s_{h-1}, a_{h-1}) > 0$. We can apply the first claim in \cref{lem:q_backward} and obtain that
    \begin{align*}
        \max_{ (s_{h-2}, a_{h-2}) \in \gS_{h-2} \times \gA} \widebar{Q} (s_{h-2}, a_{h-2}) - Q (s_{h-2}, a_{h-2}) \leq \max_{(s_{h-1}, a_{h-1}) \in \gS^{\uparrow}_{h-1} \times \gA^{\uparrow}} \delta (s_{h-1}, a_{h-1}) < \delta.
        \end{align*}
    \end{itemize}
    In both cases, we obtain that 
    \begin{align*}
        \max_{ (s_{h-2}, a_{h-2}) \in \gS_{h-2} \times \gA} \widebar{Q} (s_{h-2}, a_{h-2}) - Q (s_{h-2}, a_{h-2}) < \delta.
    \end{align*}
    Then we can repeat the above analysis in the backward direction and obtain that
    \begin{align*}
        \forall 1 \leq \ell \leq h-2, \max_{ (s_{\ell}, a_{\ell}) \in \gS_{\ell} \times \gA} \widebar{Q} (s_{\ell}, a_{\ell}) - Q (s_{\ell}, a_{\ell}) < \delta. 
    \end{align*}
    We have finished the construction of $\widebar{Q}$, which satifies that
    \begin{align*}
        & \forall h+1 \leq \ell \leq H, r_{\widebar{Q}} (s_{\ell}, a_\ell) = r_{\widetilde{Q}} (s_{\ell}, a_\ell) , \forall (s_{\ell}, a_\ell) \in \gS_{\ell} \times \gA,
        \\
        & \forall (s^{\expert}_h, a^{\expert}_h) \in \gDE, r_{\widebar{Q}} (s^{\expert}_h, a^{\expert}_h) = r_{\widetilde{Q}} (s^{\expert}_h, a^{\expert}_h) + \delta,  \forall (s_h, a_h) \notin \gDE, r_{\widebar{Q}} (s_h, a_h) = r_{\widetilde{Q}} (s_h, a_h),
        \\
        & \forall 1 \leq \ell \leq h-1, r_{\widebar{Q}} (s_\ell, a_\ell) = r_{\widetilde{Q}} (s_\ell, a_\ell), \forall (s_\ell, a_\ell) \in \gS_\ell \times \gA,
        \\
        & \forall (s_{1}, a_{1}) \in \gS_{1} \times \gA, \widebar{Q} (s_{1}, a_{1}) - \widetilde{Q} (s_{1}, a_{1}) < \delta.
    \end{align*}
    Then we compare the objectives of $\widebar{Q}$ and $\widetilde{Q}$. We denote the objective as 
    \begin{align*}
        \gL (Q) = \expect_{\tau \sim \gDE} \ls  \sum_{h=1}^H Q (s_h, a_h) - \expect_{s^\prime \sim P (\cdot|s_h, a_h)} \ls \LSE (Q) (s^\prime)  \rs  \rs - \expect_{s_1 \sim \rho} \ls \LSE (Q) (s_1) \rs
    \end{align*}
    Then we can have that
    \begin{align*}
        &\quad \gL (\widebar{Q}) - \gL (\widetilde{Q}) 
        \\
        &= \expect_{\tau \sim \gDE} \ls \sum_{h=1}^H  r_{\widebar{Q}} (s_h, a_h)\rs - \expect_{\tau \sim \gDE} \ls \sum_{h=1}^H  r_{\widetilde{Q}} (s_h, a_h)\rs + \expect_{s_1 \sim \rho} \ls \LSE (\widetilde{Q}) (s_1) \rs   - \expect_{s_1 \sim \rho} \ls \LSE (\widebar{Q}) (s_1) \rs
        \\
        &= \sum_{h=1}^H \sum_{(s_h, a_h) \in \gS_h \times \gA} \widehat{d^{\piE}_h} (s_h, a_h)  \lp r_{\widebar{Q}} (s_h, a_h) - r_{\widetilde{Q}} (s_h, a_h) \rp + \expect_{s_1 \sim \rho} \ls \LSE (\widetilde{Q}) (s_1) \rs - \expect_{s_1 \sim \rho} \ls \LSE (\widebar{Q}) (s_1) \rs
        \\
        &= \delta \sum_{(s_h, a_h) \in \gS_h \times \gA} \widehat{d^{\piE}_h} (s_h, a_h) +  \sum_{s_1 \in \gS_1} \rho (s_1) \lp \LSE (\widetilde{Q}) (s_1) - \LSE (\widebar{Q}) (s_1) \rp
        \\
        &= \delta + \sum_{s_1 \in \gS_1} \rho (s_1) \lp \LSE (\widetilde{Q}) (s_1) - \LSE (\widebar{Q}) (s_1) \rp
        \\
        &> 0.
    \end{align*}
    This implies that $\widebar{Q}$ achieves a strictly larger objective value than $\widetilde{Q}$, which contradicts the fact that $\widetilde{Q}$ is the optimal solution. Therefore, the assumption does not hold and $\forall h \in [H]$, $\exists (s^{\expert}_h, a^{\expert}_h) \in \gDE$ such that $\widetilde{Q} (s^{\expert}_h, a^{\expert}_h) - \expect_{s^\prime \sim P (\cdot|s^{\expert}_h, a^{\expert}_h)} \ls \LSE (\widetilde{Q}) (s^\prime) \rs = 1$. We finish the proof of the first claim.

    Then we prove the second claim by leveraging the connection between \cref{eq:q-dm} and \cref{eq:r-dm}. According to \cref{lem:equivalent_qdm_rdm}, we have that $r_{\widetilde{Q}}$ is the optimal solution to \cref{eq:r-dm}. Then we prove the desired claim by characterizing the optimal solution to \cref{eq:r-dm}. In particular, we denote the objective function as
    \begin{align*}
        \gL (r) = \expect_{\tau \sim \gDE} \ls \sum_{h=1}^H r (s_h, a_h) \rs  - \expect_{s_1 \sim \rho} \ls V^{\star, \soft, r} (s_1) \rs. 
    \end{align*}
    Then we calculate its gradient
    \begin{align*}
        \frac{\partial \gL (r)}{\partial r (s_h, a_h)} = \widehat{d^{\piE}_h} (s_h, a_h) - d^{\pi^{\star, \soft,  r}}_h (s_h, a_h). 
    \end{align*}
    Here $\pi^{\star, \soft,  r}$ is the soft-optimal policy w.r.t $r$. For any $(s_h, a_h) \notin \gDE$, we have that $\frac{\partial \gL (r)}{\partial r (s_h, a_h)} = - d^{\pi^{\star, \soft,  r}}_h (s_h, a_h) < 0$. The strict inequality holds because the soft-optimal policy supports on the whole action space, i.e., $\forall a_h \in \gA, \pi^{\star, \soft,  r} (a_h|s_h) > 0$. Then we obtain that $\gL (r)$ is a monotonically decreasing function w.r.t $r (s_h, a_h)$ and thus achieves the optimal solution on the lower bound $r (s_h, a_h) = 0$. Then we have that $\forall (s_h, a_h) \notin \gDE, r_{\widetilde{Q}} (s_h, a_h) = 0$, which finishes the proof of the second claim.

%% file: appendix/experiment_detail.tex
\section{Experiment Details}
\label{sec:experiment_details}

\subsection{Implementation Details}

The experiments are conducted on a machine with 32 CPU cores and 1 RTX5090 GPU cores. Each experiment is replicated three times using different random seeds. For each task, we adopt SAC \citep{haarnoja2018sac} to train an agent with sufficient environment interactions and use the resultant policy as the expert policy. We use 1M environment interactions for each task and then 
we roll out this expert policy to collect expert demonstrations. We use small  amount of expert demonstrations to train each method. The number and subsample ratio of expert trajectories used for training in each task is provided in \ref{tab:trajs}. The architecture and training details of Dual Q-DM and all baselines are listed below.\\
\textbf{Dual Q-DM:} Our codebase of Dual Q-DM extends the open-sourced framework of \href{https://github.com/Div99/IQ-Learn}{IQ-Learn}. We retain the structure and parameter design of the critic from the original framework, and employ SAC with a fixed temperature coefficient for policy update. A comprehensive enumeration of the hyperparameters of Dual Q-DM is provided in Table \ref{tab:dual_q-dm_params}. \\
\textbf{BC:} We implement BC based on our codebase. The actor model is trained using Mean Squared Error (MSE) loss over 50k training steps. \\
\textbf{IQ-Learn and Reg IQ-Learn:} We use the author's codebase, which is avaiable at \href{https://github.com/Div99/IQ-Learn}{https://github.com/Div99/IQ-Learn}. A comprehensive
enumeration of the hyperparameters of IQ-Learn is provided in Table \ref{tab:iqlearn}.\\
\textbf{LS-IQ:} We use the author's codebase, which is avaiable at \href{https://github.com/robfiras/ls-iq}{https://github.com/robfiras/ls-iq}. A comprehensive
enumeration of the hyperparameters of LS-IQ is provided in Table \ref{tab:lsiq}.\\
\textbf{ReCOIL:} We use the author's codebase, which is avaiable at \href{https://github.com/hari-sikchi/DVL}{https://github.com/hari-sikchi/DVL}. A comprehensive
enumeration of the hyperparameters of ReCOIL is provided in \cref{tab:recoil params} .
\\
\textbf{HyPE:} We use the author's codebase, which is available at \href{https://github.com/gkswamy98/hyper}{https://github.com/gkswamy98/hyper}. A comprehensive enumeration of the hyperparameters of HyPE is provided in Table \ref{tab:hype params}. \\
\textbf{DAC:} We use the codebase which is available at \href{https://github.com/Kaixhin/imitation-learning}{https://github.com/Kaixhin/imitation-learning}. A comprehensive enumeration of the hyperparameters of DAC is provided in Table \ref{tab:dac params}.

\begin{table}[H]
	\renewcommand{\arraystretch}{1.1}
    \centering
    \caption{Number and subsample ratio of expert trajectories for each task.}
    \label{tab:trajs}
    \vspace{1mm}
    \begin{tabular}{l|c}
    \hline
    Task & Expert Trajectories\\
    \hline
    Ant-v2         & 1\\
    HalfCheetah-v2 & 1\\
    Hopper-v2      & 5\\
    Humanoid-v2    & 5\\
    Walker2d-v2    & 1\\
    \hline
\end{tabular}
\end{table}

\begin{table}[H]
	\renewcommand{\arraystretch}{1.1}
	\centering
	\caption{Dual Q-DM Hyper-parameters.}
	\label{tab:dual_q-dm_params}
	\vspace{1mm}
	\begin{tabular}{l l| l }
	\toprule
	\multicolumn{2}{l|}{Parameter} &  Value\\
	\midrule
	& discount factor &  0.99\\
	& replay buffer size & $1\cdot10^6$\\
	& batch size & 256\\
	& optimizer & Adam \\
    \multicolumn{2}{l|}{\it{Actor}}& \\
    & learning rate & $3 \cdot 10^{-5}$\\
    & number of hidden layers  & 2\\
    & number of hidden units per layer  & 256\\
    & activation & ReLU\\
    \multicolumn{2}{l|}{\it{Critic}}& \\
    & learning rate & $3 \cdot 10^{-4}$\\  
    & number of hidden layers  & 2\\
    & number of hidden units per layer  & 256\\
    & activation & ReLU\\
\bottomrule
\end{tabular}
\end{table}

\begin{table}[H]
	\renewcommand{\arraystretch}{1.1}
	\centering
	\caption{IQ-Learn Hyper-parameters.}
	\label{tab:iqlearn}
	\vspace{1mm}
	\begin{tabular}{l l| l }
	\toprule
	\multicolumn{2}{l|}{Parameter} &  Value\\
	\midrule
	& discount factor &  0.99\\
	& replay buffer size & $1\cdot10^6$\\
	& batch size & 256\\
	& optimizer & Adam \\
    \multicolumn{2}{l|}{\it{Actor}}& \\
    & learning rate & $3 \cdot 10^{-5}$\\
    & number of hidden layers  & 2\\
    & number of hidden units per layer  & 256\\
    & activation & ReLU\\
    \multicolumn{2}{l|}{\it{Critic}}& \\
    & learning rate & $3 \cdot 10^{-4}$\\  
    & number of hidden layers  & 2\\
    & number of hidden units per layer  & 256\\
    & activation & ReLU\\
\bottomrule
\end{tabular}
\end{table}

\begin{table}[H]
	\renewcommand{\arraystretch}{1.1}
	\centering
	\caption{LS-IQ Hyper-parameters.}
	\label{tab:lsiq}
	\vspace{1mm}
	\begin{tabular}{l l| l }
	\toprule
	\multicolumn{2}{l|}{Parameter} &  Value\\
	\midrule
	& discount factor &  0.99\\
	& replay buffer size & $1\cdot10^6$\\
	& batch size & 256\\
	& optimizer & Adam \\
    \multicolumn{2}{l|}{\it{Actor}}& \\
    & learning rate & $3 \cdot 10^{-5}$\\
    & number of hidden layers  & 2\\
    & number of hidden units per layer  & 256\\
    & activation & ReLU\\
    \multicolumn{2}{l|}{\it{Critic}}& \\
    & learning rate & $3 \cdot 10^{-4}$\\  
    & number of hidden layers  & 2\\
    & number of hidden units per layer  & 256\\
    & activation & ReLU\\
\bottomrule
\end{tabular}
\end{table}

\begin{table}[H]
	\renewcommand{\arraystretch}{1.1}
	\centering
	\caption{HyPE Hyper-parameters.}
	\label{tab:hype params}
	\vspace{1mm}
	\begin{tabular}{l l| l }
	\toprule
	\multicolumn{2}{l|}{Parameter} &  Value\\
	\midrule
	& discount factor &  0.98\\
    & gradient penalty coefficient & 10 \\
	& replay buffer size & $1\cdot10^6$\\
	& batch size & 256\\
	& optimizer & Adam \\
        \multicolumn{2}{l|}{\it{Reward}}& \\
        & learning rate & $8 \cdot 10^{-4}$\\  
        & batch size & 4096 \\
        & number of hidden layers  & 2\\
        & number of hidden units per layer  & 256\\
        \multicolumn{2}{l|}{\it{Actor}}& \\
        & learning rate & $7.3 \cdot 10^{-4}$\\
        & number of hidden layers  & 2\\
        & number of hidden units per layer  & 256\\
        & activation & ReLU\\
        \multicolumn{2}{l|}{\it{Critic}}& \\
        & learning rate & $7.3 \cdot 10^{-4}$\\  
        & number of hidden layers  & 2\\
        & number of hidden units per layer  & 256\\
        & activation & ReLU\\
\bottomrule
\end{tabular}
\end{table}

\begin{table}[H]
	\renewcommand{\arraystretch}{1.1}
	\centering
	\caption{DAC Hyper-parameters.}
	\label{tab:dac params}
	\vspace{1mm}
	\begin{tabular}{l l| l }
	\toprule
	\multicolumn{2}{l|}{Parameter} &  Value\\
	\midrule
	& discount factor &  0.97\\
    & gradient penalty coefficient & 1 \\
	& replay buffer size & $1\cdot10^6$\\
	& batch size & 256\\
	& optimizer & Adam \\
        \multicolumn{2}{l|}{\it{Reward}}& \\
        & learning rate & $3 \cdot 10^{-5}$\\  
        & batch size & 256 \\
        & number of hidden layers  & 1\\
        & number of hidden units per layer  & 64\\
        \multicolumn{2}{l|}{\it{Actor}}& \\
        & learning rate & $3 \cdot 10^{-4}$\\
        & number of hidden layers  & 2\\
        & number of hidden units per layer  & 256\\
        & activation & ReLU\\
        \multicolumn{2}{l|}{\it{Critic}}& \\
        & learning rate & $3 \cdot 10^{-4}$\\  
        & number of hidden layers  & 2\\
        & number of hidden units per layer  & 256\\
        & activation & ReLU\\
\bottomrule
\end{tabular}
\end{table}

\begin{table}[H]
	\renewcommand{\arraystretch}{1.1}
	\centering
	\caption{ReCOIL Hyper-parameters.}
	\label{tab:recoil params}
	\vspace{1mm}
	\begin{tabular}{l l| l }
	\toprule
	\multicolumn{2}{l|}{Parameter} &  Value\\
	\midrule
	& discount factor &  0.99\\
	& batch size & 256\\
	& optimizer & Adam \\
        \multicolumn{2}{l|}{\it{Actor}}& \\
        & learning rate & $1 \cdot 10^{-4}$\\  
        & number of hidden layers  & 2\\
        & number of hidden units per layer  & 256\\
        & temperature & 1 \\
        \multicolumn{2}{l|}{\it{Critic}}& \\
        & learning rate & $1 \cdot 10^{-4}$\\
        & number of hidden layers  & 2\\
        & number of hidden units per layer  & 256\\
        \multicolumn{2}{l|}{\it{Value}}& \\
        & learning rate & $1 \cdot 10^{-4}$\\  
        & number of hidden layers  & 2\\
        & number of hidden units per layer  & 256\\
        & $\tau$ & 1\\
\bottomrule
\end{tabular}
\end{table}

\subsection{Additional Experimental Results}

\textbf{Experiments in the Offline Setting.}
We evaluate BC, IQ-Learn, and Reg IQ-Learn in the offline setting, where no online environment interactions are available during training. In the offline setting, IQ-Learn utilizes the initial states in demonstrations to estimate the last term in its objective of \cref{eq:iq_learn}, while Reg IQ-Learn uses expert demonstrations to establish the regularization. \Cref{fig:mujoco_offline} reports the training curves across 5 MuJoCo tasks. All three methods exhibit similarly poor performance and fail to learn an effective policy. This result directly corroborates \cref{thm:iq_learn_equivalence} that IQ-Learn reduces to BC.

\begin{figure*}[htbp]
    \centering
    \includegraphics[width=\linewidth]{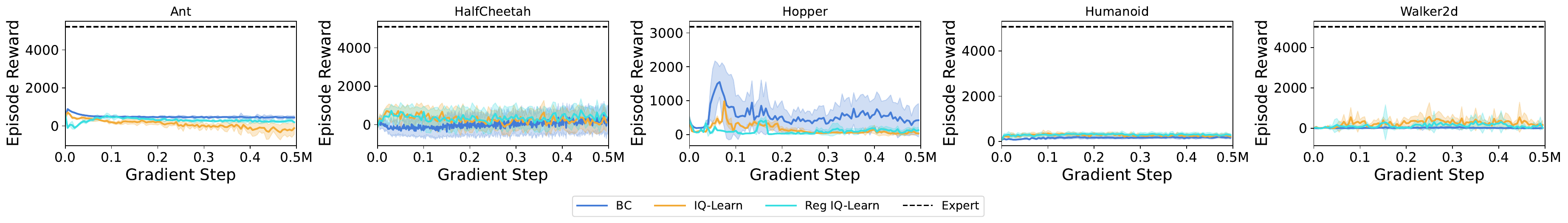}
    \caption{Training curves of BC, IQ-Learn, and Reg IQ-Learn across 5 MuJoCo tasks under the offline setting, where solid lines denote the mean and shaded regions denote the standard deviation over 5 seeds. The x-axis represents the number of gradient update steps and the y-axis represents return. All three methods exhibit similarly poor performance, failing to learn an effective policy, providing direct empirical support for Theorem 1.}
    \label{fig:mujoco_offline}
\end{figure*}

\textbf{Sensitivity Analysis.}
We analyze the sensitivity of Dual Q-DM to its two key hyperparameters: the Bellman constraint bound and the penalty coefficient $\beta$. Specifically, the bound controls the upper and lower limits of the Bellman constraints (i.e., $-\text{Bound} \leq Q(s,a) - \mathbb{E}_{s'}[\LSE(Q)(s')] \leq \text{Bound}$), while $\beta$ controls the strength of the Bellman constraint penalty. \Cref{fig:mujoco_sensitivity} reports training curves on HalfCheetah and Humanoid under a wide range of values for each hyperparameter. Dual Q-DM maintains strong performance across most configurations, with performance degradation observed only under extreme settings—specifically, very large bounds or very small $\beta$ in the Humanoid environment. These results confirm that Dual Q-DM is robust to hyperparameter choices and does not require careful tuning in practice.

\begin{figure*}[htbp]
    \centering

    \subfloat[Sensitivity analysis regarding Bound.]{
        \includegraphics[width=0.48\linewidth]{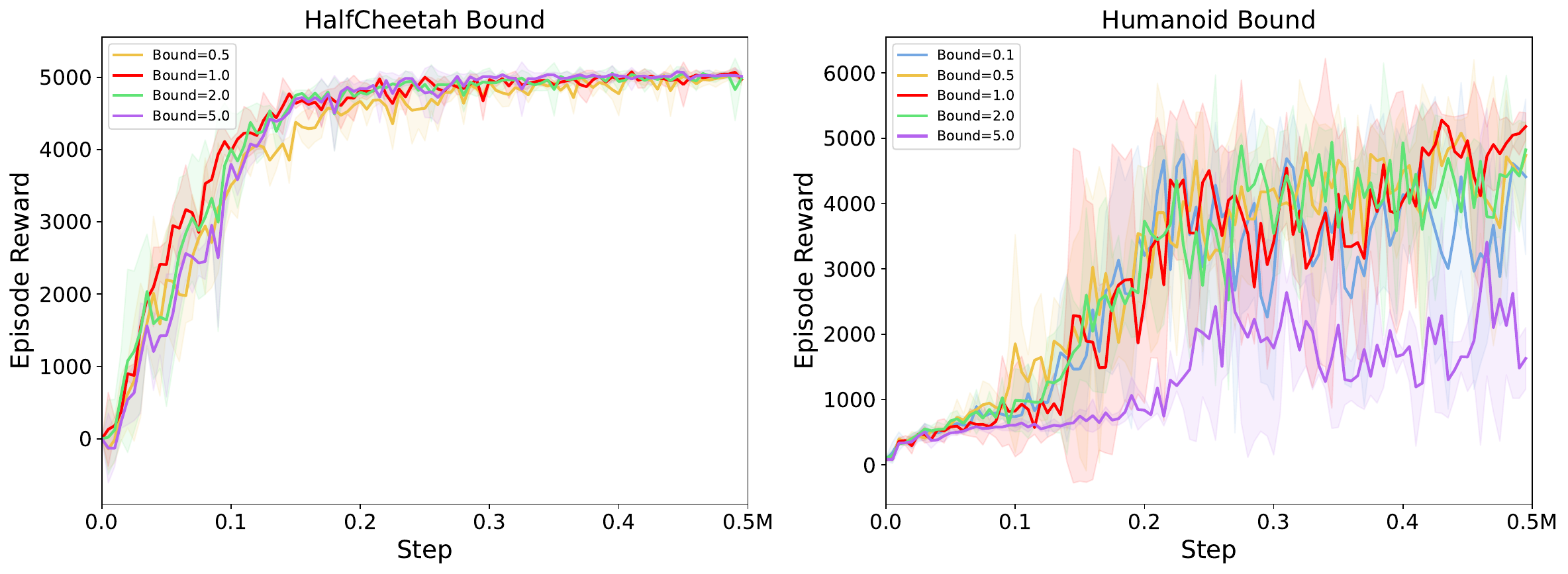}
    }\hfill
    \subfloat[Sensitivity analysis regarding the coefficient $\beta$]{
        \includegraphics[width=0.48\linewidth]{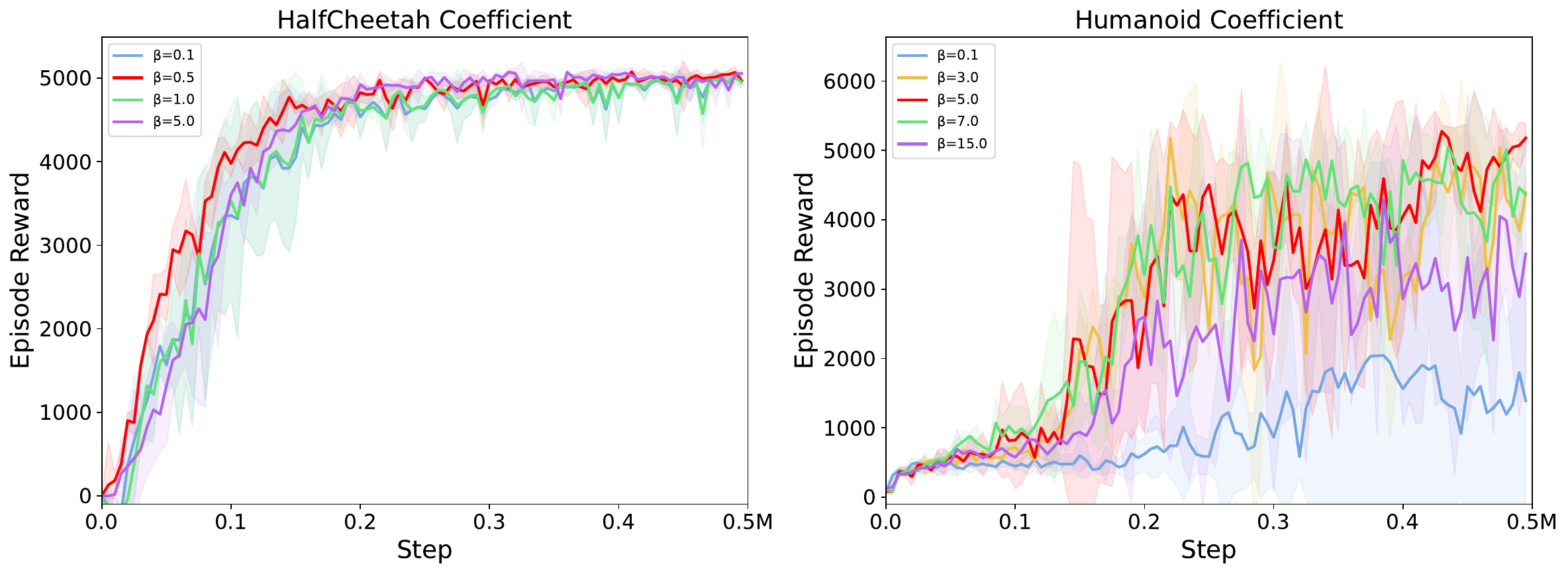}
    }
    \caption{Hyperparameter sensitivity analysis of Dual Q-DM on the HalfCheetah and Humanoid MuJoCo tasks. For HalfCheetah, we vary $\text{Bound} \in [0.5, 1.0, 2.0, 5.0]$ and $\beta \in [0.1, 0.5, 1.0, 5.0]$; for Humanoid, we vary $\text{Bound} \in [0.1, 0.5, 1.0, 2.0, 5.0]$ and $\beta \in [0.1, 3.0, 5.0, 7.0, 15.0]$. The left and right panels show sensitivity to the bound and coefficient, respectively, with the x-axis representing environment interactions and the y-axis representing return. Dual Q-DM maintains strong and robust performance across a wide range of hyperparameter values, with degradation observed only in the Humanoid environment under extremely large bounds or small coefficients. These results confirm that Dual Q-DM is robust to hyperparameter variations.}
    \label{fig:mujoco_sensitivity}
\end{figure*}